\newtheorem{claim}{Claim}
\newtheorem{lemma}{Lemma}
\newtheorem{definition}{Definition}
\newtheorem{corollary}{Corollary}
\newtheorem{assumption}{Assumption}
\newtheorem{theorem}{Theorem}
\icmltitlerunning{Error Bounds for Structured Kernel Interpolation}% via Multivariate Polynomial Interpolation Analysis}
\begin{document}

\twocolumn[
\icmltitle{Error Bounds for Structured Kernel Interpolation}% via Multivariate Polynomial Interpolation Analysis}

% It is OKAY to include author information, even for blind
% submissions: the style file will automatically remove it for you
% unless you've provided the [accepted] option to the icml2021
% package.

% List of affiliations: The first argument should be a (short)
% identifier you will use later to specify author affiliations
% Academic affiliations should list Department, University, City, Region, Country
% Industry affiliations should list Company, City, Region, Country

% You can specify symbols, otherwise they are numbered in order.
% Ideally, you should not use this facility. Affiliations will be numbered
% in order of appearance and this is the preferred way.
\icmlsetsymbol{equal}{*}

\begin{icmlauthorlist}
\icmlauthor{Aeiau Zzzz}{equal,to}
\icmlauthor{Bauiu C.~Yyyy}{equal,to,goo}
\icmlauthor{Cieua Vvvvv}{goo}
\icmlauthor{Iaesut Saoeu}{ed}
\icmlauthor{Fiuea Rrrr}{to}
\icmlauthor{Tateu H.~Yasehe}{ed,to,goo}
\icmlauthor{Aaoeu Iasoh}{goo}
\icmlauthor{Buiui Eueu}{ed}
\icmlauthor{Aeuia Zzzz}{ed}
\icmlauthor{Bieea C.~Yyyy}{to,goo}
\icmlauthor{Teoau Xxxx}{ed}
\icmlauthor{Eee Pppp}{ed}
\end{icmlauthorlist}

\icmlaffiliation{to}{Department of Computation, University of Torontoland, Torontoland, Canada}
\icmlaffiliation{goo}{Googol ShallowMind, New London, Michigan, USA}
\icmlaffiliation{ed}{School of Computation, University of Edenborrow, Edenborrow, United Kingdom}

\icmlcorrespondingauthor{Cieua Vvvvv}{c.vvvvv@googol.com}
\icmlcorrespondingauthor{Eee Pppp}{ep@eden.co.uk}

% You may provide any keywords that you
% find helpful for describing your paper; these are used to populate
% the "keywords" metadata in the PDF but will not be shown in the document
\icmlkeywords{Machine Learning, ICML}
\onecolumn

\vskip 0.3in
]

% this must go after the closing bracket ] following \twocolumn[ ...

% This command actually creates the footnote in the first column
% listing the affiliations and the copyright notice.
% The command takes one argument, which is text to display at the start of the footnote.
% The \icmlEqualContribution command is standard text for equal contribution.
% Remove it (just {}) if you do not need this facility.

%\printAffiliationsAndNotice{}  % leave blank if no need to mention equal contribution
\printAffiliationsAndNotice{\icmlEqualContribution} % otherwise use the standard text.

\begin{abstract}
Structured Kernel Interpolation (SKI) \cite{wilson2015kernel} helps scale Gaussian Processes (GPs) to large datasets by approximating the kernel matrix via interpolation at inducing points, achieving linear computational complexity.  However, SKI lacks rigorous theoretical error analysis. This paper addresses this gap: we establish error bounds for the SKI kernel and its Gram matrix, examine their effect on hyperparameter estimation, and evaluate the resulting error in posterior means and variances. Crucially, using convolutional cubic interpolation, we identify two dimensionality regimes governing the trade-off between SKI Gram matrix spectral norm error and computational complexity. For $d \leq 3$, linear time is achievable for \textit{any} error tolerance, provided the sample size is sufficiently large. For $d > 3$, the error must \textit{increase} with sample size to maintain linear time. Our analysis provides key insights into SKI's scalability-accuracy trade-offs, establishing precise conditions for achieving linear-time GP inference with controlled approximation error for the Gram matrix, optimization and posterior inference.

\end{abstract}

\section{Introduction}\label{sec:introduction}
Gaussian Processes (GPs) \citep{kolmogorov1940wienersche,rasmussen2006gaussian} are an important class of stochastic processes used in machine learning and statistics, with use cases including spatial data analysis \citep{liu2021missing}, time series forecasting \citep{girard2002gaussian}, bioinformatics \citep{luo2023diseasegps} and Bayesian optimization \citep{frazier2018tutorial}. GPs offer a non-parametric framework for modeling distributions over functions, enabling both flexibility and uncertainty quantification. These capabilities, combined with the ability to incorporate prior knowledge and specify relationships by choice of kernel function, make Gaussian Processes effective for both regression and classification.

However, GPs have substantial computational and memory bottlenecks. Both training and inference require computing the action of the inverse kernel Gram matrix, while training requires computing its log-determinant: both are $O(n^3)$ operations with sample size $n$. Further, storing the full Gram matrix requires $O(n^2)$ memory. These bottlenecks require scalable approximations for larger datasets.

Structured Kernel Interpolation (SKI) \cite{wilson2015kernel} helps scale Gaussian Processes (GPs) to large datasets by approximating the kernel matrix using interpolation on a set of inducing points. For stationary kernels, this requires $O(n+m \log m)$ computational complexity. The core idea is to express the original kernel as a combination of interpolation functions and a kernel matrix defined on a set of inducing points. However, despite its effectiveness, popularity (over $600$ citations, a large number for a GP paper) and high quality software availability (\cite{gardner2018gpytorch} has 3.5k stars on github), it currently lacks theoretical analysis. A key initial question is, given a fixed error bound for the SKI Gram matrix and use of cubic convolutional interpolation, how many inducing points are required to achieve that error bound? Given the required value of $m$ as a function of $n$, for what error tolerance is $O(n+m\log m)$ still linear? Following this, what do these errors imply for hyperparameter estimation and posterior inference?

\begin{table*}[h]
\centering
\begin{tabular}{|l|l|}
\hline
\textbf{Quantity} & \textbf{Bound} \\
\hline
SKI kernel error & $O(\frac{c^{2d}}{m^{3/d}})$ \\
\hline
SKI Gram matrix error & $O(\frac{nc^{2d}}{m^{3/d}})$ \\
\hline
SKI cross-kernel matrix error & $O(\frac{\max(n,T)c^{2d}}{m^{3/d}})$ \\
\hline
SKI score function error & $O(\frac{\sqrt{p}n^{2}c^{4d}}{m^{3/d}})$ \\
\hline
SKI posterior mean error & $O(c^{2d}\frac{\max(T,n)+\sqrt{Tn}n}{m^{3/d}})$ \\
\hline
SKI posterior covariance error & $O(\frac{Tn^{2}mc^{4d}+\sqrt{Tn}mc^{4d}\max(T,n)}{m^{3/d}})$ \\
\hline
\end{tabular}
\caption{Summary of Theoretical Results when using SKI with convolutional cubic interpolation. This shows the rate at which the error of using SKI (vs the exact kernel) grows as a function of important variables. Here $n$ and $T$ are the train/test sample sizes, $d$ is the dimensionality, $m$ the number of inducing points, $p$ is the number of hyperparameters and $c>0$ is a constant. Most importantly, the Gram matrix error grows linearly with the sample size, exponentially with the dimension while decaying at an $m^{3/d}$ rate in the inducing points.}
\label{table:theoretical_results}
\end{table*}

In this paper, we begin to bridge the gap between practice and a theoretical understanding of SKI. We have three primary contributions: 1) The first error analysis for the SKI kernel and relevant quantities, including the SKI gram matrix's spectral norm error. Based on this we provide \textit{a practical guide to select the number of inducing points}: they should grow as $n^{d/3}$ to control error. 2) SKI hyperparameter estimation analysis. 3) SKI inference analysis: the error of the GP posterior means and variances at test points. We find two interesting results: 1) we identify two dimensionality regimes relating SKI Gram matrix error to computational complexity. For $d\leq 3$, for \textit{any} fixed spectral norm error, we can achieve it in linear time using SKI with a sufficient sample size. For $d>3$, the error must \textit{increase} with the sample size to maintain our guarantee of linear time. 2) For a $\mu$-smooth log-likelihood, gradient ascent on the SKI log-likelihood will approach a neighborhood of a stationary point of the true log-likelihood at a $O\left(\frac{1}{K}\right)$ rate, with the neighborhood size determined by the SKI score function's error, which aside from the response variables grows \textit{linearly} with the sample size when increasing inducing points as we suggested. To obtain this, we leverage a recent result \cite{stonyakin2023stopping} from the inexact gradient descent \cite{daspremont2008smooth,devolder2014first} literature.

% By our downstream analysis, this implies that for small dimensionality and kernels exhibiting desirable regularity conditions, we can have arbitrarily small error in the SKI Gram matrix, in our estimated parameters and in posterior inference all in linear time as long as the sample size is sufficiently large.

In section \ref{sec:related} we describe related work. In section \ref{sec:ski-background} we give a brief background on SKI. In section \ref{sec:important-quantities} we bound the error of important quantities: specifically the SKI kernel, Gram matrix and cross-kernel matrix errors. In section \ref{sec:gp-applications} we use these to analyze the error of the SKI MLE and posteriors. We conclude in section \ref{sec:discussion} by summarizing our results and discussing limitations and future work.

\section{Related Work}\label{sec:related}
We can divide related works into three groups: those theoretically analyzing Gaussian process regression or kernel methods when using approximate kernels, SKI and its extensions, and papers developing techniques we use to obtain our guarantees. In the first group, the most relevant works are \cite{burt2019rates,burt2020convergence}, where they analyzed the sparse variational GP framework \citep{titsias2009variational,hensman2013gaussian} and derived bounds on the Kullback-Leibler divergence between the true posterior and the variational approximate posterior. \cite{moreno2023ski} gave bounds on the approximation error of the SKI Gram matrix. However, they only handled the case of univariate features and only bounded how much worse the SKI Gram matrix can be than the Nystr{\"o}m one. Further, they did not analyze the downstream effects on the approximate MLE or GP posterior. Also relevant are \cite{wynne2022variational,wild2021connections}, who gave a Banach space view of sparse variational GPs and connected them to the Nystr{\"o}m method, respectively. Finally, \cite{modell2024entrywise} provide entry-wise error bounds for low-rank approximations of kernel matrices: our approach also relies on entry-wise error bounds, but theirs are for the \textit{best} low-rank approximation to a given gram matrix, while ours are for the SKI gram matrix. Only one of these papers \cite{moreno2023ski} treated SKI specifically, and it only covered a very special case setting.

In the second group, the foundational work by \cite{wilson2015kernel} that we analyze introduced SKI as a scalable method for large-scale GP inference. \cite{kapoor2021skiing} extended SKI to high-dimensional settings using the permutohedral lattice. \cite{yadav2022kernel} developed a sparse grid approach to kernel interpolation that also helps address the curse of dimensionality. Most recently, \cite{ban2024malleable} proposed a flexible adaptation of SKI with a hyperparameter that adjusts the number of grid points based on kernel hyperparameters. We focus our analysis on the original technique of \cite{wilson2015kernel} in this paper, but future work could extend to the settings of the latter papers.

Also relevant are papers where we leverage or extend their results and proof techniques. We require a multivariate extension to the error analysis of \cite{keys1981cubic} for convolutional cubic interpolation, which we derive. We also use a recent result from the inexact gradient descent literature \cite{stonyakin2023stopping}, which allows us to analyze the effect of doing gradient ascent on the SKI log-likelihood instead of the true log-likelihood. Finally, we use a proof technique \cite{bach2013sharp,musco2017recursive} commonly used to bound the in-sample error of approximate kernel ridge regression to bound the test SKI mean function error.

% Further, when analyzing the hyperparameter MLE error of gradient ascent on the SKI log-likelihood, we use a strategy that is commonly used in the expectation maximization theory literature \cite{balakrishnan2017statistical,wu2016convergence} where they decompose the EM algorithm error into a population error, which decays geometrically, and a sample error, which involves a geometric sum. We instead analyze the error of gradient ascent on the SKI log-likelihood. We decompose into gradient ascent on the true log-likelihood, which under mild regularity conditions decays geometrically and the error due to using the SKI log-likelihood, which also involves a geometric sum.

\section{Gaussian Processes, Structured Kernel Interpolation and Convolutional Cubic Interpolation}\label{sec:ski-background}

This section provides background on Gaussian Processes (GPs) and two key techniques for enabling scalable inference: Structured Kernel Interpolation (SKI) and Convolutional Cubic Interpolation. SKI \cite{wilson2015kernel} addresses GPs scalability issue by approximating the kernel matrix through interpolation on a set of inducing points, leveraging the efficiency of convolutional kernels. In particular, cubic convolutional kernels, as detailed in \cite{keys1981cubic}, provide a smooth and accurate interpolation scheme that forms the foundation of the SKI framework. In this paper, we focus on this cubic case as it is used by SKI. Future work may extend this to study higher-order interpolation methods. Here, we formally define these concepts and lay the groundwork for the subsequent error analysis.

\subsection{Gaussian Processes}\label{sec:gp-background}
A Gaussian process $\xi\sim \textrm{GP}(\nu,k_\theta)$ is a stochastic process $\{\xi(\textbf{x})\}_{\textbf{x}\in \mathcal{X}}$ such that any finite subcollection $\{\xi(\textbf{x}_i)\}_{i=1}^n$ is multivariate Gaussian distributed. We assume that we have index locations $\textbf{x}_i\in \mathbb{R}^d$ and observations $y_i\in \mathbb{R}$ for a set of training points $i=1,\ldots,n$ such that
\begin{align*}
y_i&=\xi(\textbf{x}_i)+\epsilon_i,\epsilon_i\sim \mathcal{N}(0,\sigma^2).
\end{align*}
where $\nu:\mathcal{X}\rightarrow\mathbb{R}$, $k_\theta:\mathcal{X}\times \mathcal{X}\rightarrow\mathbb{R}$ are the prior mean and covariance functions, respectively, with $k$ an SPD kernel with hyperparameters $\theta$. Given $\{\textbf{x}_i,y_i\}_{i=1}^n$ we are primariliy interested in two tasks: 1) estimate hyperparameters $\boldsymbol{\theta}\in \Theta\subseteq \mathbb{R}^p$ of kernel $k_\theta$ (e.g. RBF kernel) 2) do Bayesian inference for the posterior mean $\boldsymbol{\mu}(\cdot)\in \mathbb{R}^{T}$ and covariance $\boldsymbol{\Sigma}(\cdot)\in \mathbb{R}^{T\times T}$ at a set of test points $\{\textbf{x}_t\}_{t=1}^T$. Assuming $\nu\equiv 0$ (a mean-zero GP prior), for 1), one maximizes the log-likelihood
\begin{align}
    \mathcal{L}(\boldsymbol{\theta};X) &= -\frac{1}{2}\textbf{y}^\top (\textbf{K}+\sigma^2 \textbf{I})^{-1}\textbf{y}\nonumber\\
    &\qquad-\frac{1}{2}\log \vert \textbf{K}+\sigma^2 \textbf{I}\vert -\frac{n}{2}\log (2\pi)\label{eqn:log-likelihood}
\end{align}
to find $\boldsymbol{\theta}\in \mathcal{D}\subseteq \Theta$ where $\textbf{K}\in \mathbb{R}^{n\times n}$ with entries $\textbf{K}_{ij}=k_\theta(\textbf{x}_i,\textbf{x}_j)$ is the Gram matrix for the training dataset. For 2), given the kernel function and known observation variance $\sigma^2$, the posterior mean and covariance are given by
\begin{align}
\boldsymbol{\mu}(\cdot) &= \mathbf{K}_{\cdot, \mathbf{X}}\left(\mathbf{K}+\sigma^{2} \mathbf{I}\right)^{-1} \mathbf{y}\label{eqn:posterior-mean}\\
\boldsymbol{\Sigma}(\cdot) &= \textbf{K}_{\cdot,\cdot}+\sigma^2I-\textbf{K}_{\cdot,\textbf{X}} (\textbf{K}+\sigma^2 I)^{-1}\textbf{K}_{\textbf{X},\cdot}\label{eqn:posterior-covariance}
\end{align}
where $\textbf{K}_{\cdot,\textbf{X}}\in \mathbb{R}^{T\times n}$ is the matrix of kernel evaluations between test and training points. Intuitively, the GP prior represents our belief about all possible functions before seeing any data. When we observe data points, the posterior represents our updated belief - it gives higher probability to functions that fit our observations while maintaining the smoothness properties encoded in the kernel. The posterior mean can be viewed as a weighted average of these functions, where the weights depend on how well each function fits the data and satisfies the prior assumptions. The posterior variance indicates our remaining uncertainty - it is smaller near observed points where we have more confidence, and larger in regions far from our data.

A challenge is that, between the log-likelihood and the posteriors, one first needs to compute the action of the inverse of the regularized Gram matrix, $(\textbf{K}+\sigma^2 \textbf{I})^{-1}\textbf{y}$. Second, one needs to compute the log-determinant $\log \vert \textbf{K}+\sigma^2 \textbf{I}\vert$. These are both $O(n^3)$ computationally and $O(n^2)$ memory.

\subsection{Structured Kernel Interpolation}\label{subsec:ski}
Structured kernel interpolation \citep{wilson2015kernel} or (SKI) addresses these computational and memory bottlenecks by approximating the original kernel function \(k_\theta:\mathcal{X}\times \mathcal{X}\rightarrow \mathbb{R},\mathcal{X}\subseteq\mathbb{R}^d\) by interpolating kernel values at a chosen set of inducing points \(\mathbf{U}=\left(\begin{array}{c}
     \textbf{u}_1^\top\\
     \vdots\\
     \textbf{u}_m^\top
\end{array}\right)\in \mathbb{R}^{m\times d}\). The approximate kernel function \(\tilde{k}:\mathcal{X}\times \mathcal{X}\rightarrow\mathbb{R}\) can be expressed as:
\[
\tilde{k}(\mathbf{x}, \mathbf{x}') = \mathbf{w}(\mathbf{x})^{\top} \mathbf{K}_{\mathbf{U}} \mathbf{w}(\mathbf{x}')
\]
where \(\mathbf{K}_{\mathbf{U}}\in \mathbb{R}^{m\times m}\) is the kernel matrix computed on the inducing points, and \(\mathbf{w}(\mathbf{x}),\mathbf{w}(\mathbf{x}')\in \mathbb{R}^m\) are vectors of interpolation weights using (usually cubic) convolutional kernel $u:\mathbb{R}\rightarrow \mathbb{R}$ for the points $\textbf{x}$ and $\textbf{x}'$, respectively. One can then form the SKI Gram matrix $\tilde{\textbf{K}}=\textbf{W}\textbf{K}_\textbf{U}\textbf{W}^\top$ with $\textbf{W}$ a \textit{sparse} matrix of $L$ interpolation weights per row for a polynomial of degree $L-1$. By exploiting the sparsity of each row, for stationary kernels this leads to a computational complexity of $O(nL+m\log m)$ and a memory complexity of $O(nL+m)$.

In order to learn kernel hyperparameters, one can maximize the SKI approximation to the log-likelihood (henceforth the SKI log-likelihood)
\begin{align*}
    \tilde{\mathcal{L}}(\boldsymbol{\theta};X)
&=-\frac{1}{2}\textbf{y}^\top (\tilde{\textbf{K}}+\sigma^2 \textbf{I})^{-1}\textbf{y}\\
&\qquad-\frac{1}{2}\log \vert \tilde{\textbf{K}}+\sigma^2 \textbf{I}\vert -\frac{n}{2}\log (2\pi)
\end{align*}
Given the SKI kernel $\tilde{k}:\mathcal{X}\times \mathcal{X}\rightarrow \mathbb{R}$ with learned hyperparameters, one can do posterior inference of the SKI approximations to the mean $\tilde{\boldsymbol{\mu}}(\cdot)$ and covariance $\tilde{\boldsymbol{\Sigma}}(\cdot)$ at a set of $T$ test points $\cdot$ as
\begin{align*}
    \tilde{\boldsymbol{\mu}}(\cdot)&=\tilde{\mathbf{K}}_{\cdot, \mathbf{X}}\left(\tilde{\mathbf{K}}+\sigma^{2} \mathbf{I}\right)^{-1} \mathbf{y}\\
    \tilde{\boldsymbol{\Sigma}}(\cdot)&=\tilde{\textbf{K}}_{\cdot,\cdot}+\sigma^2I-\tilde{\textbf{K}}_{\cdot,\textbf{X}} (\tilde{\textbf{K}}+\sigma^2 I)^{-1}\tilde{\textbf{K}}_{\textbf{X},\cdot}
\end{align*}
where $\tilde{\textbf{K}}_{\cdot,\textbf{X}}\in \mathbb{R}^{T\times n}$ is the matrix of SKI kernels between test points and training points and $\tilde{\textbf{K}}_{\cdot,\cdot}\in \mathbb{R}^{T\times T}$ is the SKI Gram matrix for the test points. Going forward, we may write $\mathcal{L}(\boldsymbol{\theta})$ and $\tilde{\mathcal{L}}(\boldsymbol{\theta})$, dropping the explicit dependence on the data but implying it.

\subsection{Convolutional Cubic Interpolation}\label{subsec:convolutional-cubic-interpolation}

Convolutional cubic interpolation \citep{keys1981cubic} gives a continuously differentiable interpolation of a function given its values on a regular grid, where its cubic convolutional kernel is a piecewise polynomial function designed to ensure continuous differentiability. We formalize this using the definitions of the cubic convolutional interpolation kernel and the tensor-product cubic convolutional function below. We also define an upper bound for the sum of weights for each dimension, which will be a useful constant going forward. Such a bound will exist for all continuous stationary kernels vanishing at infinity.

\begin{definition}\label{def:cubic-interpolation-kernel}

The cubic convolutional interpolation kernel $u:\mathbb{R}\rightarrow\mathbb{R}$ is given by
$$
u(s)\equiv\begin{cases}
1, & s=0\\
\frac{3}{2}\vert s\vert^3-\frac{5}{2}\vert s\vert^2+1, & 0<\vert s\vert <1\\
-\frac{1}{2}\vert s\vert^3+\frac{5}{2}\vert s\vert^2-4\vert s\vert+2, & 1<\vert s\vert<2\\
0, & \text{otherwise}
\end{cases}
$$
\end{definition}

\begin{definition}\label{def:tensor-product-cubic-interpolation-alt1}
Let $\mathbf{x} = (x_1, x_2, ..., x_d) \in \mathbb{R}^d$ be a d-dimensional point. Let $f:\mathbb{R}^d \rightarrow \mathbb{R}$ be a function defined on a regular grid with spacing $h$ in each dimension. Let $\mathbf{c_x}$ denote the grid point closest to $\mathbf{x}$.  The tensor-product cubic convolutional interpolation function $g:\mathbb{R}^d\rightarrow \mathbb{R}$ is defined as:
{\footnotesize
\begin{align*}
    g(\mathbf{x}) \equiv \sum_{\mathbf{k} \in \{-1, 0, 1, 2\}^d} f(\mathbf{c_x} + h\mathbf{k}) \prod_{j=1}^d u\left(\frac{x_j - (\mathbf{c_x})_j - h k_j}{h}\right)
\end{align*}
}
where $u$ is the cubic convolutional interpolation kernel and $\mathbf{k} = (k_1, \ldots, k_d)$ is a vector of integer indices.
\end{definition}

\begin{definition}\label{def:sum-weight-upper-bound}
    Given an interpolation kernel $u:\mathbb{R}\rightarrow\mathbb{R}$ and a fixed $n\in \mathbb{N}$, let $c>0$ be an upper bound such that, for any $x\in  \mathbb{R}$ and a set of data points $\{x_i\}_{i=1}^n \subset \mathbb{R}$,
$$
\sum_{i=1}^n \left\vert u\left(\frac{x - x_i}{h}\right) \right\vert\leq c,
$$
\end{definition}

Going forward, we always assume that we use convolutional cubic polynomial interpolation, so that $L=4$ as in \cite{wilson2015kernel}, but that we may vary the number of inducing points $m$. In particular, we will analyze how the number of inducing points affects error for different terms of interest, and how to choose the number of inducing points.

\section{Important Quantities}\label{sec:important-quantities}

This section derives bounds for key quantities in Structured Kernel Interpolation (SKI). Section \ref{subsubsec:elementwise-error} provides a bound on the elementwise error between the true kernel and its SKI approximation. In Section \ref{subsubsec:spectral-norm-error}, we extend this to the spectral norm error of the SKI approximation for the training Gram matrix and train-test kernel matrix. Finally, in section \ref{subsec:linear-time} we present conditions on the number of inducing points for achieving specific error tolerance $\epsilon>0$ and error needed to guarantee linear time complexity, noting linear time always holds for $d\leq 3$ with sufficiently large samples.

\subsection{Error Bounds for the Ski Kernel}\label{subsec:ski-kernel-error-bounds}
This subsection analyzes the error introduced by the SKI approximation of the kernel function. We start by extending the analysis of \cite{keys1981cubic} to the multivariate setting, deriving error bounds for multivariate cubic convolutional polynomial interpolation. We then use these to derive the elementwise error for the SKI approximation \(\tilde{k}(\textbf{x},\textbf{x}')\). We next apply these elementwise bounds to derive spectral norm error bounds for SKI kernel matrices, which will be crucial for understanding the downstream effects of the SKI approximation on Gaussian process hyperparameter estimation and posterior inference.

\subsubsection{Elementwise}\label{subsubsec:elementwise-error}

Our first lemma shows that multivariate tensor-product cubic convolutional interpolation retains error cubic in the grid spacing of \cite{keys1981cubic}, which is equivalent to $m^{-3/d}$ decay with the number of inducing points $m$, but exhibits exponential error growth with increasing dimensions. The proof uses induction on dimensions, starting with the 1D case from Keys.

\begin{restatable}{lemma}{tensorproductinterpolationerror}\label{lemma:tensor-product-interpolation-error}
The error of tensor-product cubic convolutional interpolation is $O(c^d h^3)$, or equivalently $O\left(\frac{c^d}{m^{3/d}}\right)$.
\end{restatable}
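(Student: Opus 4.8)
The plan is to prove the multivariate bound by induction on the dimension $d$, using the one-dimensional error bound of \cite{keys1981cubic} as the base case. For $d=1$, Keys establishes that convolutional cubic interpolation of a sufficiently smooth function $f$ incurs pointwise error $O(h^3)$, where $h$ is the grid spacing; this is the $c^1 h^3$ statement with the single-dimension weight-sum constant $c$ absorbed. The inductive hypothesis is that tensor-product interpolation in $d-1$ dimensions achieves error $O(c^{d-1} h^3)$.

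The key step is the inductive decomposition. I would write the $d$-dimensional tensor-product interpolant from Definition \ref{def:tensor-product-cubic-interpolation-alt1} as a composition: first interpolate along the $d$-th coordinate, treating the other $d-1$ coordinates as fixed parameters, then interpolate the resulting slices in the remaining $d-1$ dimensions. Concretely, define an intermediate function $g_1(\mathbf{x})$ that applies exact values along dimensions $1,\dots,d-1$ but the 1D cubic interpolant along dimension $d$, and split the total error $g(\mathbf{x}) - f(\mathbf{x})$ via the triangle inequality into $\lvert g(\mathbf{x}) - g_1(\mathbf{x})\rvert + \lvert g_1(\mathbf{x}) - f(\mathbf{x})\rvert$. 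The second term is controlled by the 1D Keys bound applied along the last coordinate, contributing $O(h^3)$. The first term is the error of applying the $(d-1)$-dimensional tensor-product interpolant to the function obtained after the final-coordinate interpolation; by the inductive hypothesis this is $O(c^{d-1} h^3)$, but each evaluation carries a weight factor along dimension $d$, and summing the absolute interpolation weights introduces one additional factor of $c$ via Definition \ref{def:sum-weight-upper-bound}. Combining gives $O(c^d h^3)$.

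I expect the main obstacle to be bookkeeping of the weight-sum constants and verifying that the smoothness assumptions propagate correctly through the induction. Specifically, the inductive step requires that the intermediate function being interpolated in the lower dimensions is itself smooth enough for Keys' bound to apply, and one must argue that the derivatives needed remain bounded after partial interpolation along the last axis. The factor of $c$ per dimension arises precisely because bounding the lower-dimensional error requires pulling the error estimate through a weighted sum $\sum_{k_d} u\!\left(\frac{x_d - (\mathbf{c_x})_d - h k_d}{h}\right)$, and Definition \ref{def:sum-weight-upper-bound} caps this sum by $c$; tracking that this multiplies rather than adds across the recursion is what produces the $c^d$ (hence exponential-in-$d$) dependence.

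Finally, I would convert the grid-spacing statement into the inducing-point statement. With $m$ inducing points arranged on a regular grid in $d$ dimensions, there are roughly $m^{1/d}$ points per axis, so the spacing scales as $h \sim m^{-1/d}$ and thus $h^3 \sim m^{-3/d}$. Substituting yields the equivalent form $O\!\left(\frac{c^d}{m^{3/d}}\right)$, completing the proof.
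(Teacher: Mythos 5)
Your proposal is correct and follows essentially the same route as the paper: induction on dimension anchored by Keys' one-dimensional $O(h^3)$ bound, with one factor of the weight-sum constant $c$ accrued per dimension and the substitution $h \sim m^{-1/d}$ at the end. The only cosmetic difference is that you peel off the last coordinate and invoke the $(d-1)$-dimensional inductive hypothesis on slices of $f$, whereas the paper unrolls the same recursion into a telescoping sum $\sum_{i=1}^{d}\lvert g_i - g_{i-1}\rvert$ of single-dimension interpolation steps, each bounded by $c^{i-1}Kh^3$.
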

\begin{proof}
    See Appendix \ref{sec:proof-tensor-product-interpolation}.
\end{proof}

The following Lemma allows us to bound the absolute difference between the true and SKI kernels \textit{uniformly} with the same big-$O$ error as for the underlying interpolation itself. The proof uses the the triangle inequality to decompose the error into two parts: the first is the error from a single interpolation, while the second is the error of the nested interpolations.

\begin{restatable}{lemma}{skikernelelementwiseerror}\label{lemma:ski-kernel-elementwise-error}
    Let $\delta_{m,L}$ be the interpolation error for $m$ inducing points and interpolation degree $L-1$. The SKI kernel $\tilde{k}:\mathcal{X}\times \mathcal{X}\rightarrow \mathbb{R}$ with grid spacing $h$ in each dimension has error
    \begin{align*}
        \vert k(\textbf{x},\textbf{x}')-\tilde{k}(\textbf{x},\textbf{x}')\vert&= \delta_{m,L}+\sqrt{L}c^d\delta_{m,L}\\
        &=O\left(\frac{c^{2d}}{m^{3/d}}\right).
    \end{align*}

\end{restatable}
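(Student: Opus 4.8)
The plan is to follow the hint and split the total error through the triangle inequality by inserting the ``half-interpolated'' kernel, i.e. the quantity obtained by interpolating $k$ only in its first argument. Writing $\mathbf{k}_{\mathbf{U}}(\mathbf{x}') \in \mathbb{R}^m$ for the vector with entries $k(\mathbf{u}_i,\mathbf{x}')$, I would note that $\mathbf{w}(\mathbf{x})^\top \mathbf{k}_{\mathbf{U}}(\mathbf{x}')$ is exactly the tensor-product cubic interpolant of the map $\mathbf{z}\mapsto k(\mathbf{z},\mathbf{x}')$ evaluated at $\mathbf{z}=\mathbf{x}$, while $\tilde{k}(\mathbf{x},\mathbf{x}') = \mathbf{w}(\mathbf{x})^\top \mathbf{K}_{\mathbf{U}}\mathbf{w}(\mathbf{x}')$ additionally interpolates each $k(\mathbf{u}_i,\cdot)$ in the second argument. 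Setting $\mathbf{e} := \mathbf{k}_{\mathbf{U}}(\mathbf{x}') - \mathbf{K}_{\mathbf{U}}\mathbf{w}(\mathbf{x}')$, the triangle inequality gives
\begin{align*}
|k(\mathbf{x},\mathbf{x}') - \tilde{k}(\mathbf{x},\mathbf{x}')|
&\le |k(\mathbf{x},\mathbf{x}') - \mathbf{w}(\mathbf{x})^\top \mathbf{k}_{\mathbf{U}}(\mathbf{x}')| \\
&\quad + |\mathbf{w}(\mathbf{x})^\top \mathbf{e}|,
\end{align*}
where the first term is the error of a single tensor-product interpolation and the second is the aggregated error of the nested interpolations.

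For the first term, I would apply Lemma \ref{lemma:tensor-product-interpolation-error} directly to $\mathbf{z}\mapsto k(\mathbf{z},\mathbf{x}')$, bounding it by $\delta_{m,L}$. The key observation for the second term is that the $i$-th entry of $\mathbf{e}$ equals $k(\mathbf{u}_i,\mathbf{x}') - \sum_j k(\mathbf{u}_i,\mathbf{u}_j)\,w_j(\mathbf{x}')$, which is precisely the tensor-product interpolation error for the grid function $\mathbf{z}\mapsto k(\mathbf{u}_i,\mathbf{z})$ evaluated at $\mathbf{x}'$; hence each entry is at most $\delta_{m,L}$ in magnitude by the same lemma, uniformly in $i$.

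It remains to aggregate these entrywise errors against the weight vector. I would use Cauchy--Schwarz, $|\mathbf{w}(\mathbf{x})^\top \mathbf{e}| \le \|\mathbf{w}(\mathbf{x})\|_2 \|\mathbf{e}\|_2$, controlling $\|\mathbf{w}(\mathbf{x})\|_2 \le \|\mathbf{w}(\mathbf{x})\|_1 \le c^d$ via Definition \ref{def:sum-weight-upper-bound} applied in each dimension, and using the $L$-sparsity of $\mathbf{w}(\mathbf{x})$ together with the uniform entrywise bound to get $\|\mathbf{e}\|_2 \le \sqrt{L}\,\delta_{m,L}$ on the active coordinates. This yields the second term $\le \sqrt{L}\,c^d \delta_{m,L}$ and hence the claimed identity $\delta_{m,L} + \sqrt{L}\,c^d \delta_{m,L}$. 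Substituting the rate $\delta_{m,L} = O(c^d h^3) = O(c^d/m^{3/d})$ from Lemma \ref{lemma:tensor-product-interpolation-error} then gives the $O(c^{2d}/m^{3/d})$ bound.

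I expect the main obstacle to be the bookkeeping that makes the residual entries genuinely interpretable as interpolation errors to which Lemma \ref{lemma:tensor-product-interpolation-error} applies: one must verify that the $w_j(\mathbf{x}')$ really are the tensor-product cubic weights for the second argument, evaluated consistently on the same inducing grid as in Definition \ref{def:tensor-product-cubic-interpolation-alt1}, so that the lemma applies verbatim to each $k(\mathbf{u}_i,\cdot)$, and that the sum-of-weights constant $c$ can be taken uniform in $\mathbf{x}'$ and in the base point $\mathbf{u}_i$. Once this is in place, the remaining steps are routine norm inequalities.
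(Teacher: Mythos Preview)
Your proposal is correct and follows essentially the same approach as the paper's proof: the same triangle-inequality split through the half-interpolated kernel $\mathbf{w}(\mathbf{x})^\top \mathbf{k}_{\mathbf{U}}(\mathbf{x}')$, the same use of Cauchy--Schwarz on the second term, the same $\|\mathbf{w}(\mathbf{x})\|_2 \le \|\mathbf{w}(\mathbf{x})\|_1 \le c^d$ bound, and the same exploitation of the $L$-sparsity of $\mathbf{w}(\mathbf{x})$ to obtain the $\sqrt{L}\,\delta_{m,L}$ factor on the restricted residual. The paper makes the sparsity step explicit by passing to the $L$-dimensional subvectors $\tilde{\mathbf{w}}(\mathbf{x})$ and $\tilde{\mathbf{K}}_{\mathbf{U},\mathbf{x}'}$ before applying $\|\cdot\|_2 \le \sqrt{L}\,\|\cdot\|_\infty$, which is exactly what your ``active coordinates'' phrasing describes.
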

\begin{proof}
    See Appendix \ref{sec:proof-ski-kernel-elementwise-error}
\end{proof}

\subsubsection{Spectral Norm Error}\label{subsubsec:spectral-norm-error}

We now transition from elementwise error bounds to spectral norm bounds for the SKI gram matrix's approximation error, finding that it grows linearly with the sample size and exponentially with the dimension and decays as $m^{-3/d}$ with the number of inducing points. This is both of independent interest but will also be important to nearly all downstream analysis for estimation and inference. We also provide a bound on the spectral norms of the SKI train/test kernel matrix's approximation error. This is useful when analyzing the GP posterior parameter error.% Finally, we bound the error of the SKI gram matrix's regularized inverse, which is useful for analyzing the SKI MLE's error and the GP posterior parameter error.

For this next lemma we will express it both in the general interpolation setting and again give the specific big-$O$ for convolutional cubic interpolation, but going forward we sometimes only show the latter setting in the main paper and derive the general settings in the proof. In particular, \emph{\textbf{whenever we use big $O$-notation}} we are assuming convolutional cubic interpolation.
\begin{restatable}{proposition}{spectralnorm}\label{prop:spectral-norm}
    For the SKI approximation $\tilde{\textbf{K}}$ of the true Gram matrix $\textbf{K}$, we have
    \begin{align*}
        \Vert \textbf{K}-\tilde{\textbf{K}}\Vert_2&=n \left(\delta_{m,L}+\sqrt{L} c^d\delta_{m,L}\right)\\
        &\equiv \gamma_{n,m,L}\\
        &= O\left(\frac{nc^{2d}}{m^{3/d}}\right)
    \end{align*}
\end{restatable}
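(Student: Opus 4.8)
The plan is to bound the spectral norm by the Frobenius norm, then exploit the uniform elementwise bound already established in Lemma~\ref{lemma:ski-kernel-elementwise-error}. Concretely, since $\Vert \textbf{K}-\tilde{\textbf{K}}\Vert_2 \leq \Vert \textbf{K}-\tilde{\textbf{K}}\Vert_F$, and the Frobenius norm is the square root of the sum of the squared entries, I would first note that each of the $n^2$ entries of $\textbf{K}-\tilde{\textbf{K}}$ is bounded in absolute value by $\delta_{m,L}+\sqrt{L}c^d\delta_{m,L}$. This gives $\Vert \textbf{K}-\tilde{\textbf{K}}\Vert_F \leq n(\delta_{m,L}+\sqrt{L}c^d\delta_{m,L})$, which matches the claimed bound exactly. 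This is the cleanest route and immediately produces the stated constant $\gamma_{n,m,L}$.

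However, the Frobenius bound only yields a factor of $n$ from $\sqrt{n^2}$, which is consistent with the claim, so I would double-check whether a tighter row-sum (operator norm via $\Vert\cdot\Vert_1$ or $\Vert\cdot\Vert_\infty$) argument is needed or whether the Frobenius route suffices. In fact, a sharper and perhaps more natural argument uses the fact that for a symmetric matrix, $\Vert A\Vert_2 \leq \max_i \sum_j |A_{ij}|$ (the maximum absolute row sum, which bounds the operator norm). Each row of $\textbf{K}-\tilde{\textbf{K}}$ has $n$ entries, each bounded by $\delta_{m,L}+\sqrt{L}c^d\delta_{m,L}$, so the row sum is at most $n(\delta_{m,L}+\sqrt{L}c^d\delta_{m,L})$, again giving the same bound. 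Either approach delivers the factor $n$ cleanly; I would present the elementwise-to-global step explicitly and cite Lemma~\ref{lemma:ski-kernel-elementwise-error} for the per-entry control.

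The final step is to substitute the big-$O$ form of the elementwise error. From Lemma~\ref{lemma:ski-kernel-elementwise-error} we have $\delta_{m,L}+\sqrt{L}c^d\delta_{m,L}=O\left(\frac{c^{2d}}{m^{3/d}}\right)$ (with $L=4$ fixed for cubic interpolation, so $\sqrt{L}$ is an absolute constant absorbed into the big-$O$). Multiplying by $n$ then yields $O\left(\frac{nc^{2d}}{m^{3/d}}\right)$, completing the proof.

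I do not expect any serious obstacle here: the work has been front-loaded into the elementwise Lemma~\ref{lemma:ski-kernel-elementwise-error}, and this proposition is essentially a matrix-norm bookkeeping step. The only subtlety worth flagging is the choice of which submultiplicative/entrywise inequality to invoke and ensuring the factor of $n$ (rather than $\sqrt{n}$ or $n^2$) comes out correctly; the maximum-row-sum bound for symmetric matrices makes this transparent and avoids any ambiguity about whether the Frobenius bound is tight enough.
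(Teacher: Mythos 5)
Your proof is correct and your preferred route (the maximum absolute row sum bound for a symmetric matrix) is essentially identical to the paper's argument, which uses $\Vert \mathbf{A}\Vert_2 \leq \sqrt{\Vert \mathbf{A}\Vert_1\Vert \mathbf{A}\Vert_\infty} = \Vert \mathbf{A}\Vert_\infty$ by symmetry and then applies the elementwise bound from Lemma~\ref{lemma:ski-kernel-elementwise-error} to each of the $n$ entries per row. The Frobenius alternative you mention also yields the same factor of $n$ and would be an acceptable, if slightly less standard, substitute.
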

\begin{proof}
    See Appendix \ref{sec:proof-spectral-norm}
\end{proof}

\begin{restatable}{lemma}{testtrainkernelmatrixerror}\label{lemma:test-train-kernel-matrix-error}
    Let \(\textbf{K}_{\cdot,\textbf{X}}\in \mathbb{R}^{T\times n}\) be the matrix of kernel evaluations between \(T\) test points and \(n\) training points, and let \(\tilde{\textbf{K}}_{\cdot,\textbf{X}}\in \mathbb{R}^{T\times n}\) be the corresponding SKI approximation. Then
    \begin{align*}
        \Vert \textbf{K}_{\cdot,\textbf{X}}-\tilde{\textbf{K}}_{\cdot,\textbf{X}}\Vert_2=O\left(\frac{\max(n,T)c^{2d}}{m^{3/d}}\right)
    \end{align*}

\end{restatable}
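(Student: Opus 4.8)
The plan is to reduce the spectral-norm bound to the uniform elementwise bound already established in Lemma~\ref{lemma:ski-kernel-elementwise-error}, following the same strategy as the proof of Proposition~\ref{prop:spectral-norm} but accounting for the fact that the error matrix is now rectangular rather than square. Write $\mathbf{E} = \mathbf{K}_{\cdot,\mathbf{X}} - \tilde{\mathbf{K}}_{\cdot,\mathbf{X}} \in \mathbb{R}^{T\times n}$. Each entry of $\mathbf{E}$ has the form $k(\mathbf{x}_t,\mathbf{x}_i) - \tilde{k}(\mathbf{x}_t,\mathbf{x}_i)$ for a test point $\mathbf{x}_t$ and a training point $\mathbf{x}_i$, so Lemma~\ref{lemma:ski-kernel-elementwise-error} applies entrywise and yields the uniform bound $\vert \mathbf{E}_{ti}\vert \leq \delta_{m,L} + \sqrt{L}\,c^d \delta_{m,L} =: \delta$, independent of $t$ and $i$, with $\delta = O\left(c^{2d}/m^{3/d}\right)$.

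Next I would control $\Vert \mathbf{E}\Vert_2$ using the standard operator-norm interpolation inequality $\Vert \mathbf{E}\Vert_2 \leq \sqrt{\Vert \mathbf{E}\Vert_1 \, \Vert \mathbf{E}\Vert_\infty}$, where $\Vert \mathbf{E}\Vert_1$ is the maximum absolute column sum and $\Vert \mathbf{E}\Vert_\infty$ is the maximum absolute row sum. Since $\mathbf{E}$ has $T$ rows and $n$ columns, each row contains $n$ entries and each column contains $T$ entries, so the uniform entrywise bound gives $\Vert \mathbf{E}\Vert_\infty \leq n\delta$ and $\Vert \mathbf{E}\Vert_1 \leq T\delta$. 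Combining the two yields $\Vert \mathbf{E}\Vert_2 \leq \sqrt{Tn}\,\delta$.

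Finally, since $\sqrt{Tn} \leq \max(T,n)$, this gives $\Vert \mathbf{E}\Vert_2 \leq \max(T,n)\,\delta = O\left(\max(n,T)\,c^{2d}/m^{3/d}\right)$, which is the claimed bound. (Equivalently one could bound $\Vert \mathbf{E}\Vert_2 \leq \Vert \mathbf{E}\Vert_F \leq \sqrt{Tn}\,\delta$ through the Frobenius norm, reaching the same conclusion.)

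The only real subtlety relative to Proposition~\ref{prop:spectral-norm}, where the square $n\times n$ error matrix produced the clean factor $n$, is the rectangular shape: I cannot simply multiply the entrywise bound by a single dimension, but must instead track the row count and column count separately, which is what produces the geometric-mean factor $\sqrt{Tn}$ and hence the $\max(n,T)$ appearing in the stated bound. Beyond this bookkeeping the argument is a direct transfer of the elementwise estimate, so I do not anticipate a genuine obstacle; the main care needed is in invoking the operator-norm interpolation inequality correctly for a non-square matrix.
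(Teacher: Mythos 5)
Your proposal is correct and follows essentially the same route as the paper's proof: both apply the elementwise bound from Lemma~\ref{lemma:ski-kernel-elementwise-error} and the inequality $\Vert \mathbf{E}\Vert_2 \leq \sqrt{\Vert \mathbf{E}\Vert_1\,\Vert \mathbf{E}\Vert_\infty}$ with $\Vert \mathbf{E}\Vert_1 \leq T\delta$ and $\Vert \mathbf{E}\Vert_\infty \leq n\delta$. The only cosmetic difference is that you pass through the (slightly tighter) intermediate bound $\sqrt{Tn}\,\delta$ before relaxing to $\max(T,n)\,\delta$, whereas the paper bounds the geometric mean of the two norms directly by their maximum.
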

\begin{proof}
    See Appendix \ref{sec:proof-test-train-kernel-matrix-error}.
\end{proof}

\subsection{Achieving Errors in Linear Time}\label{subsec:linear-time}

Here, we show how many inducing points $m$ are sufficient to achieve a desired error tolerance $\epsilon > 0$ for the SKI Gram matrix when using cubic convolutional interpolation. Based on the Theorem, we should grow the number of inducing points at an $n^{d/3}$ rate. We then show corollaries describing 1) how $\epsilon$ and $m$ must grow to maintain linear time 2) how the dimension affects whether the error must grow with the sample size to ensure linear time SKI.

The following theorem shows the number of inducing points that will guarantee a Gram matrix error tolerance. It says that the number of inducing points should grow as $n^{d/3}$ to achieve a fixed error. The proof starts by lower bounding the desired spectral norm error with the upper bound on the actual spectral norm error derived in Proposition \ref{prop:spectral-norm}: this is a sufficient condition for the desired spectral norm error to hold. It then relates the number of inducing points to the grid spacing in the SKI approximation, assuming a regular grid with equal spacing in each dimension. By substituting this relationship into the sufficient condition, the proof derives the sufficient number of inducing points to control error. 

\begin{restatable}{theorem}{inducingpointscountalt}\label{thm:inducing-points-count-alt}
    If the domain is $[-D, D]^d$, then to achieve a spectral norm error of $\Vert \textbf{K} - \tilde{\textbf{K}} \Vert_2 \leq \epsilon$, it is sufficient to choose the number of inducing points $m$ such that:
    $$
    m = \left( \frac{n}{\epsilon} (1 + 2c^d) K' (8 c^{2d} D^3) \right)^{d/3}
    $$
    for some constant $K'$ that depends only on the kernel function and the interpolation scheme.
\end{restatable}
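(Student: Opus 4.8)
The plan is to convert the spectral-norm upper bound of Proposition~\ref{prop:spectral-norm} into an inequality that is explicit in the grid spacing $h$, then eliminate $h$ in favor of $m$ using the geometry of a regular grid on $[-D,D]^d$, and finally solve the resulting inequality for $m$. Because we upper bound the true error throughout, any $m$ satisfying the final inequality is \emph{sufficient} to guarantee $\Vert \textbf{K}-\tilde{\textbf{K}}\Vert_2\le\epsilon$, which is exactly the form of the claim.

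First I would specialize Proposition~\ref{prop:spectral-norm} to convolutional cubic interpolation, where $L=4$ and hence $\sqrt{L}=2$, giving the explicit bound $\Vert \textbf{K}-\tilde{\textbf{K}}\Vert_2 \le n(1+2c^d)\,\delta_{m,L}$. Next I would invoke Lemma~\ref{lemma:tensor-product-interpolation-error}, which guarantees that the tensor-product interpolation error satisfies $\delta_{m,L}\le K' c^{d} h^{3}$ for a constant $K'$ depending only on the kernel and the interpolation scheme (the constant hidden in the $O(c^d h^3)$ rate). Substituting this into the spectral-norm bound yields $\Vert \textbf{K}-\tilde{\textbf{K}}\Vert_2 \le n(1+2c^d)K' c^{d} h^{3}$, where the two separate factors of $c^d$ combine to reproduce the $O(nc^{2d}/m^{3/d})$ rate already recorded in Proposition~\ref{prop:spectral-norm}.

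The key modeling step is relating the grid spacing $h$ to the number of inducing points $m$. Assuming a regular grid with equal spacing $h$ in each of the $d$ coordinates of $[-D,D]^d$, the number of grid points per dimension is $2D/h$ (up to the usual additive constant from the endpoints, which I would absorb into $K'$), so $m=(2D/h)^d$ and therefore $h = 2D\,m^{-1/d}$ and $h^{3} = 8D^{3}\,m^{-3/d}$. Substituting this relationship into the previous display gives a bound of the form
$$\Vert \textbf{K}-\tilde{\textbf{K}}\Vert_2 \;\le\; \frac{n(1+2c^d)\,K'\,(8 c^{2d} D^{3})}{m^{3/d}}.$$

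Finally I would require this explicit upper bound to be at most $\epsilon$; since it dominates the true error, $\frac{n(1+2c^d)K'(8c^{2d}D^3)}{m^{3/d}}\le\epsilon$ is sufficient for $\Vert \textbf{K}-\tilde{\textbf{K}}\Vert_2\le\epsilon$. Rearranging for $m$ gives $m^{3/d}\ge \frac{n}{\epsilon}(1+2c^d)K'(8c^{2d}D^3)$, and raising both sides to the power $d/3$ yields the stated expression for $m$. The argument is essentially algebraic once Proposition~\ref{prop:spectral-norm} and Lemma~\ref{lemma:tensor-product-interpolation-error} are available; I expect the only place requiring genuine care, and hence the main (though modest) obstacle, to be the grid-counting step $m=(2D/h)^d$ together with the careful bookkeeping of the powers of $c$ and the constant $K'$ needed to land the bound in exactly the claimed form.
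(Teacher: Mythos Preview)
Your proposal is correct and follows essentially the same route as the paper: invoke Proposition~\ref{prop:spectral-norm} with $L=4$, bound $\delta_{m,L}$ via Lemma~\ref{lemma:tensor-product-interpolation-error}, relate $h$ to $m$ through $m=(2D/h)^d$, and solve for $m$. The only wrinkle is the power of $c$ on $\delta_{m,L}$: the paper takes $\delta_{m,L}\le K'c^{2d}h^3$ directly (which yields the stated $8c^{2d}D^3$), whereas your $\delta_{m,L}\le K'c^d h^3$ would give $8c^{d}D^3$ before your final display silently upgrades it to $c^{2d}$---a bookkeeping slip you already anticipated, easily fixed by absorbing the extra $c^d$ into $K'$ or matching the paper's constant.
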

\begin{proof}
    See Appendix \ref{sec:proof-inducing-points-count-alt}.
\end{proof}
This result shows that the number of inducing points should grow
\begin{itemize}
\item Sub-linearly with the sample size and decrease in error for $d<3$, linearly for $d=2$ and super-linearly for $d>3$. Thus, as we want a tighter error tolerance or have more observations we need more inducing points, but at very different rates depending on the dimensionality.
\item Linearly with the volume of the domain $(2D)^d$. Thus, if our observations are concentrated in a small region and we select an appropriately sized domain to cover it we need fewer inducing points.
\item Exponentially with the dimension $d$, as we have a $c^{2d}$ term.
\end{itemize}

The next Corollary establishes a condition on the spectral norm error, $\epsilon$, that ensures linear-time $O(n)$ computational complexity for SKI. The core idea is that $\epsilon$ should be such that if we choose $m$ based on the previous Theorem, $m=O(n/\log n)$ and thus $m\log m=O(n)$. 
    
\begin{restatable}{corollary}{corlineartime}\label{cor:linear-time}
    If
\begin{equation} \label{eq:epsilon_condition}
\epsilon \geq \frac{(1 + 2c^d) K' 8 c^{2d} D^3}{C^{3/d}} \cdot \frac{n (\log n)^{3/d}}{n^{3/d}}
\end{equation}
for some constants $K,C>0$ that depend on the kernel function and the interpolation scheme and we choose $m>0$ based on the previous theorem, then we have both $\Vert \textbf{K}-\tilde{\textbf{K}}\Vert_2\leq \epsilon$ and
SKI computational complexity of $O(n)$.
\end{restatable}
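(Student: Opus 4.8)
The plan is to leverage Theorem \ref{thm:inducing-points-count-alt} directly for the error bound and then isolate the complexity requirement. Since we choose $m$ according to the formula in Theorem \ref{thm:inducing-points-count-alt} with the given $\epsilon$, the spectral norm bound $\Vert \textbf{K}-\tilde{\textbf{K}}\Vert_2\leq \epsilon$ holds automatically by that theorem. It therefore remains only to show that the stated lower bound on $\epsilon$ forces the SKI complexity to be $O(n)$. Recalling that SKI costs $O(nL+m\log m)$ with $L=4$ a fixed constant, this reduces to establishing $m\log m=O(n)$.

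First I would identify a clean sufficient condition for $m\log m=O(n)$, namely $m=O(n/\log n)$. Concretely, suppose $m\leq C\,n/\log n$ for some constant $C>0$. Then for $n$ large enough, $\log m\leq \log C+\log n-\log\log n\leq \log n+O(1)$, so that $m\log m\leq C\,n\,(\log n+O(1))/\log n=Cn+O(n/\log n)=O(n)$. Hence it suffices to choose $\epsilon$ so that the value of $m$ prescribed by Theorem \ref{thm:inducing-points-count-alt} satisfies $m\leq C\,n/\log n$.

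Next I would invert the relationship between $\epsilon$ and $m$. Writing $A\equiv (1+2c^d)K'(8c^{2d}D^3)$, Theorem \ref{thm:inducing-points-count-alt} gives $m=(An/\epsilon)^{d/3}$, which is strictly decreasing in $\epsilon$; thus a lower bound on $\epsilon$ is equivalent to the desired upper bound on $m$. Imposing $m\leq C\,n/\log n$, raising both sides to the power $3/d$, and solving for $\epsilon$ yields
\begin{align*}
\epsilon\geq \frac{A}{C^{3/d}}\cdot \frac{n(\log n)^{3/d}}{n^{3/d}},
\end{align*}
which, upon substituting the definition of $A$, is precisely Equation \eqref{eq:epsilon_condition}. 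Combining this with the first paragraph gives both conclusions.

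The main obstacle is the asymptotic handling of the $\log m$ factor: the step $\log(n/\log n)\leq \log n+O(1)$ (and the absorption of the constant $\log C$) only holds for sufficiently large $n$, so the complexity claim is inherently asymptotic rather than holding for every $n$. Everything else is a monotone algebraic inversion; the only care needed is to keep the inequality directions consistent, which is guaranteed because $m$ is monotonically decreasing in $\epsilon$.
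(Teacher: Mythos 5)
Your proposal is correct and follows essentially the same route as the paper: both arguments observe that the condition on $\epsilon$ is algebraically equivalent to the prescribed $m$ satisfying $m\leq C\,n/\log n$, invoke Theorem \ref{thm:inducing-points-count-alt} for the error guarantee, and then verify $m\log m=O(n)$ via $\log(n/\log n)=\log n-\log\log n$. The only cosmetic difference is the direction of the algebra (you invert $m\leq Cn/\log n$ to recover the $\epsilon$ condition, while the paper starts from the $\epsilon$ condition and rearranges), and your explicit remark that the complexity claim is asymptotic is a fair point the paper leaves implicit in its big-$O$ notation.
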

\begin{proof}
    See Appendix \ref{proof:cor-linear-time}.
\end{proof}

Interestingly, the previous Theorem and Corollary implies a fundamental difference between two dimensionality regimes. For $d \leq 3$, the choice of $m$ required for a fixed $\epsilon$ grows more slowly than $n/\log n$. This means that for any fixed $\epsilon > 0$, SKI with cubic interpolation is guaranteed to be a linear-time algorithm for sufficiently large $n$. In contrast, for $d > 3$, the choice of $m$ required for a fixed $\epsilon>0$ eventually grows faster than $n/\log n$. Thus, to maintain linear-time complexity for $d > 3$ and the guarantees from Theorem \ref{thm:inducing-points-count-alt}, we must allow the error $\epsilon$ to increase with $n$. This demonstrates that the curse of dimensionality impacts the scalability of SKI, making it challenging to achieve both high accuracy and linear-time complexity in higher dimensions. The next corollary formalizes this.
\begin{corollary}
    For $d\leq 3$, for any $\epsilon>0$, Corollary \ref{cor:linear-time} holds for any $n$ sufficiently large, so that choosing $m$ based on Theorem \ref{thm:inducing-points-count-alt} is sufficient to achieve linear complexity. For $d>3$, $\epsilon$ must grow with the sample size to maintain linear complexity.
\end{corollary}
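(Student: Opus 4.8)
The plan is to collapse the entire statement into the asymptotics of the single threshold function already isolated in Corollary~\ref{cor:linear-time}, and then read off the two regimes from the sign of one exponent. First I would recall the two ingredients I am allowed to assume: by Theorem~\ref{thm:inducing-points-count-alt}, taking $m=(c_0 n/\epsilon)^{d/3}$ with $c_0=(1+2c^d)K'(8c^{2d}D^3)$ is sufficient for $\Vert \textbf{K}-\tilde{\textbf{K}}\Vert_2\le\epsilon$; and by Corollary~\ref{cor:linear-time}, when $m$ is chosen this way, the single inequality
\begin{equation*}
\epsilon \;\ge\; \epsilon^*(n)\;:=\; A\,n^{1-3/d}(\log n)^{3/d}, \qquad A:=\frac{(1+2c^d)K'\,8c^{2d}D^3}{C^{3/d}},
\end{equation*}
is enough to guarantee \emph{both} the error bound and $O(n)$ complexity. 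Hence the corollary is purely a statement about when a given $\epsilon$ can dominate $\epsilon^*(n)$ for all large $n$, and the proof reduces to a limit computation on $\epsilon^*$.

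The decisive quantity is the exponent $1-3/d$, whose sign is controlled by whether $d<3$, $d=3$, or $d>3$. For $d<3$ I would write $\alpha:=3/d-1>0$, so that $\epsilon^*(n)=A(\log n)^{3/d}/n^{\alpha}\to 0$, using the standard fact that any fixed power of $\log n$ is dominated by any positive power of $n$. Therefore for every fixed $\epsilon>0$ there exists $N(\epsilon,d)$ with $\epsilon^*(n)\le\epsilon$ for all $n\ge N$; for such $n$ the hypothesis of Corollary~\ref{cor:linear-time} is met, yielding simultaneously $\Vert \textbf{K}-\tilde{\textbf{K}}\Vert_2\le\epsilon$ and $O(n)$ complexity, which is the first half of the claim. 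For $d>3$ I have $1-3/d>0$, so $\epsilon^*(n)\to\infty$; no constant $\epsilon$ can then satisfy $\epsilon\ge\epsilon^*(n)$ for large $n$, so to preserve the hypothesis of Corollary~\ref{cor:linear-time} (and hence the linear-time guarantee attached to the chosen $m$) one is forced to let $\epsilon=\epsilon(n)\ge\epsilon^*(n)\to\infty$, i.e. the tolerated error must grow with $n$. That is the second half.

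The step I expect to be the real obstacle is the boundary case $d=3$, and this is where I would be most careful. There $1-3/d=0$ and $3/d=1$, so $\epsilon^*(n)=A\log n$, which \emph{diverges}, though only logarithmically: choosing $m$ by Theorem~\ref{thm:inducing-points-count-alt} gives $m=\Theta(n)$ and hence $m\log m=\Theta(n\log n)$, an $O(n\log n)$ cost rather than a strict $O(n)$. To honestly include $d=3$ in the favorable regime I would either state the ``any fixed $\epsilon$'' half for $d<3$ and remark that $d=3$ is linear only up to this unavoidable $\log n$ factor, or explicitly absorb the logarithmic slack into the notion of ``linear time'' used throughout the paper. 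Apart from this delicate boundary the argument is routine, since the two limits above are elementary and no estimate beyond Proposition~\ref{prop:spectral-norm}, Theorem~\ref{thm:inducing-points-count-alt}, and Corollary~\ref{cor:linear-time} is required.
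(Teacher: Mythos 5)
Your proposal follows the same route as the paper's own proof: both reduce the corollary to the asymptotics of the right-hand side of Eqn.~\ref{eq:epsilon_condition}, namely $\epsilon^*(n)=A\,n^{1-3/d}(\log n)^{3/d}$, and read off the two regimes from the sign of the exponent $1-3/d$. For $d<3$ and for $d>3$ your limit computations coincide with the paper's and are correct.

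The one place you diverge is the boundary case $d=3$, and there you have caught a genuine error in the paper rather than introduced one. The paper's proof asserts that for $d\leq 3$ the right-hand side of Eqn.~\ref{eq:epsilon_condition} ``decreases with $n$ with limit $0$,'' but at $d=3$ the exponent $1-3/d$ vanishes and $\epsilon^*(n)=A\log n$ diverges; equivalently, Theorem~\ref{thm:inducing-points-count-alt} prescribes $m=\Theta(n)$ for fixed $\epsilon$, so $m\log m=\Theta(n\log n)$, which is not $O(n)$. Hence the ``any fixed $\epsilon$ in linear time'' half of the corollary is only true for $d\in\{1,2\}$; at $d=3$ one obtains linear time only up to a $\log n$ factor, or must let $\epsilon$ grow like $\log n$. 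Your proposed repairs --- restricting the favorable regime to $d<3$, or explicitly absorbing the logarithmic slack into the notion of linear time --- are exactly what is needed to make the statement sound. In short: same approach as the paper, but your more careful treatment of the boundary case is the correct one.
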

\begin{proof}
    For $d\leq3 $, the RHS of Eqn. \ref{eq:epsilon_condition} decreases with $n$ with limit $0$ and thus for sufficiently large sample size will be $\leq \epsilon$, satisfying the conditions to guarantee small error and linear time. For $d>3$, the RHS of Eqn. \ref{eq:epsilon_condition} grows with $n$, so that $\epsilon$ must grow to satisfy the conditions for the guarantee. 
\end{proof}

\section{Gaussian Processes Applications}\label{sec:gp-applications}
In this section, we address how SKI affects Gaussian Processes Applications. In Section \ref{sec:kernel-hyperparameter-estimation} we address how using the SKI kernel and log-likelihood affect hyperparameter estimation, showing that gradient ascent on the SKI log-likelihood approaches a ball around a stationary point of the true log-likelihood. In section \ref{sec:posterior-inference} we describe how using SKI affects the accuracy of posterior inference.

\subsection{Kernel Hyperparameter Estimation}\label{sec:kernel-hyperparameter-estimation}

Here we show that, for a $\mu$-smooth log-likelihood, an iterate of gradient ascent on the SKI log-likelihood approaches a neighborhood of a stationary point of the true log-likelihood at an $O\left(\frac{1}{K}\right)$ rate, with the neighborhood size determined by the SKI score function's error. To show this, we leverage a recent result for non-convex inexact gradient ascent \cite{stonyakin2023stopping}, which requires an upper bound on the SKI score function's error. This requires bounding the spectral norm error of the SKI Gram matrix's partial derivatives. In order to obtain this, we note that for many SPD kernels, under weak assumptions, the partial derivatives are \textit{also} SPD kernels, and thus we can reuse the previous results directly on the partial derivatives.

Note that \cite{stonyakin2023stopping} does not actually imply \textit{convergence} to a neighborhood of a critical point, only that at least one iterate will approach it. Given the challenges of non-concave optimization and the fact that we leverage a fairly recent result, we leave stronger results to future work.

%In order to obtain this, we need to bound the errors of the partial derivatives of the SKI gram matrix. To do so, we note that in many cases, the partial derivatives of an SPD kernel are themselves SPD kernels. This allows us to express the gram matrix's gradient error as a different gram matrix error and thus apply our previous results. 

Let $\mathcal{D}\subseteq \Theta$ be a \textit{compact} subset that we wish to optimize over. In the most precise setting we would analyze projected gradient ascent, but for simplicity we analyze gradient ascent. Let let $\tilde{k}_{\theta}: \mathcal{X} \times \mathcal{X} \rightarrow \mathbb{R}$ be the SKI approximation of $k_{\theta}: \mathcal{X} \times \mathcal{X} \rightarrow \mathbb{R}$ using $m$ inducing points and interpolation degree $L-1$. We are interested in the convergence properties of \textit{inexact gradient ascent} using the SKI log-likelihood, e.g.
\begin{align*}
    \boldsymbol{\theta}_{k+1}&=\boldsymbol{\theta}_k+\eta \nabla \tilde{\mathcal{L}}(\boldsymbol{\theta}_k),
\end{align*}
where $\eta\in \mathbb{R}$ is the learning rate and $\nabla \tilde{\mathcal{L}}(\boldsymbol{\theta}_k)$ is the SKI score function (gradient of its log-likelihood). We assume: 1) a $\mu$-smooth log-likelihood. If we optimize on a bounded domain, then for infinitely differentiable kernels (e.g. RBF) this will immediately hold. 3) That the kernel's partial derivatives are themselves SPD kernels (this can be easily shown for the RBF kernel's lengthscale by noting that the product of SPD kernels are themselves SPD kernels). 
%We then mention an equivalence between strong concavity and the Hessian having an upper bound on its smallest eigenvalue.
% 2) That the kernel is continuously differentiable. 3) That the response vectors squared $l^2$ norm grows linearly with the sample size.
\begin{assumption}[$\mu$-smooth-log-likelihood]\label{assumption:mu-smoothness}
The true log-likelihood is $\mu$-smooth over $\mathcal{D}$. That is, for all $\boldsymbol{\theta},\boldsymbol{\theta}'\in \mathcal{D}$,
\begin{align*}
    \Vert \nabla \mathcal{L}(\boldsymbol{\theta})-\nabla \mathcal{L}(\boldsymbol{\theta}')\Vert &\leq \mu \Vert \boldsymbol{\theta}-\boldsymbol{\theta}'\Vert
\end{align*}
\end{assumption}

\begin{assumption}
    (Kernel Smoothness) $k_\theta(x,x')$ is $C^1$ in $\boldsymbol{\theta}$ over $\mathcal{D}$. That is, for each $l \in \{1, ..., p\}$, $k'_{\theta_l}(x, x') = \frac{\partial k_{\theta}(x, x')}{\partial \theta_l}$ exists and is continuous for $\boldsymbol{\theta}\in \mathcal{D}$.
\end{assumption}
\begin{assumption}
    (SPD Kernel Partials) For each $l \in \{1, ..., p\}$, the partial derivative of $k_{\theta}$ with respect to a hyperparameter $\theta_l\in \mathbb{R}$, denoted as $k'_{\theta_l}(x, x') = \frac{\partial k_{\theta}(x, x')}{\partial \theta_l}$, is also a valid SPD kernel.
\end{assumption}
% \begin{assumption}
%     (Linear Squared $l^2$ Response) $\Vert \textbf{y}\Vert_2^2=O(n)$.
% \end{assumption}

We next state several results leading up to our bound on the SKI score function's error. Here we argue that we can apply the same elementwise error we derived previously to the SKI partial derivatives.

\begin{restatable}{lemma}{skikernelderivativeerrorkernel}[Bound on Derivative of SKI Kernel Error using Kernel Property of Derivative]
\label{lemma:ski_kernel_derivative_error_kernel}
 Let $\tilde{k}'_{\theta_l}(x,x')$ be the SKI approximation of $k'_{\theta_l}(x,x')$, using the same inducing points and interpolation scheme as $\tilde{k}_{\theta}$. Then, for all $x, x' \in \mathcal{X}$ and all $\boldsymbol{\theta} \in \Theta$, the following inequality holds:

\begin{align*}
\left\vert \frac{\partial k_{\theta}(x,x')}{\partial \theta_l}-\frac{\partial \tilde{k}_{\theta}(x,x')}{\partial \theta_l}\right\vert &= \left\vert k'_{\theta_l}(x, x') - \tilde{k}'_{\theta}(x, x') \right\vert \\
&\leq\delta_{m,L}'+\sqrt{L}c^d\delta_{m,L}'\\
&=O\left(\frac{c^{2d}}{m^{3/d}}\right)
\end{align*}

where $\delta_{m,L}'$ is an upper bound on the error of the SKI approximation of the kernel $k'_{\theta_l}(x,x')$ with $m$ inducing points and interpolation degree $L-1$, as defined in Lemma \ref{lemma:ski-kernel-elementwise-error}.
\end{restatable}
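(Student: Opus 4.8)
The plan is to reduce the claim directly to the elementwise bound of Lemma~\ref{lemma:ski-kernel-elementwise-error} by exploiting the fact that the SKI interpolation weights carry no dependence on the hyperparameters. First I would observe that in the SKI factorization $\tilde{k}_\theta(\mathbf{x},\mathbf{x}') = \mathbf{w}(\mathbf{x})^\top \mathbf{K}_{\mathbf{U}}(\theta)\,\mathbf{w}(\mathbf{x}')$, the weight vectors $\mathbf{w}(\mathbf{x})$ and $\mathbf{w}(\mathbf{x}')$ are determined solely by the spatial locations and the grid, so they are independent of $\theta$; only the inducing-point Gram matrix $\mathbf{K}_{\mathbf{U}}(\theta)$, with entries $k_\theta(\mathbf{u}_i,\mathbf{u}_j)$, depends on $\theta$. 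The Kernel Smoothness assumption guarantees $k_\theta$ is $C^1$ in $\theta$, so $\mathbf{K}_{\mathbf{U}}(\theta)$ is entrywise differentiable and we may differentiate the finite bilinear form term by term.

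Carrying out the differentiation gives
\[
\frac{\partial \tilde{k}_\theta(\mathbf{x},\mathbf{x}')}{\partial \theta_l} = \mathbf{w}(\mathbf{x})^\top \frac{\partial \mathbf{K}_{\mathbf{U}}(\theta)}{\partial \theta_l}\,\mathbf{w}(\mathbf{x}').
\]
Since the entry $(i,j)$ of $\partial \mathbf{K}_{\mathbf{U}}(\theta)/\partial \theta_l$ is exactly $k'_{\theta_l}(\mathbf{u}_i,\mathbf{u}_j)$, the matrix $\partial \mathbf{K}_{\mathbf{U}}(\theta)/\partial \theta_l$ is precisely the inducing-point Gram matrix of the derivative kernel $k'_{\theta_l}$. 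Hence the right-hand side is the SKI approximation $\tilde{k}'_{\theta_l}(\mathbf{x},\mathbf{x}')$ of $k'_{\theta_l}$ formed with the same inducing points and weights. This establishes the equality $\partial \tilde{k}_\theta/\partial \theta_l = \tilde{k}'_{\theta_l}$ asserted in the statement, reducing the problem to bounding $|k'_{\theta_l} - \tilde{k}'_{\theta_l}|$.

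Finally I would invoke the SPD Kernel Partials assumption, which states that $k'_{\theta_l}$ is itself a valid SPD kernel. This is exactly the hypothesis under which Lemma~\ref{lemma:ski-kernel-elementwise-error} was established, so applying that lemma verbatim with $k'_{\theta_l}$ in the role of $k$ yields $|k'_{\theta_l}(\mathbf{x},\mathbf{x}') - \tilde{k}'_{\theta_l}(\mathbf{x},\mathbf{x}')| \leq \delta'_{m,L} + \sqrt{L}\,c^d \delta'_{m,L} = O(c^{2d}/m^{3/d})$, where $\delta'_{m,L}$ is the interpolation error for $k'_{\theta_l}$. I would emphasize that the weight-sum constant $c$ of Definition~\ref{def:sum-weight-upper-bound} is unchanged, since it depends only on the interpolation kernel $u$ and the data locations, not on which kernel is being approximated; only the interpolation error is replaced by $\delta'_{m,L}$, and the big-$O$ rate is identical because the grid spacing and dimension are the same.

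The main obstacle, and really the only conceptual content, is the first step: recognizing that differentiating the SKI kernel equals the SKI approximation of the derivative kernel. Once the $\theta$-independence of the weights is noted and the interchange of differentiation with the finite bilinear form is justified by $C^1$ smoothness, everything else is a direct citation of Lemma~\ref{lemma:ski-kernel-elementwise-error}. A secondary point to verify is that $k'_{\theta_l}$ inherits enough spatial regularity for its interpolation error $\delta'_{m,L}$ to obey the same cubic decay as in Lemma~\ref{lemma:tensor-product-interpolation-error}; for infinitely differentiable kernels such as the RBF this is immediate.
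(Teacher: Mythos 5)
Your proposal is correct and follows essentially the same route as the paper's proof: both exploit the $\theta$-independence of the interpolation weights to show $\partial \tilde{k}_\theta/\partial \theta_l$ equals the SKI approximation of the derivative kernel $k'_{\theta_l}$, and then apply Lemma~\ref{lemma:ski-kernel-elementwise-error} to $k'_{\theta_l}$ using the SPD Kernel Partials assumption. Your closing remark about the spatial regularity needed for $\delta'_{m,L}$ to retain the cubic decay is a point the paper leaves implicit, and is worth noting.
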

\begin{proof}
    See Appendix \ref{sec:proofski_kernel_derivative_error_kernelz}
\end{proof}

We then use the elementwise bound to bound the spectral norm of the SKI gram matrix's partial derivative error. This again leverages Proposition \ref{prop:spectral-norm}, noting that these partial derivatives of the Gram matrices are themselves Gram matrices.

\begin{restatable}{lemma}{partialgradientspectralnormbound}[Partial Derivative Gram Matrix Difference Bound]
\label{lemma:partial_gradient_spectral_norm_bound}
For any $l \in \{1, \dots, p\}$,

\begin{align*}
\left\| \frac{\partial \mathbf{K}}{\partial \theta_l} - \frac{\partial \tilde{\mathbf{K}}}{\partial \theta_l} \right\|_2 &\leq \gamma'_{n,m,L,l} \\
&= O\left(\frac{nc^{2d}}{m^{3/d}}\right)
\end{align*}

where $\gamma'_{n,m,L,l}$ is the bound on the spectral norm difference between the kernel matrices corresponding to $k'_{\theta_l}$ and its SKI approximation $\tilde{k}'_{\theta_l}$ (analogous to Proposition \ref{prop:spectral-norm}, but for the kernel $k'_{\theta_l}$).
\end{restatable}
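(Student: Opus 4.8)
The plan is to exploit the observation, already flagged in the surrounding text, that both $\frac{\partial \mathbf{K}}{\partial \theta_l}$ and $\frac{\partial \tilde{\mathbf{K}}}{\partial \theta_l}$ are themselves (exact and SKI) Gram matrices for the derivative kernel $k'_{\theta_l}$, so that Proposition \ref{prop:spectral-norm} applies verbatim. First I would verify that $\frac{\partial \mathbf{K}}{\partial \theta_l}$ is the exact Gram matrix of $k'_{\theta_l}$: its $(i,j)$ entry is $\frac{\partial}{\partial \theta_l} k_\theta(\mathbf{x}_i,\mathbf{x}_j) = k'_{\theta_l}(\mathbf{x}_i,\mathbf{x}_j)$, which exists by the Kernel Smoothness assumption, and by the SPD Kernel Partials assumption $k'_{\theta_l}$ is itself a valid SPD kernel. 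I will denote this matrix $\mathbf{K}'_{\theta_l}$.

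Next I would establish the analogous fact on the SKI side, which is the crux of the argument. Writing $\tilde{\mathbf{K}} = \mathbf{W}\mathbf{K}_\mathbf{U}\mathbf{W}^\top$, the key point is that the interpolation weight matrix $\mathbf{W}$ depends only on the data locations, the inducing-point grid, and the fixed cubic convolutional kernel $u$ from Definition \ref{def:cubic-interpolation-kernel}; it carries no dependence on the hyperparameters $\boldsymbol{\theta}$. Hence differentiation passes through $\mathbf{W}$, giving
\[
\frac{\partial \tilde{\mathbf{K}}}{\partial \theta_l} = \mathbf{W}\frac{\partial \mathbf{K}_\mathbf{U}}{\partial \theta_l}\mathbf{W}^\top = \mathbf{W}\mathbf{K}'_{\mathbf{U},\theta_l}\mathbf{W}^\top,
\]
where $\mathbf{K}'_{\mathbf{U},\theta_l}$ is the Gram matrix of $k'_{\theta_l}$ evaluated at the inducing points. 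Since this uses the same weights $\mathbf{W}$ and the inducing-point Gram matrix of the derivative kernel, it is precisely the SKI approximation $\tilde{\mathbf{K}}'_{\theta_l}$ of the derivative kernel's Gram matrix.

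With both identifications in hand, the difference of interest equals $\left\| \mathbf{K}'_{\theta_l} - \tilde{\mathbf{K}}'_{\theta_l} \right\|_2$, the SKI spectral-norm error for the SPD kernel $k'_{\theta_l}$. I would then apply Proposition \ref{prop:spectral-norm} to this kernel, feeding in the elementwise derivative-kernel bound from Lemma \ref{lemma:ski_kernel_derivative_error_kernel} in place of the elementwise bound from Lemma \ref{lemma:ski-kernel-elementwise-error}. Because the weight-sum constant $c$ of Definition \ref{def:sum-weight-upper-bound} is determined by $u$ and the point locations alone, it is unchanged when passing to $k'_{\theta_l}$; only the interpolation error $\delta_{m,L}$ is replaced by its derivative-kernel counterpart $\delta'_{m,L}$. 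This yields $\gamma'_{n,m,L,l} = n\left(\delta'_{m,L} + \sqrt{L}c^d\delta'_{m,L}\right) = O\!\left(\frac{nc^{2d}}{m^{3/d}}\right)$, as claimed.

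The main obstacle is the interchange of differentiation and the SKI construction in the second step: the whole argument hinges on the $\boldsymbol{\theta}$-independence of $\mathbf{W}$, which is what lets the derivative of the SKI Gram matrix coincide \emph{exactly} with the SKI Gram matrix of the derivative kernel. I would state this independence explicitly and invoke the Kernel Smoothness assumption so that the entrywise derivatives of the finite matrices $\mathbf{K}$ and $\mathbf{K}_\mathbf{U}$ are well defined; once these identifications are justified, the remainder is a direct reuse of Proposition \ref{prop:spectral-norm}.
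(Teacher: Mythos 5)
Your proposal is correct and follows essentially the same route as the paper: both arguments hinge on the $\boldsymbol{\theta}$-independence of the interpolation weights so that $\frac{\partial \tilde{\mathbf{K}}}{\partial \theta_l}$ is exactly the SKI Gram matrix of the derivative kernel $k'_{\theta_l}$ (the paper delegates this interchange to Lemma \ref{lemma:ski_kernel_derivative_error_kernel}, which you restate at the matrix level), after which Proposition \ref{prop:spectral-norm} applied to the SPD kernel $k'_{\theta_l}$ gives the bound. No gaps.
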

\begin{proof}
See Section \ref{section:proof_partial_gradient_spectral_norm_bound}.    
\end{proof}

% \begin{restatable}{lemma}{gradientspectralnormbound}[Bound on Spectral Norm of Gradient Difference]
% \label{lemma:gradient_spectral_norm_bound}
% Assume that $k'_{\theta,i}(x, x')$, is a valid SPD kernel. Let $\tilde{k}'_{\theta,i}(x,x')$ be the SKI approximation of $k'_{\theta,i}(x,x')$, using the same inducing points and interpolation scheme as $\tilde{k}_{\theta}$. Then, the spectral norm of the difference between the gradient of the true kernel matrix and the gradient of the SKI kernel matrix is bounded by:

% \begin{align*}
% \| \nabla_\theta K - \nabla_\theta \tilde{K} \|_2 &\leq \sqrt{p} \max_{i \in \{1,\ldots,p\}} \gamma'_{n,m,L,i}\\
% &=O\left(\sqrt{p}nc^{2d}h^3\right)\text{, convolutional cubic interpolation,}
% \end{align*}

% where $\nabla_\theta K$ and $\nabla_\theta \tilde{K}$ are represented as $p \times n^2$ matrices using the vec-notation (denominator layout), $p$ is the number of hyperparameters, and $\gamma'_{n,m,L,i}$ is the bound on the spectral norm difference between the kernel matrices corresponding to $k'_{\theta,i}$ and its SKI approximation $\tilde{k}'_{\theta,i}$ (analogous to Proposition \ref{prop:spectral-norm}, but for the kernel $k'_{\theta,i}$).
% \end{restatable}

% \begin{proof}
% See Section \ref{sec:proof_gradient_spectral_norm_bound}
% \end{proof}

We now bound the SKI score function. The key insight to the proof is that the partial derivatives of the difference between regularized gram matrix inverses is in fact a difference between two quadratic forms. We can then use standard techniques \citep{horn2012matrix} for bounding the difference between quadratic forms to obtain our result. The result says that, aside from the response vector's norm, the error grows quadratically in the sample size, at a square root rate in the number of hyperparameters and exponentially in the dimensionality. It further decays at an $m^{\frac{3}{d}}$ rate in the number of inducing points. Noting that to maintain linear time, $m$ should grow at an $n^{d/3}$ rate, we have that aside from the response vector, the error in fact grows linearly with the sample size when choosing the number of inducing points based on Theorem \ref{thm:inducing-points-count-alt}.
%by an expression that is \textit{quadratic} in the sample size. The reason for the quadratic dependency is that the difference in gradients of the quadratic forms in the log-likelihoods can itself be expressed as a gradient of a quadratic form involving the response variables. The response variables contribute a linear dependence and the gradient of the difference between the regularized inverses multiplicatively contribute a linear dependence: combined this is quadratic.
%\amnote{In an attempt to improve the bound, I got something that doesn't decrease with the number of inducing points, which is bad.}
\begin{restatable}{lemma}{scorefunctionbound}[Score Function Bound]\label{lemma:score-function-bound}
Let $\mathcal{L}(\boldsymbol{\theta})$ be the true log-likelihood and $\tilde{\mathcal{L}}(\boldsymbol{\theta})$ be the SKI approximation of the log-likelihood at $\boldsymbol{\theta}$. Let $\nabla \mathcal{L}(\boldsymbol{\theta})$ and $\nabla \tilde{\mathcal{L}}(\boldsymbol{\theta})$ denote their respective gradients with respect to $\boldsymbol{\theta}$. Then, for any $\boldsymbol{\theta}\in \mathcal{D}$,

\begin{align*}
&\| \nabla \mathcal{L}(\boldsymbol{\theta}) - \nabla \tilde{\mathcal{L}}(\boldsymbol{\theta}) \|_2 \\
&\leq \frac{1}{2\sigma^4}\Vert \textbf{y}\Vert\sqrt{p}\max_{1\leq l\leq p} \left( \gamma'_{n,m,L,l}+Cn\gamma_{n,m,L}\right.\\
&\qquad\left.+\gamma_{n,m,L}\gamma'_{n,m,L,l} \right)+\frac{\gamma_{n,m,L}}{2\sigma^4}\\
&=\Vert \textbf{y}\Vert_2 O\left(\frac{\sqrt{p}n^2c^{4d}}{m^{3/d}}\right)\\
&\equiv \epsilon_G
\end{align*}
where $C$ is a constants depending on the upper bound of the derivatives of the kernel function over $\mathcal{D}$.
\end{restatable}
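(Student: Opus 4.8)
The plan is to reduce everything to the per-coordinate partial derivatives and then to two perturbation problems, each handled by the spectral-norm bounds already established. Write $\textbf{A}=\textbf{K}+\sigma^2\textbf{I}$, $\tilde{\textbf{A}}=\tilde{\textbf{K}}+\sigma^2\textbf{I}$, $\textbf{B}_l=\partial\textbf{K}/\partial\theta_l$ and $\tilde{\textbf{B}}_l=\partial\tilde{\textbf{K}}/\partial\theta_l$. Using the identity $\partial_{\theta_l}\textbf{A}^{-1}=-\textbf{A}^{-1}\textbf{B}_l\textbf{A}^{-1}$, the $l$-th partial of the log-likelihood is $\partial_{\theta_l}\mathcal{L}=\tfrac12\textbf{y}^\top\textbf{A}^{-1}\textbf{B}_l\textbf{A}^{-1}\textbf{y}-\tfrac12\,\mathrm{tr}(\textbf{A}^{-1}\textbf{B}_l)$, and similarly for $\tilde{\mathcal{L}}$ with tildes. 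Since $\|\nabla\mathcal{L}-\nabla\tilde{\mathcal{L}}\|_2\le\sqrt{p}\,\max_l|\partial_{\theta_l}\mathcal{L}-\partial_{\theta_l}\tilde{\mathcal{L}}|$, it suffices to bound a single coordinate, which splits into a \emph{data-fit} (quadratic-form) difference and a \emph{log-determinant} (trace) difference.

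For the data-fit term I would treat it as a difference of two quadratic forms in the fixed vector $\textbf{y}$: setting $\textbf{M}_l=\textbf{A}^{-1}\textbf{B}_l\textbf{A}^{-1}$ and $\tilde{\textbf{M}}_l=\tilde{\textbf{A}}^{-1}\tilde{\textbf{B}}_l\tilde{\textbf{A}}^{-1}$, we have $|\textbf{y}^\top(\textbf{M}_l-\tilde{\textbf{M}}_l)\textbf{y}|\le\|\textbf{y}\|_2^2\,\|\textbf{M}_l-\tilde{\textbf{M}}_l\|_2$. The key step is to control the triple-product difference by the telescoping decomposition
\[
\textbf{M}_l-\tilde{\textbf{M}}_l=(\textbf{A}^{-1}-\tilde{\textbf{A}}^{-1})\textbf{B}_l\textbf{A}^{-1}+\tilde{\textbf{A}}^{-1}(\textbf{B}_l-\tilde{\textbf{B}}_l)\textbf{A}^{-1}+\tilde{\textbf{A}}^{-1}\tilde{\textbf{B}}_l(\textbf{A}^{-1}-\tilde{\textbf{A}}^{-1}),
\]
and then to bound each factor: the regularization gives $\|\textbf{A}^{-1}\|_2,\|\tilde{\textbf{A}}^{-1}\|_2\le\sigma^{-2}$ (the eigenvalues of $\textbf{A}$ are at least $\sigma^2$ since $\textbf{K}\succeq0$); the resolvent identity $\textbf{A}^{-1}-\tilde{\textbf{A}}^{-1}=\textbf{A}^{-1}(\tilde{\textbf{K}}-\textbf{K})\tilde{\textbf{A}}^{-1}$ with Proposition \ref{prop:spectral-norm} gives $\|\textbf{A}^{-1}-\tilde{\textbf{A}}^{-1}\|_2\le\gamma_{n,m,L}/\sigma^4$; Lemma \ref{lemma:partial_gradient_spectral_norm_bound} gives $\|\textbf{B}_l-\tilde{\textbf{B}}_l\|_2\le\gamma'_{n,m,L,l}$; and a spectral-norm-from-entries bound, using that each entry of $\textbf{B}_l$ is bounded uniformly over the compact $\mathcal{D}$, gives $\|\textbf{B}_l\|_2\le Cn$ (hence $\|\tilde{\textbf{B}}_l\|_2\le Cn+\gamma'_{n,m,L,l}$). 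Substituting produces exactly the three summands $\gamma'_{n,m,L,l}$, $Cn\gamma_{n,m,L}$, and $\gamma_{n,m,L}\gamma'_{n,m,L,l}$ inside the max, each carrying accumulated powers of $\sigma^{-2}$ that are collected into the leading $\tfrac{1}{2\sigma^4}$.

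The log-determinant term is handled the same way but is lower order: I would telescope $\mathrm{tr}(\textbf{A}^{-1}\textbf{B}_l)-\mathrm{tr}(\tilde{\textbf{A}}^{-1}\tilde{\textbf{B}}_l)=\mathrm{tr}((\textbf{A}^{-1}-\tilde{\textbf{A}}^{-1})\textbf{B}_l)+\mathrm{tr}(\tilde{\textbf{A}}^{-1}(\textbf{B}_l-\tilde{\textbf{B}}_l))$ and bound each trace by a trace/operator-norm inequality, contributing the isolated term of order $\gamma_{n,m,L}/\sigma^4$. Collecting both parts, multiplying by $\sqrt{p}$, and passing to big-$O$ via $\gamma_{n,m,L},\gamma'_{n,m,L,l}=O(nc^{2d}/m^{3/d})$, the dominant contribution is the $Cn\gamma_{n,m,L}$ term of order $n^2c^{2d}/m^{3/d}$; bounding the max crudely (using $c^{2d}\le c^{4d}$ and $m^{-6/d}\le m^{-3/d}$) yields $\|\textbf{y}\|_2^2\,O(\sqrt{p}\,n^2c^{4d}/m^{3/d})$, matching the stated rate.

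I expect the main obstacle to be the clean control of the triple-product difference $\|\textbf{M}_l-\tilde{\textbf{M}}_l\|_2$ — specifically, establishing $\|\textbf{B}_l\|_2\le Cn$ (which requires a uniform entrywise bound on the kernel's $\theta$-derivatives over $\mathcal{D}$ together with a symmetric-matrix inequality converting entrywise bounds into a spectral-norm bound), and verifying that Lemma \ref{lemma:partial_gradient_spectral_norm_bound} legitimately applies to $\textbf{B}_l-\tilde{\textbf{B}}_l$, which in turn depends on the assumption that each $k'_{\theta_l}$ is itself an SPD kernel so that the earlier Gram-matrix spectral bounds transfer verbatim. The $\sigma$-power bookkeeping and the trace inequalities are routine by comparison.
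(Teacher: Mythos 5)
Your treatment of the data-fit term is essentially the paper's own argument: the same $\sqrt{p}\max_l$ reduction to a single coordinate, the same identity $\partial_{\theta_l}\mathbf{A}^{-1}=-\mathbf{A}^{-1}\mathbf{B}_l\mathbf{A}^{-1}$, the same three-term telescoping of the triple product, and the same ingredients for each factor (the resolvent bound $\|\mathbf{A}^{-1}-\tilde{\mathbf{A}}^{-1}\|_2\le\gamma_{n,m,L}/\sigma^4$ from Lemma \ref{lemma:action-inverse-error}, Lemma \ref{lemma:partial_gradient_spectral_norm_bound} for $\|\mathbf{B}_l-\tilde{\mathbf{B}}_l\|_2$, and the entrywise-to-spectral bound $\|\mathbf{B}_l\|_2\le Cn$ over the compact $\mathcal{D}$). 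This correctly produces the three summands $\gamma'_{n,m,L,l}$, $Cn\gamma_{n,m,L}$, $\gamma_{n,m,L}\gamma'_{n,m,L,l}$, and your Cauchy--Schwarz step naturally yields $\|\mathbf{y}\|_2^2$, which in fact matches the paper's internal derivation (the lemma statement's single power of $\|\mathbf{y}\|$ appears to be the paper's own inconsistency, not yours).

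The one genuine gap is in your log-determinant term. You propose telescoping $\mathrm{tr}(\mathbf{A}^{-1}\mathbf{B}_l)-\mathrm{tr}(\tilde{\mathbf{A}}^{-1}\tilde{\mathbf{B}}_l)$ and bounding each piece by a ``trace/operator-norm inequality,'' claiming this contributes only $\gamma_{n,m,L}/\sigma^4$. That does not follow: for $n\times n$ matrices one has $|\mathrm{tr}(\mathbf{X}\mathbf{Y})|\le\|\mathbf{X}\|_*\|\mathbf{Y}\|_2\le n\|\mathbf{X}\|_2\|\mathbf{Y}\|_2$, so $|\mathrm{tr}((\mathbf{A}^{-1}-\tilde{\mathbf{A}}^{-1})\mathbf{B}_l)|$ picks up an additional factor of $Cn^2$ beyond $\gamma_{n,m,L}/\sigma^4$, and the resulting contribution is \emph{not} lower order than the data-fit term --- it would be $O(n^3)$ after the $\gamma_{n,m,L}=O(n)$ substitution, exceeding the stated $n^2$ rate. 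Your per-coordinate trace decomposition is actually the dimensionally correct way to differentiate $\log|\mathbf{A}|$ with respect to $\theta$ (the paper instead invokes $\nabla\log|\mathbf{X}|=(\mathbf{X}^{-1})^\top$ and bounds $\frac12\|\mathbf{A}^{-1}-\tilde{\mathbf{A}}^{-1}\|_2$, i.e.\ a gradient with respect to the matrix entries rather than with respect to $\boldsymbol{\theta}$, which is how it arrives at the isolated $\gamma_{n,m,L}/(2\sigma^4)$ term), but as written your route does not recover that term without either a sharper trace argument or additional factors of $n$. You should either make the trace bounds explicit and accept the larger constant, or identify why the nuclear-norm factor can be avoided.
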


% \begin{restatable}{lemma}{scorefunctionbound}[Score Function Bound]\label{lemma:score-function-bound}
% Let $\mathcal{L}(\theta)$ be the true log-likelihood and $\tilde{\mathcal{L}}(\theta)$ be the SKI approximation of the log-likelihood at $\theta$. Let $\nabla \mathcal{L}(\theta)$ and $\nabla \tilde{\mathcal{L}}(\theta)$ denote their respective gradients with respect to $\theta$. Then, for any $\theta\in \mathcal{D}$,
% \begin{align*}
% \| \nabla \mathcal{L}(\theta) - \nabla \tilde{\mathcal{L}}(\theta) \|_2 &\leq \frac{1}{\sigma^4}\left[\Vert y\Vert_2^2\sqrt{p}\max_{1\leq l\leq p}\left( \gamma'_{n,m,L,l}+Cn\gamma_{n,m,L}+\frac{\gamma_{n,m,L}}{\sigma^4} \gamma'_{n,m,L,l}\right)+\frac{1}{2}\gamma_{n,m,L}\right]\\
% &=O(\sqrt{p}n^3c^{2d}h^3)\\
% &\equiv \epsilon_G
% \end{align*}
% where $\gamma_{n,m,L}$ bounds the elementwise difference between $\mathbf{K}$ and $\tilde{\mathbf{K}}$, and $\gamma'_{n,m,L,i}$ is the bound on the spectral norm difference between the kernel matrices corresponding to $k'_{\theta,i}$ and its SKI approximation $\tilde{k}'_{\theta,i}$ and $C_n$ is a constant depending on the upper bound of the derivatives of the kernel function over $\mathcal{D}$ and the sample size.
% \end{restatable}

\begin{proof}
See Section \ref{sec:proof-score-function-bound}.
\end{proof}

We apply \cite{stonyakin2023stopping} below: the result is the same as in their paper (and assumes $\mu$-smoothness as we did on $\mathcal{L}$), but using gradient ascent instead of descent and using the score function error above. It says that at an $O\left(\frac{1}{K}\right)$ rate, at least one iterate of gradient ascent has its squared gradient norm approach a neighborhood proportional to the squared SKI score function's spectral norm error.

\begin{theorem} \citep{stonyakin2023stopping}
    For inexact gradient ascent on $\mathcal{L}$ with additively inexact gradients satisfying $\|\nabla \mathcal{L}(\boldsymbol{\theta}) - \nabla \tilde{\mathcal{L}}(\boldsymbol{\theta})\| \leq \epsilon_g$, we have:

\begin{equation}
    \max_{k=0,...,N-1} \|\nabla \mathcal{L}(\theta_k)\|^2 \leq \frac{2\mu(\mathcal{L}^* - \mathcal{L}(\boldsymbol{\theta}_0))}{K} + \frac{\epsilon_g^2}{2\mu}
\end{equation}

where $\mathcal{L}^*$ is the value at a stationary point, $\mathcal{L}(\boldsymbol{\theta}_0)$ is the initial, function value, $K$ is the number of iterations and $\epsilon_g$ is the gradient error bound in the previous Lemma.

\end{theorem}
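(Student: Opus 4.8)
The plan is to treat this as a direct adaptation of the inexact gradient method of \cite{stonyakin2023stopping} from minimization to maximization, exploiting that ascent on $\mathcal{L}$ is descent on $-\mathcal{L}$. First I would verify the two hypotheses their result needs. By Assumption \ref{assumption:mu-smoothness}, $\mathcal{L}$ is $\mu$-smooth over $\mathcal{D}$, so $-\mathcal{L}$ is $\mu$-smooth with the same constant. The additive inexactness also transfers unchanged: the effective search direction is $-\nabla\tilde{\mathcal{L}}$, and $\Vert(-\nabla\mathcal{L}(\boldsymbol{\theta})) - (-\nabla\tilde{\mathcal{L}}(\boldsymbol{\theta}))\Vert = \Vert\nabla\mathcal{L}(\boldsymbol{\theta}) - \nabla\tilde{\mathcal{L}}(\boldsymbol{\theta})\Vert \leq \epsilon_g$, where $\epsilon_g = \epsilon_G$ is the score-function error from Lemma \ref{lemma:score-function-bound}. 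Applying their descent theorem to $-\mathcal{L}$ and negating yields the stated bound.

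To make the argument self-contained I would instead run the standard one-step analysis directly. Writing the exact gradient as $g_k = \nabla\mathcal{L}(\boldsymbol{\theta}_k)$ and the inexact one as $\nabla\tilde{\mathcal{L}}(\boldsymbol{\theta}_k) = g_k + e_k$ with $\Vert e_k\Vert\leq\epsilon_g$, the ascent form of the descent lemma for a $\mu$-smooth function, applied to the update $\boldsymbol{\theta}_{k+1} = \boldsymbol{\theta}_k + \eta\nabla\tilde{\mathcal{L}}(\boldsymbol{\theta}_k)$, gives
\begin{align*}
\mathcal{L}(\boldsymbol{\theta}_{k+1}) &\geq \mathcal{L}(\boldsymbol{\theta}_k) + \eta\langle g_k,\, g_k + e_k\rangle - \frac{\mu\eta^2}{2}\Vert g_k + e_k\Vert^2 .
\end{align*}
Choosing the step size $\eta = 1/\mu$, expanding the inner product and the squared norm, and cancelling the cross terms collapses this to the clean per-step guarantee
\begin{align*}
\mathcal{L}(\boldsymbol{\theta}_{k+1}) - \mathcal{L}(\boldsymbol{\theta}_k) &\geq \frac{1}{2\mu}\Vert g_k\Vert^2 - \frac{1}{2\mu}\Vert e_k\Vert^2 ,
\end{align*}
so that $\Vert g_k\Vert^2 \leq 2\mu\bigl(\mathcal{L}(\boldsymbol{\theta}_{k+1}) - \mathcal{L}(\boldsymbol{\theta}_k)\bigr) + \epsilon_g^2$.

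I would then telescope this inequality over $k = 0,\dots,K-1$. The function-value differences collapse to $\mathcal{L}(\boldsymbol{\theta}_K) - \mathcal{L}(\boldsymbol{\theta}_0)$, which is at most $\mathcal{L}^* - \mathcal{L}(\boldsymbol{\theta}_0)$ since $\mathcal{L}^*$ upper bounds the trajectory. Dividing by $K$ and bounding the best iterate (the one with smallest gradient norm) by the average yields the $O(1/K)$ optimization term plus the irreducible neighborhood term governed by $\epsilon_g^2$. Note that the statement is really about the \emph{best} iterate (``at least one iterate''), i.e. a minimum over $k$; I would flag that the $\max$ in the display should be read as the min over iterates, matching both the surrounding text and the standard inexact-gradient result.

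The main obstacle is the non-concave setting rather than any hard inequality. Two points need care: $\mathcal{L}^*$ must be a legitimate upper bound on the iterates' values (the value at a stationary point is used as a ceiling, not as a limit the iterates converge to), and consequently the conclusion only certifies that one iterate enters a ball whose radius is set by $\epsilon_g$ around stationarity, not convergence of the whole sequence. Finally, matching the exact constant $\epsilon_g^2/(2\mu)$ rather than the $\epsilon_g^2$ produced by the naive cancellation above depends on the precise step-size and inexactness conventions of \cite{stonyakin2023stopping}; since the theorem is quoted verbatim from that work, I would defer to their constants for the final form, substituting $\epsilon_g = \epsilon_G$ from Lemma \ref{lemma:score-function-bound}.
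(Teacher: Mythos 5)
This theorem is quoted verbatim from \cite{stonyakin2023stopping}; the paper supplies no proof of its own, so there is nothing internal to compare your argument against. Your reconstruction is the standard one and is essentially correct: reducing ascent on $\mathcal{L}$ to descent on $-\mathcal{L}$ preserves both the smoothness constant and the additive gradient-error bound, the one-step computation with $\eta = 1/\mu$ is right (the identity $2\langle g_k, g_k+e_k\rangle - \|g_k+e_k\|^2 = \|g_k\|^2 - \|e_k\|^2$ gives exactly your per-step inequality), and the telescoping plus best-iterate argument is sound. Your two caveats are both well placed and worth emphasizing. First, the displayed $\max$ must indeed be read as a $\min$ over iterates (and the statement also mixes $N$ and $K$); as written with $\max$ the claim is false for any nontrivial trajectory. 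Second, your derivation produces the additive term $\epsilon_g^2$, not $\epsilon_g^2/(2\mu)$, and the discrepancy is not merely a convention issue: $\epsilon_g^2/(2\mu)$ is dimensionally inconsistent with $\|\nabla\mathcal{L}\|^2$ (it carries units of function value, not squared gradient norm), so the constant as printed cannot follow from any rescaling of this argument and likely reflects a transcription from a different result in \cite{stonyakin2023stopping} (e.g.\ a PL-type bound on function values). Finally, your observation that $\mathcal{L}^*$ must be a genuine upper bound on the trajectory values --- not merely ``the value at a stationary point'' --- is a real gap in the statement as quoted, since in a non-concave problem a stationary value need not dominate $\mathcal{L}(\boldsymbol{\theta}_K)$; taking $\mathcal{L}^* = \sup_{\boldsymbol{\theta}\in\mathcal{D}}\mathcal{L}(\boldsymbol{\theta})$ repairs this. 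In short: no gap in your reasoning, and your self-contained route is more careful than the paper's bare citation.
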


\subsection{Posterior Inference}\label{sec:posterior-inference}
Finally, we treat posterior inference. As the current hyperparameter optimization results only say that \textit{some} iterate approaches a stationary point, we will focus on the error when the SKI and true kernel hyperparameter match.  We first add an assumption
\begin{assumption}
    (Bounded Kernel) Assume that the true kernel satisfies the condition that $|k(\mathbf{x}, \mathbf{x}')| \leq M$ for all $\mathbf{x}, \mathbf{x}'\in \mathcal{X}$.
\end{assumption}

% Consider on a compact domain,
% \begin{align*}
%     \vert k_\theta(x,x')-k_{\theta'}(x,x')\vert &\leq \rho \Vert \theta-\theta'\Vert
% \end{align*}

% which implies that
% \begin{align*}
%     \Vert \textbf{K}^{(\theta)}-\textbf{K}^{(\theta')}\Vert_2&\leq n\rho \Vert \theta-\theta'\Vert
% \end{align*}
% and we can then 
Now we bound the spectral error for the SKI mean function evaluated at a set of test points. The proof follows a standard strategy commonly used for approximate kernel ridge regression. See \cite{bach2013sharp,musco2017recursive} for examples. The result says that the $l^2$ error (aside from the response vector) grows exponentially in the dimensionality, super-linearly but sub-quadratically in the training sample size and at worst linearly in the test sample size. It decays at an $m^{\frac{3}{d}}$ rate in the number of inducing points. Similarly to for the score function error, if we follow Theorem \ref{thm:inducing-points-count-alt} for selecting the number of inducing points, the error in fact grows \textit{sublinearly} with the training sample size.
\begin{restatable}{lemma}{meaninference}\label{lemma:mean-inference} (SKI Posterior Mean Error)
    Let $\boldsymbol{\mu}(\cdot)$ be the GP posterior mean at a set of test points $\cdot\in \mathbb{R}^{T\times d}$ and $\tilde{\boldsymbol{\mu}}(\cdot)$ be the SKI posterior mean at those points. Then the SKI posterior mean $l^2$ error is bounded by:
{\footnotesize
\begin{align*}
    &\Vert \tilde{\boldsymbol{\mu}}(\cdot)- \boldsymbol{\mu}(\cdot)\Vert_2\\
    &\leq\left(\frac{\max(\gamma_{T,m,L},\gamma_{n,m,L})}{\sigma^2}+\frac{\sqrt{Tn}Mc^{2d}}{\sigma^4}\gamma_{n,m,L}\right)\Vert \textbf{y}\Vert_2\\
    &=\Vert \textbf{y}\Vert_2O\left(c^{2d}\frac{\max(T,n)+\sqrt{Tn}n}{m^{3/d}}\right)
    %\left(\frac{\max(\gamma_{T,m,L},\gamma_{n,m,L})}{\sigma^2}+\frac{\Vert \tilde{\mathbf{K}}_{\cdot, \mathbf{X}} \Vert_2}{\sigma^4}\gamma_{n,m,L}\right)\Vert \textbf{y}\Vert_2
\end{align*}
}
\end{restatable}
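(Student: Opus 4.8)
The plan is to bound the difference between two ridge-regression-style predictors $\tilde{\boldsymbol{\mu}}(\cdot) = \tilde{\mathbf{K}}_{\cdot,\mathbf{X}}(\tilde{\mathbf{K}}+\sigma^2\mathbf{I})^{-1}\mathbf{y}$ and $\boldsymbol{\mu}(\cdot)=\mathbf{K}_{\cdot,\mathbf{X}}(\mathbf{K}+\sigma^2\mathbf{I})^{-1}\mathbf{y}$ via a telescoping (add--subtract) decomposition, which is the standard approximate-kernel-ridge-regression template of \cite{bach2013sharp,musco2017recursive}. Writing $\mathbf{B} = \mathbf{K}+\sigma^2\mathbf{I}$ and $\tilde{\mathbf{B}}=\tilde{\mathbf{K}}+\sigma^2\mathbf{I}$, I would insert the cross term $\tilde{\mathbf{K}}_{\cdot,\mathbf{X}}\mathbf{B}^{-1}\mathbf{y}$ to obtain
\begin{align*}
\tilde{\boldsymbol{\mu}}(\cdot)-\boldsymbol{\mu}(\cdot)
&= \tilde{\mathbf{K}}_{\cdot,\mathbf{X}}\bigl(\tilde{\mathbf{B}}^{-1}-\mathbf{B}^{-1}\bigr)\mathbf{y}
+ \bigl(\tilde{\mathbf{K}}_{\cdot,\mathbf{X}}-\mathbf{K}_{\cdot,\mathbf{X}}\bigr)\mathbf{B}^{-1}\mathbf{y},
\end{align*}
then apply the triangle inequality and submultiplicativity so that the two summands can be bounded independently.

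For the second summand I would use $\|\mathbf{B}^{-1}\|_2\le\sigma^{-2}$, which holds because $\mathbf{K}\succeq 0$, together with the cross-kernel matrix bound $\|\tilde{\mathbf{K}}_{\cdot,\mathbf{X}}-\mathbf{K}_{\cdot,\mathbf{X}}\|_2\le\max(\gamma_{T,m,L},\gamma_{n,m,L})$ supplied by Lemma \ref{lemma:test-train-kernel-matrix-error}. This immediately yields the first term $\frac{1}{\sigma^2}\max(\gamma_{T,m,L},\gamma_{n,m,L})\|\mathbf{y}\|_2$ of the claimed bound.

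For the first summand I would control the inverse error via the resolvent identity $\tilde{\mathbf{B}}^{-1}-\mathbf{B}^{-1}=\tilde{\mathbf{B}}^{-1}(\mathbf{K}-\tilde{\mathbf{K}})\mathbf{B}^{-1}$; since $\tilde{\mathbf{K}}=\mathbf{W}\mathbf{K}_{\mathbf{U}}\mathbf{W}^\top\succeq 0$ we again have $\|\tilde{\mathbf{B}}^{-1}\|_2,\|\mathbf{B}^{-1}\|_2\le\sigma^{-2}$, and Proposition \ref{prop:spectral-norm} gives $\|\mathbf{K}-\tilde{\mathbf{K}}\|_2\le\gamma_{n,m,L}$, so that $\|\tilde{\mathbf{B}}^{-1}-\mathbf{B}^{-1}\|_2\le\gamma_{n,m,L}/\sigma^4$. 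It then remains to bound the operator norm of the SKI cross-kernel matrix $\tilde{\mathbf{K}}_{\cdot,\mathbf{X}}$ itself. I would pass to the Frobenius norm, $\|\tilde{\mathbf{K}}_{\cdot,\mathbf{X}}\|_2\le\|\tilde{\mathbf{K}}_{\cdot,\mathbf{X}}\|_F\le\sqrt{Tn}\max_{t,i}|\tilde{k}(\mathbf{x}_t,\mathbf{x}_i)|$, and bound each entry by $|\tilde{k}(\mathbf{x},\mathbf{x}')|=|\mathbf{w}(\mathbf{x})^\top\mathbf{K}_{\mathbf{U}}\mathbf{w}(\mathbf{x}')|\le M\bigl(\sum_a|w_a(\mathbf{x})|\bigr)\bigl(\sum_b|w_b(\mathbf{x}')|\bigr)\le M c^{2d}$, using the bounded-kernel assumption and the tensor-product weight-sum bound $\sum_a|w_a(\mathbf{x})|\le c^d$ that factorizes Definition \ref{def:sum-weight-upper-bound} across the $d$ dimensions. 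Combining gives the second term $\frac{\sqrt{Tn}Mc^{2d}}{\sigma^4}\gamma_{n,m,L}\|\mathbf{y}\|_2$.

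Adding the two contributions reproduces the stated inequality, and the final big-$O$ follows by substituting $\gamma_{n,m,L}=O(nc^{2d}/m^{3/d})$ and $\gamma_{T,m,L}=O(Tc^{2d}/m^{3/d})$. I expect the main obstacle to be the operator-norm bound on $\|\tilde{\mathbf{K}}_{\cdot,\mathbf{X}}\|_2$: it is the one place where the elementwise weight-sum constant $c^d$ and the entrywise kernel bound $M$ must be combined and then lifted to a matrix norm, with the Frobenius relaxation introducing the $\sqrt{Tn}$ factor. Verifying the positive semidefiniteness of $\tilde{\mathbf{K}}$, needed to guarantee $\|\tilde{\mathbf{B}}^{-1}\|_2\le\sigma^{-2}$, is a small but essential check, since without it the resolvent bound would fail.
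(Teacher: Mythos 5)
Your proof is correct and follows the same telescoping template as the paper, but with the mirror-image choice of cross term: you insert $\tilde{\mathbf{K}}_{\cdot,\mathbf{X}}\mathbf{B}^{-1}\mathbf{y}$, so the inverse difference $\tilde{\mathbf{B}}^{-1}-\mathbf{B}^{-1}$ ends up multiplied by the SKI cross-kernel $\tilde{\mathbf{K}}_{\cdot,\mathbf{X}}$, whereas the paper inserts $\mathbf{K}_{\cdot,\mathbf{X}}\tilde{\mathbf{B}}^{-1}\mathbf{y}$ and pairs the inverse difference with the exact cross-kernel $\mathbf{K}_{\cdot,\mathbf{X}}$. The practical consequence is which operator norm must be controlled: the paper only needs $\Vert\mathbf{K}_{\cdot,\mathbf{X}}\Vert_2\le\sqrt{Tn}M$ (Lemma \ref{lemma:test-train-bound}) and so in fact derives the slightly tighter constant $\sqrt{Tn}M\gamma_{n,m,L}/\sigma^4$ without the $c^{2d}$ factor; you instead need $\Vert\tilde{\mathbf{K}}_{\cdot,\mathbf{X}}\Vert_2$, and your entrywise bound $|\tilde{k}(\mathbf{x},\mathbf{x}')|\le Mc^{2d}$ lifted through the Frobenius norm gives $\sqrt{Tn}Mc^{2d}$, which reproduces the lemma's stated constant exactly (and is sharper by a factor of $m$ than the paper's own Lemma \ref{lemma:ski-test-train-bound}, which you correctly avoid invoking). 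Both routes otherwise use identical ingredients --- Lemma \ref{lemma:test-train-kernel-matrix-error} for the cross-kernel error, Lemma \ref{lemma:action-inverse-error} (equivalently your resolvent identity) for the inverse difference, and $\Vert\mathbf{B}^{-1}\Vert_2,\Vert\tilde{\mathbf{B}}^{-1}\Vert_2\le\sigma^{-2}$ --- and your explicit check that $\tilde{\mathbf{K}}=\mathbf{W}\mathbf{K}_{\mathbf{U}}\mathbf{W}^{\top}\succeq 0$ is precisely the hypothesis needed for the latter, so nothing is missing.
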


\begin{proof}
See Appendix \ref{sec:proof-mean-inference}.
\end{proof}

We now derive the spectral error bound for the test SKI covariance matrix. The proof involves noticing that a key term is a difference between two quadratic forms, and using standard techniques for bounding such a difference. The result shows that the error grows at worst super-linearly but subquadratically in the number of test points, quadratically in the training sample size and exponentially in the dimension. Interestingly, due to the use of standard techniques for bounding the difference between quadratic forms, the error is only guaranteed to decay with the number of inducing points at an $m^{3/d-1}$ rate, so that it is only guaranteed to decay at all if $d<3$. If we select the number of inducing points to be proportional to $n^{d/3}$, then the error grows at rate $n^{1+d/3}$ for $d<3$. An interesting question is whether alternate techniques can improve the result for higher dimensional settings e.g. $d\geq 3$.

\begin{restatable}{lemma}{skiposteriorcovarianceerror}[SKI Posterior Covariance Error]\label{lemma:ski-posterior-covariance-error}
Let $\boldsymbol{\Sigma}(\cdot)$ be the GP posterior covariance matrix at a set of test points $\cdot\in \mathbb{R}^{T\times d}$ and $\tilde{\boldsymbol{\Sigma}}(\cdot)$ be its SKI approximation. Then
\begin{align*}
    &\Vert \boldsymbol{\Sigma}(\cdot)-\tilde{\boldsymbol{\Sigma}}(\cdot)\Vert_2\\ &\leq \gamma_{T,m,L} + \frac{\sqrt{Tn}M}{\sigma^2} \max(\gamma_{T,m,L},\gamma_{n,m,L})\\
    &\quad+ \frac{\gamma_{n,m,L}}{\sigma^4}Tn m c^{2d} M^2 \\
    &\quad+ \frac{\sqrt{Tn} m c^{2d} M}{\sigma^2} \max(\gamma_{T,m,L},\gamma_{n,m,L}).\\
    &=O\left(\frac{Tn^2mc^{4d}+\sqrt{Tn}mc^{4d}\max(T,n)}{m^{3/d}}\right).
\end{align*}
where $\gamma_{T,m,L}$ is defined as in Proposition \ref{prop:spectral-norm}.
\end{restatable}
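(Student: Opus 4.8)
The plan is to cancel the shared $\textbf{K}_{\cdot,\cdot}+\sigma^2\textbf{I}$ structure, reduce to a difference of two sandwiched quadratic forms, and telescope. Abbreviating $A=\textbf{K}_{\cdot,\textbf{X}}$, $\tilde A=\tilde{\textbf{K}}_{\cdot,\textbf{X}}$, $M=(\textbf{K}+\sigma^2\textbf{I})^{-1}$ and $\tilde M=(\tilde{\textbf{K}}+\sigma^2\textbf{I})^{-1}$, the additive $\sigma^2\textbf{I}$ and diagonal-block terms line up, so the triangle inequality gives
\[
\Vert\boldsymbol{\Sigma}(\cdot)-\tilde{\boldsymbol{\Sigma}}(\cdot)\Vert_2\le\Vert\textbf{K}_{\cdot,\cdot}-\tilde{\textbf{K}}_{\cdot,\cdot}\Vert_2+\Vert AMA^\top-\tilde A\tilde M\tilde A^\top\Vert_2 .
\]
The first term is precisely the spectral-norm error of the \emph{test} Gram matrix, which Proposition \ref{prop:spectral-norm} (applied with $T$ points) bounds by $\gamma_{T,m,L}$.

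For the remaining quadratic-form difference I would use the three-term telescoping identity
\[
AMA^\top-\tilde A\tilde M\tilde A^\top=(A-\tilde A)MA^\top+\tilde A(M-\tilde M)A^\top+\tilde A\tilde M(A^\top-\tilde A^\top),
\]
and bound each summand by submultiplicativity of $\Vert\cdot\Vert_2$. This reduces the task to five building blocks: (i) $\Vert A-\tilde A\Vert_2\le\max(\gamma_{T,m,L},\gamma_{n,m,L})$ from Lemma \ref{lemma:test-train-kernel-matrix-error}; (ii) $\Vert M\Vert_2,\Vert\tilde M\Vert_2\le\sigma^{-2}$ since $\textbf{K},\tilde{\textbf{K}}$ are PSD; (iii) $\Vert A\Vert_2\le\sqrt{Tn}M$ from the entrywise kernel bound $M$ and the Frobenius bound; (iv) $\Vert M-\tilde M\Vert_2\le\gamma_{n,m,L}/\sigma^4$ via the resolvent identity $M-\tilde M=M(\tilde{\textbf{K}}-\textbf{K})\tilde M$ together with Proposition \ref{prop:spectral-norm}; and (v) a bound on $\Vert\tilde A\Vert_2$, discussed next.

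I expect building block (v), bounding the spectral norm of the SKI cross-kernel $\tilde A=\tilde{\textbf{K}}_{\cdot,\textbf{X}}$, to be the main obstacle, since it is what forces the weaker $m^{3/d-1}$ decay. Here I would factor $\tilde A=\textbf{W}_{\cdot}\textbf{K}_{\textbf{U}}\textbf{W}_{\textbf{X}}^\top$ and use $\Vert\textbf{K}_{\textbf{U}}\Vert_2\le mM$ (an $m\times m$ matrix with entries bounded by $M$) together with the weight-matrix bounds $\Vert\textbf{W}_\cdot\Vert_F\le\sqrt{T}c^d$ and $\Vert\textbf{W}_{\textbf{X}}\Vert_F\le\sqrt{n}c^d$, which follow from Definition \ref{def:sum-weight-upper-bound} (each row's absolute weight sum is at most $c^d$); submultiplicativity then yields $\Vert\tilde A\Vert_2\le\sqrt{Tn}\,mc^{2d}M$. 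The explicit factor $m$ coming from $\Vert\textbf{K}_{\textbf{U}}\Vert_2$ is exactly what degrades the decay rate.

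Combining, the three telescoped terms are bounded by $\frac{\sqrt{Tn}M}{\sigma^2}\max(\gamma_{T,m,L},\gamma_{n,m,L})$, then $\frac{\gamma_{n,m,L}}{\sigma^4}Tnmc^{2d}M^2$, and finally $\frac{\sqrt{Tn}mc^{2d}M}{\sigma^2}\max(\gamma_{T,m,L},\gamma_{n,m,L})$; adding the diagonal term $\gamma_{T,m,L}$ recovers the stated finite-sample bound. I would then substitute $\gamma_{n,m,L}=O(nc^{2d}/m^{3/d})$ and $\max(\gamma_{T,m,L},\gamma_{n,m,L})=O(\max(T,n)c^{2d}/m^{3/d})$: the middle term contributes $O(Tn^2mc^{4d}/m^{3/d})$ and the third $O(\sqrt{Tn}mc^{4d}\max(T,n)/m^{3/d})$, which dominate and give the claimed big-$O$. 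Observe that the $mc^{4d}/m^{3/d}=c^{4d}m^{1-3/d}$ factor decays only when $d<3$, matching the remark preceding the statement.
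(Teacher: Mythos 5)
Your proposal is correct and follows essentially the same route as the paper: split off the test-Gram error $\gamma_{T,m,L}$, telescope the quadratic-form difference $AMA^\top-\tilde A\tilde M\tilde A^\top$ into three terms, and bound them using the cross-kernel error (Lemma \ref{lemma:test-train-kernel-matrix-error}), the resolvent identity for $\Vert M-\tilde M\Vert_2\le\gamma_{n,m,L}/\sigma^4$ (Lemma \ref{lemma:action-inverse-error}), and the bounds $\Vert A\Vert_2\le\sqrt{Tn}M$ and $\Vert\tilde A\Vert_2\le\sqrt{Tn}mc^{2d}M$ (Lemmas \ref{lemma:test-train-bound} and \ref{lemma:ski-test-train-bound}). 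The only cosmetic difference is which factors in the telescoping carry the tilde; the three resulting terms match the stated bound term for term.
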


\begin{proof}
See Appendix \ref{sec:proof-ski-posterior-covariance-error}

\end{proof}

\section{Discussion}\label{sec:discussion}

In this paper, we provided theoretical analysis for structured kernel interpolation. In particular, we analyzed the error of tensor-product cubic convolutional interpolation, showed the elementwise SKI kernel error and how that propagates to spectral norm error of the SKI gram and cross-kernel matrices, and showed how this impacts achieving a specific error in linear time. We then analyzed kernel hyperparameter estimation in Gaussian processes, showing that gradient ascent has an iterate approach a ball around a stationary point with size quadratic in the sample size.

\bibliography{main}
\bibliographystyle{icml2021}

\appendix

\section{Auxiliary Technical Results}

\begin{lemma}\label{lemma:switch-sum-product}
Given a function $f: \mathbb{R}^d \rightarrow \mathbb{R}$ of the form $f(x_1, x_2, ..., x_d) = \prod_{j=1}^d f_j(x_j)$, where each $f_j: \mathbb{R} \rightarrow \mathbb{R}$. Let $G = G^{(1)} \times G^{(2)} \times ... \times G^{(d)}$ be a fixed d-dimensional grid, where each $G^{(j)} = \{p_1^{(j)}, p_2^{(j)}, ..., p_{n_j}^{(j)}\}$ is a finite set of $n_j$ grid points along the j-th dimension for $j = 1, 2, ..., d$. Then the following equality holds:

$$
\sum_{k_1=1}^{n_1} \sum_{k_2=1}^{n_2} ... \sum_{k_d=1}^{n_d} \prod_{j=1}^d f_j(p_{k_j}^{(j)}) = \prod_{j=1}^d \left( \sum_{k_j=1}^{n_j} f_j(p_{k_j}^{(j)}) \right)
$$
\end{lemma}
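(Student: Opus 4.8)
The plan is to prove this by induction on the dimension $d$, exploiting the fact that each factor $f_j(p_{k_j}^{(j)})$ in the product depends only on its own summation index $k_j$ and is therefore constant with respect to all the other sums. This is precisely the generalized distributive law, and the product (separable) structure of $f$ is exactly what makes it go through cleanly.

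For the base case $d=1$, both sides of the claimed identity are literally $\sum_{k_1=1}^{n_1} f_1(p_{k_1}^{(1)})$, so there is nothing to show. For the inductive step, I would assume the identity holds in $d-1$ dimensions and then peel off the outermost sum. Concretely, since $f_d(p_{k_d}^{(d)})$ does not depend on $k_1,\ldots,k_{d-1}$, I can pull it outside the inner $d-1$ sums:
\begin{align*}
\sum_{k_1=1}^{n_1}\cdots\sum_{k_d=1}^{n_d}\prod_{j=1}^d f_j(p_{k_j}^{(j)}) &= \sum_{k_d=1}^{n_d} f_d(p_{k_d}^{(d)}) \left(\sum_{k_1=1}^{n_1}\cdots\sum_{k_{d-1}=1}^{n_{d-1}}\prod_{j=1}^{d-1} f_j(p_{k_j}^{(j)})\right).
\end{align*}
Applying the induction hypothesis to the parenthesized $(d-1)$-dimensional sum rewrites it as $\prod_{j=1}^{d-1}\left(\sum_{k_j=1}^{n_j} f_j(p_{k_j}^{(j)})\right)$, which is constant with respect to $k_d$ and can therefore be pulled out of the remaining sum over $k_d$. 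This yields $\prod_{j=1}^d\left(\sum_{k_j=1}^{n_j} f_j(p_{k_j}^{(j)})\right)$, completing the induction.

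There is no real obstacle here; the result is elementary. The only point requiring any care is the bookkeeping of which factors are constant with respect to which summation index, so that each "pull the constant factor out of the sum" step is justified at the corresponding stage of the induction. An alternative, non-inductive route would be to expand $\prod_{j=1}^d\left(\sum_{k_j} f_j(p_{k_j}^{(j)})\right)$ directly via the distributive law and observe that its terms are in bijection with the multi-indices $(k_1,\ldots,k_d)$ indexing the left-hand side; the induction is simply cleaner to write formally, and this lemma will then be invoked to turn a product over dimensions of weight sums (each bounded by $c$ via Definition \ref{def:sum-weight-upper-bound}) into the $c^d$ factors appearing in the elementwise and spectral error bounds.
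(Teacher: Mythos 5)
Your proposal is correct and follows essentially the same route as the paper's proof: induction on the dimension, with the inductive step justified by pulling the factor depending on the remaining index out of the other sums and applying the distributive law. The only cosmetic difference is the order in which the sums are peeled off, which does not change the argument.
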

\begin{proof}
\textbf{By Induction on d (the number of dimensions):}

\textbf{Base Case (d = 1):}

When $d=1$, the statement becomes:

$$
\sum_{k_1=1}^{n_1} f_1(p_{k_1}^{(1)}) = \sum_{k_1=1}^{n_1} f_1(p_{k_1}^{(1)})
$$

This is trivially true.

\textbf{Inductive Hypothesis:}

Assume the statement holds for $d = m$, i.e.,

$$
\sum_{k_1=1}^{n_1} \sum_{k_2=1}^{n_2} ... \sum_{k_m=1}^{n_m} \prod_{j=1}^m f_j(p_{k_j}^{(j)}) = \prod_{j=1}^m \left( \sum_{k_j=1}^{n_j} f_j(p_{k_j}^{(j)}) \right)
$$

\textbf{Inductive Step:}

We need to show that the statement holds for $d = m+1$. Consider the left-hand side for $d = m+1$:

$$
\sum_{k_1=1}^{n_1} \sum_{k_2=1}^{n_2} ... \sum_{k_{m+1}=1}^{n_{m+1}} \prod_{j=1}^{m+1} f_j(p_{k_j}^{(j)})
$$

We can rewrite this as:

$$
\sum_{k_1=1}^{n_1} \sum_{k_2=1}^{n_2} ... \sum_{k_m=1}^{n_m} \left( \sum_{k_{m+1}=1}^{n_{m+1}} \left( \prod_{j=1}^m f_j(p_{k_j}^{(j)}) \right) f_{m+1}(p_{k_{m+1}}^{(m+1)}) \right)
$$

Notice that the inner sum (over $k_{m+1}$) does not depend on $k_1, k_2, ..., k_m$. Thus, for any fixed values of $k_1, k_2, ..., k_m$, we can treat $\prod_{j=1}^m f_j(p_{k_j}^{(j)})$ as a constant. Let $C(k_1, ..., k_m) = \prod_{j=1}^m f_j(p_{k_j}^{(j)})$. Then we have:

$$
\sum_{k_1=1}^{n_1} \sum_{k_2=1}^{n_2} ... \sum_{k_m=1}^{n_m}  \left( C(k_1, ..., k_m) \sum_{k_{m+1}=1}^{n_{m+1}} f_{m+1}(p_{k_{m+1}}^{(m+1)}) \right)
$$

Now, the inner sum $\sum_{k_{m+1}=1}^{n_{m+1}} f_{m+1}(p_{k_{m+1}}^{(m+1)})$ is a constant with respect to $k_1, ..., k_m$. Let's call this constant $S_{m+1}$. So we have:

$$
\sum_{k_1=1}^{n_1} \sum_{k_2=1}^{n_2} ... \sum_{k_m=1}^{n_m} C(k_1, ..., k_m) S_{m+1} = S_{m+1} \sum_{k_1=1}^{n_1} \sum_{k_2=1}^{n_2} ... \sum_{k_m=1}^{n_m}  \prod_{j=1}^m f_j(p_{k_j}^{(j)})
$$

By the inductive hypothesis, we can replace the nested sums with a product:

$$
S_{m+1} \prod_{j=1}^m \left( \sum_{k_j=1}^{n_j} f_j(p_{k_j}^{(j)}) \right) = \left( \sum_{k_{m+1}=1}^{n_{m+1}} f_{m+1}(p_{k_{m+1}}^{(m+1)}) \right) \prod_{j=1}^m \left( \sum_{k_j=1}^{n_j} f_j(p_{k_j}^{(j)}) \right)
$$

Rearranging the terms, we get:

$$
\prod_{j=1}^m \left( \sum_{k_j=1}^{n_j} f_j(p_{k_j}^{(j)}) \right) \left( \sum_{k_{m+1}=1}^{n_{m+1}} f_{m+1}(p_{k_{m+1}}^{(m+1)}) \right) = \prod_{j=1}^{m+1} \left( \sum_{k_j=1}^{n_j} f_j(p_{k_j}^{(j)}) \right)
$$

This is the right-hand side of the statement for $d = m+1$. Thus, the statement holds for $d = m+1$.

\textbf{Conclusion:}

By induction, the statement holds for all $d \geq 1$. Therefore,

$$
\sum_{k_1=1}^{n_1} \sum_{k_2=1}^{n_2} ... \sum_{k_d=1}^{n_d} \prod_{j=1}^d f_j(p_{k_j}^{(j)}) = \prod_{j=1}^d \left( \sum_{k_j=1}^{n_j} f_j(p_{k_j}^{(j)}) \right)
$$

\end{proof}

\begin{claim}\label{claim:convex-combo-eigenvalues}
Given a convex combination \(\mathbf{C} = \alpha \mathbf{A} + (1-\alpha) \mathbf{B}\), where \(\alpha \in [0,1]\), and \(\mathbf{A}\) and \(\mathbf{B}\) are symmetric matrices, the eigenvalues of \(\mathbf{C}\) lie in the interval \(\left[\min \left(\lambda_n(\mathbf{A}), \lambda_n(\mathbf{B})\right), \max \left(\lambda_1(\mathbf{A}), \lambda_1(\mathbf{B})\right)\right]\).
    % Given a convex combination $C=\alpha A+(1-\alpha)B,\alpha \in [0,1]$, the eigenvalues of $C$ lie in the interval $[\min(\lambda(A)_n, \lambda(B)_n), \max(\lambda(A)_1,\lambda(B)_1)]$
\end{claim}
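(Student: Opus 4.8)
The plan is to use the variational (Rayleigh quotient) characterization of the extreme eigenvalues of a symmetric matrix. First I would note that $\mathbf{C} = \alpha \mathbf{A} + (1-\alpha)\mathbf{B}$ is itself symmetric, being a real linear combination of symmetric matrices, so all of its eigenvalues are real and every eigenvalue lies in the interval $[\lambda_n(\mathbf{C}), \lambda_1(\mathbf{C})]$ where $\lambda_1(\cdot) \geq \cdots \geq \lambda_n(\cdot)$ denote eigenvalues in decreasing order. It therefore suffices to bound $\lambda_1(\mathbf{C})$ from above by $\max(\lambda_1(\mathbf{A}), \lambda_1(\mathbf{B}))$ and $\lambda_n(\mathbf{C})$ from below by $\min(\lambda_n(\mathbf{A}), \lambda_n(\mathbf{B}))$.

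The central tool is that for any symmetric $\mathbf{M}$ and any unit vector $\mathbf{v}$, the Rayleigh quotient satisfies $\lambda_n(\mathbf{M}) \leq \mathbf{v}^\top \mathbf{M}\,\mathbf{v} \leq \lambda_1(\mathbf{M})$, with the extremes attained as $\mathbf{v}$ ranges over unit vectors. I would fix an arbitrary unit vector $\mathbf{v}$ and expand by linearity:
\begin{align*}
\mathbf{v}^\top \mathbf{C}\,\mathbf{v} &= \alpha\,\mathbf{v}^\top \mathbf{A}\,\mathbf{v} + (1-\alpha)\,\mathbf{v}^\top \mathbf{B}\,\mathbf{v}.
\end{align*}
Applying the Rayleigh bounds to each term and using $\alpha \in [0,1]$, the right-hand side is a convex combination of $\mathbf{v}^\top \mathbf{A}\,\mathbf{v}$ and $\mathbf{v}^\top \mathbf{B}\,\mathbf{v}$, hence
\begin{align*}
\mathbf{v}^\top \mathbf{C}\,\mathbf{v} &\leq \alpha\,\lambda_1(\mathbf{A}) + (1-\alpha)\,\lambda_1(\mathbf{B}) \leq \max\big(\lambda_1(\mathbf{A}),\lambda_1(\mathbf{B})\big),
\end{align*}
and symmetrically $\mathbf{v}^\top \mathbf{C}\,\mathbf{v} \geq \min(\lambda_n(\mathbf{A}),\lambda_n(\mathbf{B}))$, where the last step in each line uses that a convex combination of two scalars never exceeds their maximum nor falls below their minimum.

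Since these bounds hold for every unit vector $\mathbf{v}$, I would take the maximum and minimum over $\mathbf{v}$ and invoke the variational characterization $\lambda_1(\mathbf{C}) = \max_{\|\mathbf{v}\|=1} \mathbf{v}^\top \mathbf{C}\,\mathbf{v}$ and $\lambda_n(\mathbf{C}) = \min_{\|\mathbf{v}\|=1} \mathbf{v}^\top \mathbf{C}\,\mathbf{v}$ to conclude $\lambda_1(\mathbf{C}) \leq \max(\lambda_1(\mathbf{A}),\lambda_1(\mathbf{B}))$ and $\lambda_n(\mathbf{C}) \geq \min(\lambda_n(\mathbf{A}),\lambda_n(\mathbf{B}))$. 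Combined with the observation that every eigenvalue of $\mathbf{C}$ lies between $\lambda_n(\mathbf{C})$ and $\lambda_1(\mathbf{C})$, this gives the claim. There is no serious obstacle here; the only point requiring mild care is keeping the eigenvalue ordering convention ($\lambda_1$ largest, $\lambda_n$ smallest) consistent between the two directions of the bound, and confirming that the bounds hold pointwise in $\mathbf{v}$ before passing to the extremizing vectors.
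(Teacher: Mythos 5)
Your argument is correct and follows essentially the same route as the paper's proof: both expand the Rayleigh quotient of $\mathbf{C}$ by linearity into a convex combination of the Rayleigh quotients of $\mathbf{A}$ and $\mathbf{B}$, bound each by its extreme eigenvalues, and pass to the variational characterization of $\lambda_1(\mathbf{C})$ and $\lambda_n(\mathbf{C})$. Your writeup is slightly more explicit about the final step (that all eigenvalues of $\mathbf{C}$ lie between its extremes), but the substance is identical.
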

\begin{proof}

First, recall that for a symmetric matrix \(\mathbf{A}\), the Rayleigh quotient \(R(\mathbf{A}, \mathbf{x}) = \frac{\mathbf{x}^{\top} \mathbf{A} \mathbf{x}}{\mathbf{x}^{\top} \mathbf{x}}\) is bounded by the smallest and largest eigenvalues of \(\mathbf{A}\):
\[
\lambda_n(\mathbf{A}) \leq R(\mathbf{A}, \mathbf{x}) \leq \lambda_1(\mathbf{A})
\]

Consider the Rayleigh quotient for the matrix \(\mathbf{C}\):
\[
R(\mathbf{C}, \mathbf{x}) = \frac{\mathbf{x}^{\top} (\alpha \mathbf{A} + (1-\alpha) \mathbf{B}) \mathbf{x}}{\mathbf{x}^{\top} \mathbf{x}} = \alpha R(\mathbf{A}, \mathbf{x}) + (1-\alpha) R(\mathbf{B}, \mathbf{x})
\]

Since \(R(\mathbf{A}, \mathbf{x})\) and \(R(\mathbf{B}, \mathbf{x})\) are bounded by their respective eigenvalues, we have:
\[
R(\mathbf{C}, \mathbf{x}) \leq \alpha \lambda_1(\mathbf{A}) + (1-\alpha) \lambda_1(\mathbf{B})
\]
which implies:
\[ 
R(\mathbf{C}, x)  \leq   \max(\lambda_1(\textbf{A}), \lambda_1(\textbf{B}))
\]

Similarly,
\[ 
R(\textbf{C}, \textbf{x})  \geq  \min(\lambda_n(\textbf{A}), \lambda_n(\textbf{B}))
\]

Thus, the eigenvalues of \(\textbf{C} = \alpha \textbf{A} + (1-\alpha)\textbf{B}\) are bounded by:
\[ 
\min(\lambda_n(\textbf{A}), \lambda_n(\textbf{B})) \leq  \lambda(\textbf{C}) \leq \max(\lambda_1(\textbf{A}), \lambda_1(\textbf{B}))
\]
\end{proof}

\section{Proofs Related to Important Quantities}
\subsection{Proofs Related to Ski Kernel Error Bounds}\label{subsec:proofs-ski-kernel-error}
\subsubsection{Proof of Lemma \ref{lemma:tensor-product-interpolation-error}}\label{sec:proof-tensor-product-interpolation}
\tensorproductinterpolationerror*
\begin{proof}
We define a sequence of intermediate interpolation functions. Let $g_0(\mathbf{x}) \equiv f(\mathbf{x})$ be the original function. For $i = 1, \ldots, d$, we recursively define $g_i(\mathbf{x})$ as the function obtained by interpolating $g_{i-1}$ along the $i$-th dimension using the cubic convolution kernel $u$:

$$
g_i(\mathbf{x}) \equiv \sum_{k=-1}^2 g_{i-1}\left(\mathbf{x} + \left( (\mathbf{c_x})_i - x_i + kh\right)\mathbf{e}_i \right) u\left(\frac{x_i - (\mathbf{c_x})_i - kh}{h}\right).
$$

Here, $\mathbf{c_x}$ is the grid point closest to $\mathbf{x}$, and $\mathbf{e}_i$ is the $i$-th standard basis vector. Thus, $g_1(\mathbf{x})$ interpolates $f$ along the first dimension, $g_2(\mathbf{x})$ interpolates $g_1$ along the second dimension, and so on, until $g_d(\mathbf{x}) = g(\mathbf{x})$ is the final tensor-product interpolated function.

We analyze the error accumulation across multiple dimensions using induction. Using \cite{keys1981cubic}, the error introduced by interpolating a thrice continuous differentiable function along a single dimension with the cubic convolution kernel is uniformly bounded over the interval domain by $Kh^3$ for some constant $K > 0$, provided the grid spacing $h$ is sufficiently small. This gives us the base case:
$$
|g_1(\mathbf{x}) - g_0(\mathbf{x})| \leq Kh^3.
$$

For the inductive step, assume that for some $i=k$ the error is uniformly bounded by
$$
|g_k(\mathbf{x}) - g_{k-1}(\mathbf{x})| \leq c^{k-1}Kh^3.
$$

We want to show that this bound also holds for $i=k+1$. We can express the difference $g_{k+1}(\mathbf{x}) - g_k(\mathbf{x})$ as follows:

\begin{align*}
g_{k+1}(\mathbf{x}) - g_k(\mathbf{x}) &= \sum_{k_{k+1}=-1}^2 g_k\left(\mathbf{x} + ((\mathbf{c_x})_{k+1} - x_{k+1} + k_{k+1}h) \mathbf{e}_{k+1} \right) u\left(\frac{x_{k+1} - (\mathbf{c_x})_{k+1} - k_{k+1}h}{h}\right) \\
&\quad - g_k(\mathbf{x}) \\
&= \sum_{k_{k+1}=-1}^2 \left[\sum_{k_k=-1}^2 g_{k-1}\left(\mathbf{x} + ((\mathbf{c_x})_k - x_k + k_k h) \mathbf{e}_k + ((\mathbf{c_x})_{k+1} - x_{k+1} + k_{k+1}h) \mathbf{e}_{k+1} \right) \right. \\
&\quad \left. u\left(\frac{x_k - (\mathbf{c_x})_k - k_k h}{h}\right)\right] u\left(\frac{x_{k+1} - (\mathbf{c_x})_{k+1} - k_{k+1}h}{h}\right) \\
&\quad - \sum_{k_k=-1}^2 g_{k-1}\left(\mathbf{x} + ((\mathbf{c_x})_k - x_k + k_k h) \mathbf{e}_k \right) u\left(\frac{x_k - (\mathbf{c_x})_k - k_k h}{h}\right) \\
&= \sum_{k_k=-1}^2 u\left(\frac{x_k - (\mathbf{c_x})_k - k_k h}{h}\right) \left[\sum_{k_{k+1}=-1}^2 g_{k-1}\left(\mathbf{x} + ((\mathbf{c_x})_k - x_k + k_k h) \mathbf{e}_k + ((\mathbf{c_x})_{k+1} - x_{k+1} + k_{k+1}h) \mathbf{e}_{k+1} \right) \right. \\
&\quad \left. u\left(\frac{x_{k+1} - (\mathbf{c_x})_{k+1} - k_{k+1}h}{h}\right) - g_{k-1}\left(\mathbf{x} + ((\mathbf{c_x})_k - x_k + k_k h) \mathbf{e}_k\right)\right].
\end{align*}

The inner term in the last expression represents the difference between interpolating $g_{k-1}$ along the $(k+1)$-th dimension and $g_{k-1}$ itself, evaluated at $\mathbf{x} + ((\mathbf{c_x})_k - x_k + k_k h) \mathbf{e}_k$. This can be written as:

\begin{align*}
&\sum_{k_{k+1}=-1}^2 g_{k-1}\left(\mathbf{x} + ((\mathbf{c_x})_k - x_k + k_k h) \mathbf{e}_k + ((\mathbf{c_x})_{k+1} - x_{k+1} + k_{k+1}h) \mathbf{e}_{k+1} \right) u\left(\frac{x_{k+1} - (\mathbf{c_x})_{k+1} - k_{k+1}h}{h}\right) \\
&\quad - g_{k-1}\left(\mathbf{x} + ((\mathbf{c_x})_k - x_k + k_k h) \mathbf{e}_k\right) \\
&= g_k\left(\mathbf{x} + ((\mathbf{c_x})_k - x_k + k_k h) \mathbf{e}_k \right) - g_{k-1}\left(\mathbf{x} + ((\mathbf{c_x})_k - x_k + k_k h) \mathbf{e}_k \right).
\end{align*}

Therefore, we can bound the error as follows:

\begin{align*}
|g_{k+1}(\mathbf{x}) - g_k(\mathbf{x})| &\leq \left|\sum_{k_k=-1}^2 u\left(\frac{x_k - (\mathbf{c_x})_k - k_k h}{h}\right)\right| \cdot \left|g_k\left(\mathbf{x} + ((\mathbf{c_x})_k - x_k + k_k h) \mathbf{e}_k \right) - g_{k-1}\left(\mathbf{x} + ((\mathbf{c_x})_k - x_k + k_k h) \mathbf{e}_k \right)\right|.
\end{align*}

Let $c>0$ be a uniform upper bound for $\sum_{k_k=-1}^2 \left|u\left(\frac{x_k - (\mathbf{c_x})_k - k_k h}{h}\right)\right|$, which exists because $u$ is bounded. By the inductive hypothesis, we have $\left|g_k\left(\mathbf{x} + ((\mathbf{c_x})_k - x_k + k_k h) \mathbf{e}_k \right) - g_{k-1}\left(\mathbf{x} + ((\mathbf{c_x})_k - x_k + k_k h) \mathbf{e}_k \right)\right| \leq c^{k-1}Kh^3$. Thus,

$$
|g_{k+1}(\mathbf{x}) - g_k(\mathbf{x})| \leq c \cdot c^{k-1}Kh^3 = c^k Kh^3.
$$

This completes the inductive step.

Finally, we bound the total error $|g(\mathbf{x}) - f(\mathbf{x})| = |g_d(\mathbf{x}) - g_0(\mathbf{x})|$ by summing the errors introduced at each interpolation step:

$$
|g(\mathbf{x}) - f(\mathbf{x})| \leq \sum_{i=1}^d |g_i(\mathbf{x}) - g_{i-1}(\mathbf{x})| \leq \sum_{i=1}^d c^{i-1}Kh^3 = Kh^3 \sum_{i=0}^{d-1} c^i.
$$

The last sum is a geometric series, which evaluates to $Kh^3 \frac{1 - c^d}{1 - c}$. For a fixed $c>1$ (independent of $d$), this expression is $O(c^{d})$ when $d$ is large. Therefore, tensor-product cubic convolutional interpolation has $O(c^d h^3)$ error. Finally, noticing that $h=O\left(\frac{1}{m^{d/3}}\right)$ gives us the desired result.
\end{proof}

\subsubsection{Curse of Dimensionality for Kernel Regression}

The next lemma shows that when using a product kernel for $d$-dimensional kernel regression (where cubic convolutional interpolation is a special case), the sum of weights suffers from the curse of dimensionality. The proof strategy involves expressing the multi-dimensional sum as a product of sums over each individual dimension, leveraging the initial condition on the one-dimensional bound for each dimension, and taking advantage of the structure of the Cartesian grid.

\begin{restatable}{lemma}{cubicInterpolationWeights}
\label{lemma:cubic-interpolation-weights-curse-dimensionality}
Let $u: \mathbb{R} \rightarrow \mathbb{R}$ be a one-dimensional kernel function with constant $c>0$ defined as in \ref{def:sum-weight-upper-bound}. Let $u_d: \mathbb{R}^d \rightarrow \mathbb{R}$ be a d-dimensional product kernel defined as:
$$
u_d\left(\frac{x - x_i}{h}\right) = \prod_{j=1}^d u\left(\frac{x^{(j)} - x_i^{(j)}}{h}\right),
$$
where $x = (x^{(1)}, x^{(2)}, ..., x^{(d)}) \in \mathbb{R}^d$ and $x_i = (x_i^{(1)}, x_i^{(2)}, ..., x_i^{(d)}) \in \mathbb{R}^d$ are d-dimensional points. Assume the data points $\{x_i\}_{i=1}^n$ ($n$ may differ from the univariate case) lie on a fixed d-dimensional grid $G = G^{(1)} \times G^{(2)} \times ... \times G^{(d)}$, where each $G^{(j)} = \{p_1^{(j)}, p_2^{(j)}, ..., p_{n_j}^{(j)}\}$ is a finite set of $n_j$ grid points along the j-th dimension for $j = 1, 2, ..., d$. Then, for any $x \in \mathbb{R}^d$, the sum of weights in the d-dimensional kernel regression is bounded by $c^d$:
$$
\sum_{i=1}^n \left|u_d\left(\frac{x - x_i}{h}\right) \right|\leq c^d.
$$
\end{restatable}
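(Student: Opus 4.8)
The plan is to exploit the Cartesian product structure of the grid to factorize the $d$-dimensional weight sum into a product of one-dimensional weight sums, each of which is controlled by the constant $c$ from Definition \ref{def:sum-weight-upper-bound}. The only genuine ingredient beyond bookkeeping is the sum-product interchange already established in Lemma \ref{lemma:switch-sum-product}.

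First I would re-index the sum. Since every data point $x_i$ lies on $G = G^{(1)} \times \cdots \times G^{(d)}$, each $x_i$ corresponds to a unique tuple $(p_{k_1}^{(1)}, \ldots, p_{k_d}^{(d)})$ with $k_j \in \{1,\ldots,n_j\}$, so summing over $i=1,\ldots,n$ is the same as summing over all such tuples. Using the definition of the product kernel and the fact that the absolute value of a product equals the product of the absolute values, I would write
\begin{align*}
\sum_{i=1}^n \left| u_d\!\left(\frac{x - x_i}{h}\right)\right|
= \sum_{k_1=1}^{n_1}\cdots\sum_{k_d=1}^{n_d} \prod_{j=1}^d \left| u\!\left(\frac{x^{(j)} - p_{k_j}^{(j)}}{h}\right)\right|.
\end{align*}

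Next I would apply Lemma \ref{lemma:switch-sum-product} with the choice $f_j(p) = \left| u\!\left(\frac{x^{(j)} - p}{h}\right)\right|$, which is a valid one-dimensional function for each fixed $x$. This converts the nested sum of the product into a product of sums:
\begin{align*}
\sum_{k_1=1}^{n_1}\cdots\sum_{k_d=1}^{n_d} \prod_{j=1}^d f_j(p_{k_j}^{(j)})
= \prod_{j=1}^d \left( \sum_{k_j=1}^{n_j} \left| u\!\left(\frac{x^{(j)} - p_{k_j}^{(j)}}{h}\right)\right| \right).
\end{align*}
Each inner factor is exactly a one-dimensional sum of weights of the form appearing in Definition \ref{def:sum-weight-upper-bound}, taken over the $n_j$ grid points in dimension $j$, and is therefore bounded above by $c$. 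Since the product ranges over $d$ such factors, each at most $c$, the whole expression is at most $c^d$, which is the claimed bound.

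I do not expect a serious obstacle here: the result is essentially a clean consequence of grid separability plus the two prior lemmas/definitions. The one point requiring care is justifying the re-indexing step, namely that summation over the $n$ data points coincides with the nested summation over the per-dimension grid indices; this is where the hypothesis that the points lie on a \emph{full} Cartesian grid is essential, and I would state it explicitly so that the application of Lemma \ref{lemma:switch-sum-product} is unambiguous.
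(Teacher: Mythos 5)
Your proposal is correct and follows essentially the same route as the paper: re-index the sum over data points as a nested sum over the Cartesian grid, apply Lemma \ref{lemma:switch-sum-product} to factor it into a product of one-dimensional sums, and bound each factor by $c$ via Definition \ref{def:sum-weight-upper-bound}. If anything, your version is slightly cleaner, since you carry the absolute values inside the sums from the start (applying the interchange lemma to $f_j(p) = \bigl|u\bigl(\tfrac{x^{(j)}-p}{h}\bigr)\bigr|$), whereas the paper's write-up first factorizes without absolute values and only introduces them at the end.
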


\begin{proof}
Let the fixed d-dimensional grid be defined by the Cartesian product of d sets of 1-dimensional grid points: $G = G^{(1)} \times G^{(2)} \times ... \times G^{(d)}$, where $G^{(j)} = \{p_1^{(j)}, p_2^{(j)}, ..., p_{n_j}^{(j)}\}$ is the set of grid points along the j-th dimension.

We start with the sum of weights in the d-dimensional case:

$$
\sum_{i=1}^n u_d\left(\frac{x - x_i}{h}\right) = \sum_{i=1}^n \prod_{j=1}^d u\left(\frac{x^{(j)} - x_i^{(j)}}{h}\right)
$$

Since the data points lie on the fixed grid $G$, we can rewrite the outer sum as a nested sum over the grid points in each dimension:

$$
\sum_{i=1}^n \prod_{j=1}^d u\left(\frac{x^{(j)} - x_i^{(j)}}{h}\right) = \sum_{k_1=1}^{n_1} \sum_{k_2=1}^{n_2} ... \sum_{k_d=1}^{n_d} \prod_{j=1}^d u\left(\frac{x^{(j)} - p_{k_j}^{(j)}}{h}\right)
$$

Now we can change the order of summation and product, as proven in Lemma \ref{lemma:switch-sum-product}:

$$
\sum_{k_1=1}^{n_1} \sum_{k_2=1}^{n_2} ... \sum_{k_d=1}^{n_d} \prod_{j=1}^d u\left(\frac{x^{(j)} - p_{k_j}^{(j)}}{h}\right) = \prod_{j=1}^d \left( \sum_{k_j=1}^{n_j} u\left(\frac{x^{(j)} - p_{k_j}^{(j)}}{h}\right) \right)
$$

By the assumption of the lemma, we know that for each dimension $j$, the sum of weights is bounded by $c$. Note that $\{p_{k_j}^{(j)}\}_{k_j=1}^{n_j}$ is simply a set of points in $\mathbb{R}$, thus:

$$
\sum_{k_j=1}^{n_j} \left|u\left(\frac{x^{(j)} - p_{k_j}^{(j)}}{h}\right)\right| \leq c
$$

Therefore, we have:

$$
\prod_{j=1}^d \left(\left| \sum_{k_j=1}^{n_j} u\left(\frac{x^{(j)} - p_{k_j}^{(j)}}{h}\right)\right| \right) \leq \prod_{j=1}^d c = c^d
$$

Thus, we have shown that:

$$
\sum_{i=1}^n \left|u_d\left(\frac{x - x_i}{h}\right) \right|\leq c^d
$$

\end{proof}

% \subsubsection{Elementwise}\label{subsubsec:proofs-elementwise}
\subsubsection{Proof of Lemma \ref{lemma:ski-kernel-elementwise-error}}\label{sec:proof-ski-kernel-elementwise-error}
\skikernelelementwiseerror*
\begin{proof}
Recall that SKI approximates the kernel as
\begin{align*}
k(\mathbf{x}, \mathbf{x}') &\approx \tilde{k}(\mathbf{x}, \mathbf{x}')\\
&= \boldsymbol{w}(\mathbf{x})^\top \mathbf{K}_{\textbf{U}} \boldsymbol{w}(\mathbf{x}'),
\end{align*}

Let $\textbf{K}_{\textbf{U},\textbf{x}'}\in \mathbb{R}^m$ be the vector of kernels between the inducing points and the vector $\textbf{x}'$
\begin{align}
\vert k(\mathbf{x}, \mathbf{x}') -\tilde{k}(\mathbf{x}, \mathbf{x}')\vert &= \vert k(\mathbf{x}, \mathbf{x}')-\boldsymbol{w}(\mathbf{x})^\top \textbf{K}_{\textbf{U},\textbf{x}'} +\boldsymbol{w}(\mathbf{x})^\top \textbf{K}_{\textbf{U},\textbf{x}'}-\boldsymbol{w}(\mathbf{x})^\top \mathbf{K}_{\textbf{U}} \boldsymbol{w}(\mathbf{x}')\vert\nonumber\\
&\leq  \vert k(\mathbf{x}, \mathbf{x}')-\boldsymbol{w}(\mathbf{x})^\top \textbf{K}_{\textbf{U},\textbf{x}'} \vert+\vert \boldsymbol{w}(\mathbf{x})^\top \textbf{K}_{\textbf{U},\textbf{x}'}-\boldsymbol{w}(\mathbf{x})^\top \mathbf{K}_{\textbf{U}} \boldsymbol{w}(\mathbf{x}')\vert\nonumber\\
&\leq \delta_{m,L}+\vert \boldsymbol{w}(\mathbf{x})^\top \textbf{K}_{\textbf{U},\textbf{x}'}-\boldsymbol{w}(\mathbf{x})^\top \mathbf{K}_{\textbf{U}} \boldsymbol{w}(\mathbf{x}')\vert \textrm{ since $\vert k(\mathbf{x}, \mathbf{x}')-\boldsymbol{w}(\mathbf{x})^\top \textbf{K}_{\textbf{U},\textbf{x}'} \vert$ is a single polynomial interpolation}\label{eqn:applying-single-poly-interp}
\end{align}
Now note that $\textbf{w}(x)\in \mathbb{R}^m$ is a sparse matrix with at most $L$ non-zero entries. Thus, letting $\tilde{\textbf{w}}(x)\in \mathbb{R}^L$ be the non-zero entries of $\textbf{w}(x)$ and similarly $\tilde{\textbf{K}}_{\textbf{U},\textbf{x}'}\in \mathbb{R}^L$ be the entries of $\textbf{K}_{\textbf{U},\textbf{x}'}$ in the dimensions corresponding to non-zero entries of $\textbf{w}(x)\in \mathbb{R}^m$, while $\tilde{\textbf{K}}_{\textbf{U}}\in \mathbb{R}^{L\times m}$ is the analogous matrix for $\textbf{K}_{\textbf{U}}$, we have
\begin{align}
    \vert \boldsymbol{w}(\mathbf{x})^\top \textbf{K}_{\textbf{U},\textbf{x}'}-\boldsymbol{w}(\mathbf{x})^\top \mathbf{K}_{\textbf{U}} \boldsymbol{w}(\mathbf{x}')\vert&=\vert \tilde{\textbf{w}}(\textbf{x})^\top \tilde{\textbf{K}}_{\textbf{U},\textbf{x}'}-\tilde{\textbf{w}}(\textbf{x})^\top \tilde{\textbf{K}}_\textbf{U}\textbf{w}(\textbf{x}')\vert\nonumber\\
    &\leq \Vert \tilde{\textbf{w}}(\textbf{x})\Vert_2\Vert \tilde{\textbf{K}}_{\textbf{U},\textbf{x}'}-\tilde{\mathbf{K}}_{\textbf{U}} \boldsymbol{w}(\mathbf{x}')\Vert_2\nonumber\\
    &\leq c^d\sqrt{L}\Vert \tilde{\textbf{K}}_{\textbf{U},\textbf{x}'}-\tilde{\mathbf{K}}_{\textbf{U}} \boldsymbol{w}(\mathbf{x}')\Vert_\infty\nonumber\text{ Lemma \ref{lemma:cubic-interpolation-weights-curse-dimensionality}}\\
    &\leq \sqrt{L}c^d\delta_{m,L}\label{eqn:applying-sparsity-single-kernel-evaluation}
\end{align}
where the last line follows as each element of $\mathbf{K}_{\textbf{U}} \boldsymbol{w}(\mathbf{x}')$ is a polynomial interpolation approximating each element of $\textbf{K}_{\textbf{U},\textbf{x}'}$. Plugging Eqn. \ref{eqn:applying-sparsity-single-kernel-evaluation} into Eqn. \ref{eqn:applying-single-poly-interp} gives us the desired initial result of
\begin{align*}
    \vert k(\textbf{x},\textbf{x}')-\tilde{k}(\textbf{x},\textbf{x}')\vert&\leq \delta_{m,L}+\sqrt{L}c^d\delta_{m,L}
\end{align*}
and Lemma \ref{lemma:tensor-product-interpolation-error} gives us the result when the convolutional kernel is cubic.
\end{proof}

% \subsubsection{Spectral Norm}\label{subsubsec:spectral-norm}

\subsubsection{Proof of Proposition \ref{prop:spectral-norm}}\label{sec:proof-spectral-norm}

\spectralnorm*
\begin{proof}
    Recall that for any matrix $\textbf{A}$, $\Vert \textbf{A} \Vert_2 \leq \sqrt{\Vert \textbf{A} \Vert_1 \Vert \textbf{A} \Vert_\infty}$. Since $\textbf{K}-\tilde{\textbf{K}}$ is symmetric, we have
    \begin{align*}
        \Vert \textbf{K}-\tilde{\textbf{K}}\Vert_2&\leq \sqrt{\Vert \textbf{K}-\tilde{\textbf{K}}\Vert_1\Vert \textbf{K}-\tilde{\textbf{K}}\Vert_\infty} = \Vert \textbf{K}-\tilde{\textbf{K}}\Vert_\infty
    \end{align*}
    Furthermore, $\Vert \textbf{K}-\tilde{\textbf{K}}\Vert_\infty$ is the maximum absolute row sum of $\textbf{K}-\tilde{\textbf{K}}$. Since there are $n$ rows and, by Lemma \ref{lemma:ski-kernel-elementwise-error}, each element of $\textbf{K} - \tilde{\textbf{K}}$ is bounded by $\delta_{m,L}+\sqrt{L}c^d\delta_{m,L}$ in absolute value, we have
    \begin{align*}
        \Vert \textbf{K}-\tilde{\textbf{K}}\Vert_\infty &\leq n \left(\delta_{m,L}+\sqrt{L} c^d\delta_{m,L}\right) = \gamma_{n,m,L}.
    \end{align*}
    Therefore, $\Vert \textbf{K}-\tilde{\textbf{K}}\Vert_2 \leq \gamma_{n,m,L}$.
\end{proof}

\subsubsection{Proof of Lemma \ref{lemma:test-train-kernel-matrix-error}}\label{sec:proof-test-train-kernel-matrix-error}
\testtrainkernelmatrixerror*
\begin{proof}
    Using the same reasoning as in Proposition \ref{prop:spectral-norm}, we have
    \begin{align*}
        \Vert \textbf{K}_{\cdot,\textbf{X}}-\tilde{\textbf{K}}_{\cdot,\textbf{X}}\Vert_2&\leq \sqrt{\Vert \textbf{K}_{\cdot,\textbf{X}}-\tilde{\textbf{K}}_{\cdot,\textbf{X}}\Vert_1\Vert \textbf{K}_{\cdot,\textbf{X}}-\tilde{\textbf{K}}_{\cdot,\textbf{X}}\Vert_\infty} \\
        &\leq \max \left(\Vert \textbf{K}_{\cdot,\textbf{X}}-\tilde{\textbf{K}}_{\cdot,\textbf{X}}\Vert_1, \Vert \textbf{K}_{\cdot,\textbf{X}}-\tilde{\textbf{K}}_{\cdot,\textbf{X}}\Vert_\infty\right).
    \end{align*}
    Now, $\Vert \textbf{K}_{\cdot,\textbf{X}}-\tilde{\textbf{K}}_{\cdot,\textbf{X}}\Vert_1$ is the maximum absolute column sum, which is less than or equal to $T(\delta_{m,L} + \sqrt{L}c^d\delta_{m,L}) = \gamma_{T,m,L}$. Similarly, $\Vert \textbf{K}_{\cdot,\textbf{X}}-\tilde{\textbf{K}}_{\cdot,\textbf{X}}\Vert_\infty$ is the maximum absolute row sum, which is upper bounded by $n(\delta_{m,L} + \sqrt{L}c^d\delta_{m,L}) = \gamma_{n,m,L}$. Therefore,
    $$
    \Vert \textbf{K}_{\cdot,\textbf{X}}-\tilde{\textbf{K}}_{\cdot,\textbf{X}}\Vert_2 \leq \max(\gamma_{T,m,L},\gamma_{n,m,L}).
    $$
\end{proof}
\subsubsection{Additional Spectral Norm Bounds}
\begin{restatable}{lemma}{traintestbound}\label{lemma:test-train-bound}
    Let $\textbf{K}_{\cdot,\textbf{X}} \in \mathbb{R}^{T \times n}$ be cross kernel matrix between $T$ test points and $n$ training points, where the SKI approximation uses $m$ inducing points. If the kernel function $k$ is bounded such that $|k(\textbf{x}, \textbf{x}')| \leq M$ for all $\textbf{x}, \textbf{x}'\in \mathcal{X}$, then:
    \begin{align*}
        \Vert \textbf{K}_{\cdot,\textbf{X}}\Vert_2&\leq \sqrt{Tn}M
    \end{align*}
\end{restatable}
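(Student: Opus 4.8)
The plan is to bound the spectral norm by a norm that is trivial to estimate from the entrywise kernel bound, and then invoke the assumption $|k(\textbf{x},\textbf{x}')| \leq M$. The cleanest route uses the Frobenius norm: since $\Vert \textbf{A}\Vert_2 \leq \Vert \textbf{A}\Vert_F$ for any matrix, and $\textbf{K}_{\cdot,\textbf{X}}$ has exactly $Tn$ entries, each bounded in absolute value by $M$, I would write $\Vert \textbf{K}_{\cdot,\textbf{X}}\Vert_F = \sqrt{\sum_{t,i} (\textbf{K}_{\cdot,\textbf{X}})_{ti}^2} \leq \sqrt{Tn M^2} = \sqrt{Tn}\,M$, which immediately yields the claim.

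Alternatively, to stay consistent with the argument already used in the proof of Lemma \ref{lemma:test-train-kernel-matrix-error}, I would use the inequality $\Vert \textbf{A}\Vert_2 \leq \sqrt{\Vert \textbf{A}\Vert_1 \Vert \textbf{A}\Vert_\infty}$. Here $\Vert \textbf{K}_{\cdot,\textbf{X}}\Vert_1$ is the maximum absolute column sum, bounded by $TM$ since each column has $T$ entries, and $\Vert \textbf{K}_{\cdot,\textbf{X}}\Vert_\infty$ is the maximum absolute row sum, bounded by $nM$ since each row has $n$ entries. Combining gives $\Vert \textbf{K}_{\cdot,\textbf{X}}\Vert_2 \leq \sqrt{(TM)(nM)} = \sqrt{Tn}\,M$, matching the target bound exactly.

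There is no genuine obstacle in this statement; it is a one-line application of a standard matrix-norm inequality together with the boundedness hypothesis. The only thing to get right is the bookkeeping of which matrix dimension feeds which norm, namely that columns contain $T$ entries while rows contain $n$ entries, so that the product $\Vert\cdot\Vert_1\Vert\cdot\Vert_\infty$ (or equivalently the count of entries in the Frobenius sum) produces the factor $Tn$ rather than $T^2$ or $n^2$. I would favor the Frobenius-norm version for brevity, since it avoids tracking the two operator norms separately and makes the factor $\sqrt{Tn}$ transparent.
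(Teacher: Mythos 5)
Your proposal is correct, and your second variant is exactly the paper's proof: the paper bounds $\Vert \textbf{K}_{\cdot,\textbf{X}}\Vert_2 \leq \sqrt{\Vert \textbf{K}_{\cdot,\textbf{X}}\Vert_1\Vert \textbf{K}_{\cdot,\textbf{X}}\Vert_\infty} \leq \sqrt{(TM)(nM)} = \sqrt{Tn}\,M$, with the same dimension bookkeeping you describe. Your preferred Frobenius-norm route is an equally valid one-liner that happens to yield the identical constant here (since every entry is bounded by the same $M$), so there is nothing substantive separating the two.
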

\begin{proof}
\begin{align*}
    \Vert \textbf{K}_{\cdot,\textbf{X}}\Vert_2&\leq \sqrt{\Vert \textbf{K}_{\cdot,\textbf{X}}\Vert_1\Vert \textbf{K}_{\cdot,\textbf{X}}\Vert_\infty}\\
    &\leq \sqrt{Tn}M
\end{align*}
\end{proof}
\begin{restatable}{lemma}{skitesttrainbound}\label{lemma:ski-test-train-bound}
    Let $\tilde{\textbf{K}}_{\cdot,\textbf{X}} \in \mathbb{R}^{T \times n}$ be the matrix of SKI kernel evaluations between $T$ test points and $n$ training points, where the SKI approximation uses $m$ inducing points. Let $\textbf{W}(\cdot) \in \mathbb{R}^{T \times m}$ and $\textbf{W}(\textbf{X}) \in \mathbb{R}^{n \times m}$ be the matrices of interpolation weights for the test points and training points, respectively. Assume that the interpolation scheme is such that the sum of the absolute values of the interpolation weights for any point is bounded by $c^d$, where $c>0$ is a constant. Let $\textbf{K}_{\textbf{U}} \in \mathbb{R}^{m \times m}$ be the kernel matrix evaluated at the inducing points. If the kernel function $k$ is bounded such that $|k(\textbf{x}, \textbf{x}')| \leq M$ for all $\textbf{x}, \textbf{x}'\in \mathcal{X}$, then:
    $$
    \Vert \tilde{\textbf{K}}_{\cdot,\textbf{X}}\Vert_2 \leq \sqrt{Tn} m c^{2d} M
    $$
\end{restatable}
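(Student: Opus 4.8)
The plan is to mirror the proof of Lemma~\ref{lemma:test-train-bound}: I would bound the spectral norm of $\tilde{\textbf{K}}_{\cdot,\textbf{X}}$ through the inequality $\Vert \textbf{A}\Vert_2 \le \sqrt{\Vert \textbf{A}\Vert_1 \Vert \textbf{A}\Vert_\infty}$ that the earlier proofs repeatedly use. This reduces the task to controlling the maximum absolute row and column sums, and hence to a single uniform entrywise bound on $\tilde{\textbf{K}}_{\cdot,\textbf{X}}$.

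First I would write the SKI cross-kernel matrix in its factored form $\tilde{\textbf{K}}_{\cdot,\textbf{X}} = \textbf{W}(\cdot)\,\textbf{K}_{\textbf{U}}\,\textbf{W}(\textbf{X})^\top$, so that a generic entry is the quadratic form $[\tilde{\textbf{K}}_{\cdot,\textbf{X}}]_{ti} = \textbf{w}(\textbf{x}_t)^\top \textbf{K}_{\textbf{U}}\,\textbf{w}(\textbf{x}_i)$, where $\textbf{w}(\textbf{x}_t)$ and $\textbf{w}(\textbf{x}_i)$ are the relevant rows of $\textbf{W}(\cdot)$ and $\textbf{W}(\textbf{X})$. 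I would bound this entry by submultiplicativity as $|[\tilde{\textbf{K}}_{\cdot,\textbf{X}}]_{ti}| \le \Vert \textbf{w}(\textbf{x}_t)\Vert_2\,\Vert \textbf{K}_{\textbf{U}}\Vert_2\,\Vert \textbf{w}(\textbf{x}_i)\Vert_2$.

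The three factors are then handled separately. The weight norms satisfy $\Vert \textbf{w}\Vert_2 \le \Vert \textbf{w}\Vert_1 \le c^d$ by Lemma~\ref{lemma:cubic-interpolation-weights-curse-dimensionality}. For the inducing-point Gram matrix, since every entry obeys $|[\textbf{K}_{\textbf{U}}]_{ab}| = |k(\textbf{u}_a,\textbf{u}_b)| \le M$, both its maximum absolute row sum and column sum are at most $mM$, so $\Vert \textbf{K}_{\textbf{U}}\Vert_2 \le \sqrt{\Vert \textbf{K}_{\textbf{U}}\Vert_1 \Vert \textbf{K}_{\textbf{U}}\Vert_\infty} \le mM$. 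Combining these gives the uniform entrywise bound $|[\tilde{\textbf{K}}_{\cdot,\textbf{X}}]_{ti}| \le m c^{2d} M$. Summing over the $T$ rows (for the $1$-norm) and the $n$ columns (for the $\infty$-norm) yields $\Vert \tilde{\textbf{K}}_{\cdot,\textbf{X}}\Vert_1 \le T m c^{2d} M$ and $\Vert \tilde{\textbf{K}}_{\cdot,\textbf{X}}\Vert_\infty \le n m c^{2d} M$, and the product-of-norms inequality delivers the claimed $\sqrt{Tn}\,m c^{2d} M$.

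I expect the only genuine subtlety to be the reason one cannot simply apply submultiplicativity to the \emph{entire} product $\textbf{W}(\cdot)\textbf{K}_{\textbf{U}}\textbf{W}(\textbf{X})^\top$: although $\Vert \textbf{W}\Vert_\infty$ (the maximum absolute row sum) is cleanly bounded by $c^d$, the corresponding $1$-norm (maximum absolute column sum) is \emph{not} controlled by a dimension-only constant, since many test or training points may interpolate against the same inducing point, and hence $\Vert \textbf{W}\Vert_2$ admits no clean bound. This is what forces the entrywise route above. The factor of $m$ is precisely the price of bounding the middle quadratic form through $\Vert \textbf{K}_{\textbf{U}}\Vert_2 \le mM$; a tighter triangle-inequality argument on the quadratic form would remove the $m$, but it is not needed for the stated result.
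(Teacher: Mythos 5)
Your proof is correct, but it takes a genuinely different route from the paper's. You bound each entry $[\tilde{\textbf{K}}_{\cdot,\textbf{X}}]_{ti} = \textbf{w}(\textbf{x}_t)^\top \textbf{K}_{\textbf{U}}\textbf{w}(\textbf{x}_i)$ by $c^d \cdot mM \cdot c^d$ and then apply $\Vert \textbf{A}\Vert_2 \le \sqrt{\Vert \textbf{A}\Vert_1\Vert \textbf{A}\Vert_\infty}$ to the full $T\times n$ matrix; the paper instead applies submultiplicativity of the spectral norm directly to the factored product, $\Vert \tilde{\textbf{K}}_{\cdot,\textbf{X}}\Vert_2 \le \Vert \textbf{W}(\cdot)\Vert_2\Vert \textbf{K}_{\textbf{U}}\Vert_2\Vert \textbf{W}(\textbf{X})\Vert_2$, and bounds each weight matrix via $\Vert \textbf{W}(\cdot)\Vert_2 \le \sqrt{\Vert \textbf{W}(\cdot)\Vert_1\Vert \textbf{W}(\cdot)\Vert_\infty} \le \sqrt{Tc^d\cdot c^d} = \sqrt{T}c^d$. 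Both yield exactly $\sqrt{Tn}\,mc^{2d}M$. Your closing remark, however, is mistaken: the whole-product route does \emph{not} fail for the reason you give. You are right that $\Vert \textbf{W}\Vert_1$ (the maximum absolute column sum) is not bounded by a dimension-only constant, but no such bound is needed --- the crude bound $\Vert \textbf{W}(\cdot)\Vert_1 \le Tc^d$ (each column has at most $T$ entries, each bounded by $c^d$) suffices, and the resulting $\sqrt{T}$ and $\sqrt{n}$ factors combine to give precisely the same $\sqrt{Tn}$ in the final bound. So the entrywise detour, while perfectly valid, is not forced; the two arguments are essentially equivalent in strength, and your observation that the factor of $m$ comes from the crude bound $\Vert \textbf{K}_{\textbf{U}}\Vert_2 \le mM$ applies equally to both.
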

\begin{proof}
    By the definition of the SKI approximation and the submultiplicativity of the spectral norm, we have:
    $$
    \Vert \tilde{\textbf{K}}_{\cdot,\textbf{X}}\Vert_2 = \Vert \textbf{W}(\cdot)\textbf{K}_{\textbf{U}}(\textbf{W}(\textbf{X}))^\top\Vert_2 \leq \Vert \textbf{W}(\cdot)\Vert_2 \Vert\textbf{K}_{\textbf{U}}\Vert_2 \Vert\textbf{W}(\textbf{X})\Vert_2
    $$

    We now bound each term.

    1.  **Bounding $\Vert \textbf{W}(\cdot)\Vert_2$ and $\Vert \textbf{W}(\textbf{X})\Vert_2$:**
        Since the spectral norm is induced by the Euclidean norm, and using the assumption that the sum of absolute values of interpolation weights for any point is bounded by $c^d$, we have
        $$\Vert \textbf{W}(\cdot)\Vert_2 \leq \sqrt{\Vert \textbf{W}(\cdot)\Vert_1 \Vert \textbf{W}(\cdot)\Vert_\infty} \leq \sqrt{T c^d \cdot c^d} = \sqrt{T} c^d.$$
        Similarly, $\Vert\textbf{W}(\textbf{X})\Vert_2 \leq \sqrt{n}c^d$.

    2.  **Bounding $\Vert\textbf{K}_{\textbf{U}}\Vert_2$:**
        Since $\textbf{K}_{\textbf{U}}$ is symmetric, $\Vert \textbf{K}_{\textbf{U}} \Vert_2 \leq \Vert \textbf{K}_{\textbf{U}} \Vert_\infty$. Each entry of $\textbf{K}_{\textbf{U}}$ is bounded by $M$ (by the boundedness of $k$), and each row has $m$ entries, so $\Vert \textbf{K}_{\textbf{U}} \Vert_\infty \leq mM$. Thus, $\Vert\textbf{K}_{\textbf{U}}\Vert_2 \leq mM$.

    Combining these bounds, we get:
    $$
    \Vert \tilde{\textbf{K}}_{\cdot,\textbf{X}}\Vert_2 \leq (\sqrt{T} c^d) (mM) (\sqrt{n} c^d) = \sqrt{Tn} m c^{2d} M
    $$
    as required.
\end{proof}

% \amnote{might get rid of this}
\begin{lemma}\label{lemma:action-inverse-error}
    Let \(\mathbf{\tilde{K}}\) be the SKI approximation of the kernel matrix \(\mathbf{K}\), and let \(\sigma^2\) be the regularization parameter. The spectral error of the regularized inverse can be bounded as follows:
\begin{align*}
\left\|\left(\mathbf{\tilde{K}} + \sigma^2 \mathbf{I}\right)^{-1} - \left(\mathbf{K} + \sigma^2 \mathbf{I}\right)^{-1} \right\|_2 &\leq \frac{\gamma_{n, m, L}}{\sigma^4} 
% \\
% &=\frac{O(n\sqrt{L}\rho^{-L})}{\sigma^4}
\end{align*}
\end{lemma}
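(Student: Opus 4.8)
The plan is to reduce the difference of inverses to a single product using the standard resolvent identity and then bound each factor separately. Write $\textbf{A}=\tilde{\textbf{K}}+\sigma^2\textbf{I}$ and $\textbf{B}=\textbf{K}+\sigma^2\textbf{I}$. The algebraic identity $\textbf{A}^{-1}-\textbf{B}^{-1}=\textbf{A}^{-1}(\textbf{B}-\textbf{A})\textbf{B}^{-1}$ gives
\begin{align*}
\left(\tilde{\textbf{K}}+\sigma^2\textbf{I}\right)^{-1}-\left(\textbf{K}+\sigma^2\textbf{I}\right)^{-1}
=\left(\tilde{\textbf{K}}+\sigma^2\textbf{I}\right)^{-1}\left(\textbf{K}-\tilde{\textbf{K}}\right)\left(\textbf{K}+\sigma^2\textbf{I}\right)^{-1}.
\end{align*}
By submultiplicativity of the spectral norm this reduces the claim to bounding the three factors, so the core of the argument is controlling the two regularized-inverse norms and invoking Proposition \ref{prop:spectral-norm} for the middle term.

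The key observation is that both $\textbf{K}+\sigma^2\textbf{I}$ and $\tilde{\textbf{K}}+\sigma^2\textbf{I}$ have smallest eigenvalue at least $\sigma^2$, so each inverse has spectral norm at most $1/\sigma^2$. For $\textbf{K}$ this is immediate since $k_\theta$ is SPD, hence $\textbf{K}\succeq 0$ and $\lambda_{\min}(\textbf{K}+\sigma^2\textbf{I})\geq\sigma^2$. For the SKI matrix, I would use that $\tilde{\textbf{K}}=\textbf{W}\textbf{K}_{\textbf{U}}\textbf{W}^\top$ with $\textbf{K}_{\textbf{U}}$ the Gram matrix of the SPD kernel on the inducing points; since $\textbf{K}_{\textbf{U}}\succeq 0$, any quadratic form satisfies $\textbf{v}^\top\tilde{\textbf{K}}\textbf{v}=(\textbf{W}^\top\textbf{v})^\top\textbf{K}_{\textbf{U}}(\textbf{W}^\top\textbf{v})\geq 0$, so $\tilde{\textbf{K}}\succeq 0$ and again $\lambda_{\min}(\tilde{\textbf{K}}+\sigma^2\textbf{I})\geq\sigma^2$.

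Combining these gives
\begin{align*}
\left\|\left(\tilde{\textbf{K}}+\sigma^2\textbf{I}\right)^{-1}-\left(\textbf{K}+\sigma^2\textbf{I}\right)^{-1}\right\|_2
\leq\frac{1}{\sigma^2}\cdot\Vert\textbf{K}-\tilde{\textbf{K}}\Vert_2\cdot\frac{1}{\sigma^2}
\leq\frac{\gamma_{n,m,L}}{\sigma^4},
\end{align*}
where the final inequality is exactly Proposition \ref{prop:spectral-norm}. The only genuine subtlety, and the step I would be most careful about, is the positive semidefiniteness of $\tilde{\textbf{K}}$: the bound $1/\sigma^2$ on the SKI inverse relies on it, and it is worth stating explicitly that it follows from $\textbf{K}_{\textbf{U}}\succeq 0$ rather than from $\tilde{\textbf{K}}$ being a faithful kernel matrix. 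Everything else is routine norm manipulation.
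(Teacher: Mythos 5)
Your proof is correct and follows essentially the same route as the paper's: the resolvent identity $\mathbf{A}^{-1}-\mathbf{B}^{-1}=\mathbf{A}^{-1}(\mathbf{B}-\mathbf{A})\mathbf{B}^{-1}$, submultiplicativity, Proposition \ref{prop:spectral-norm} for the middle factor, and $1/\sigma^2$ bounds on each regularized inverse. The one difference is that you explicitly justify $\tilde{\textbf{K}}\succeq 0$ via $\tilde{\textbf{K}}=\textbf{W}\textbf{K}_{\textbf{U}}\textbf{W}^\top$, a step the paper leaves implicit; that is a worthwhile addition rather than a deviation.
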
 

\begin{proof}
    Note that
\[
\left(\mathbf{\tilde{K}} + \sigma^2 \mathbf{I}\right)^{-1} - \left(\mathbf{K} + \sigma^2 \mathbf{I}\right)^{-1} = \left(\mathbf{\tilde{K}} + \sigma^2 \mathbf{I}\right)^{-1} (\mathbf{K} - \mathbf{\tilde{K}}) \left(\mathbf{K} + \sigma^2 \mathbf{I}\right)^{-1}
\]

Taking the spectral norm, we have
\[
\begin{aligned}
\left\|\left(\mathbf{\tilde{K}} + \sigma^2 \mathbf{I}\right)^{-1} - \left(\mathbf{K} + \sigma^2 \mathbf{I}\right)^{-1}\right\|_2 &\leq \left\|\left(\mathbf{\tilde{K}} + \sigma^2 \mathbf{I}\right)^{-1}\right\|_2 \|\mathbf{K} - \mathbf{\tilde{K}}\|_2 \left\|\left(\mathbf{K} + \sigma^2 \mathbf{I}\right)^{-1}\right\|_2 \\
&\leq \gamma_{n, m, L}\left\|\left(\mathbf{\tilde{K}} + \sigma^2 \mathbf{I}\right)^{-1}\right\|_2 \left\|\left(\mathbf{K} + \sigma^2 \mathbf{I}\right)^{-1}\right\|_2\textrm{ by Proposition \ref{prop:spectral-norm}}\\
&\leq   \frac{\gamma_{n, m, L}}{\sigma^4}
\end{aligned}
\]

\end{proof}

\subsection{Proofs Related to Linear Time Analysis}
\subsubsection{Proof of Theorem \ref{thm:inducing-points-count-alt}}\label{sec:proof-inducing-points-count-alt}
\inducingpointscountalt*
\begin{proof}
    We want to choose $m$ such that the spectral norm error $\Vert \textbf{K} - \tilde{\textbf{K}} \Vert_2 \leq \epsilon$. From Proposition \ref{prop:spectral-norm}, we have:
    $$
    \Vert \textbf{K} - \tilde{\textbf{K}} \Vert_2 \leq n(1 + \sqrt{L}c^d) \delta_{m,L}
    $$
    For cubic interpolation ($L=4$), Lemma \ref{lemma:tensor-product-interpolation-error}, combined with the analysis in Lemma \ref{lemma:tensor-product-interpolation-error}, gives us:
    $$
    \delta_{m,L} \leq K' c^{2d} h^3
    $$
    where $K'$ is a constant that depends only on the kernel function (through its derivatives) and the interpolation scheme, but not on $n$, $m$, $h$, or $d$.

    Therefore, a sufficient condition to ensure $\Vert \textbf{K} - \tilde{\textbf{K}} \Vert_2 \leq \epsilon$ is:
    \begin{equation} \label{eq:sufficient_condition_final}
    n(1 + 2c^d) K' c^{2d} h^3 \leq \epsilon
    \end{equation}

    Since the inducing points are placed on a regular grid with spacing $h$ in each dimension, and the domain is $[-D,D]^d$ and assuming that $2D\mod h\equiv 0$, the number of inducing points $m$ satisfies:

    $$
    m = \left(\frac{2D}{h}\right)^d
    $$

    We can rearrange this to get:

    $$
    h = \frac{2D}{m^{1/d}}
    $$
    Substituting this into the sufficient condition \eqref{eq:sufficient_condition_final}, we get:

    $$
    n (1 + 2c^d) K' c^{2d} \left(\frac{2D}{m^{1/d}}\right)^3 \leq \epsilon
    $$

    Rearranging to isolate $m$, we obtain:

    $$
    m^{3/d} \geq \frac{n}{\epsilon} (1 + 2c^d) K' c^{2d} (8D^3)
    $$

    $$
    m \geq \left( \frac{n}{\epsilon} (1 + 2c^d) K' (8 c^{2d} D^3) \right)^{d/3}
    $$
\end{proof}
\subsubsection{Proof of Corollary \ref{cor:linear-time}}\label{proof:cor-linear-time}
\corlineartime*
\begin{proof}
% The idea is that we want to have an error sufficiently large so that when we choose $m$ based on Corollary \ref{cor:inducing-points-count-alt}, we have $m=O\left(\frac{n}{\log n}\right)$. 

Assume that
$$
\epsilon \geq \frac{(1 + 2c^d) K' 8 c^{2d} D^3}{C^{3/d}} \cdot \frac{n (\log n)^{3/d}}{n^{3/d}}.
$$
Rearranging this we obtain
\begin{align*}
    \left( \frac{n}{\epsilon} (1 + 2c^d) K' (8 c^{2d} D^3) \right)^{d/3}&\leq C\frac{n}{\log n}.\\
    &=O\left(\frac{n}{\log n}\right).
\end{align*}
Now taking 
\begin{align*}
    m &= \left( \frac{n}{\epsilon} (1 + 2c^d) K' (8 c^{2d} D^3) \right)^{d/3}
\end{align*}
we have that $m=O\left(\frac{n}{\log n}\right)$ and by Theorem \ref{thm:inducing-points-count-alt}, $\Vert \textbf{K}-\tilde{\textbf{K}}\Vert_2\leq \epsilon$. Now plugging in $\frac{n}{\log n}$ into $m\log m$ we obtain
\begin{align*}
    O\left(m\log m\right)&=O\left(\frac{n}{\log n}\log \frac{n}{\log n}\right)\\
    &=O\left(\frac{n}{\log n}\log n-\frac{n}{\log n}\log \log n\right)\\
    &=O(n)
\end{align*}
as desired.
\end{proof}
% \subsubsection{Proof of Corollary \ref{cor:linear-time-low-dimensional}}
% \begin{proof}
%     We need that $m=O\left(\frac{n}{\log n}\right)$ and $m=\left( \frac{n}{\epsilon} (1 + 2c^d) K' (8 c^{2d} D^3) \right)^{d/3}$ can jointly hold. That is, that
%     \begin{align*}
%         \left( \frac{n}{\epsilon} (1 + 2c^d) K' (8 c^{2d} D^3) \right)^{d/3}&=O\left(\frac{n}{\log n}\right)
%     \end{align*}
%     Now let $C\equiv  (1 + 2c^d) K' (8 c^{2d} D^3)$. Then a sufficient condition is that the following holds for $n$ sufficiently large:
%     \begin{align*}
%         \left( \frac{n}{\epsilon}C\right)^{d/3}&\leq \frac{n}{\log n}\\
%         C^{d/3} \frac{n^{d/3-1}}{\log n}&\leq \epsilon^{d/3}\\
%         \frac{n^{d/3-1}}{\log n}&\leq \epsilon^{d/3}C^{-d/3}.
%     \end{align*}
%     For $d\leq 3$, the lhs goes to $0$ as $n\rightarrow \infty$ and thus for sufficiently large $n$, the inequality will hold.
% \end{proof}

% \subsection{Additional Quantities}\label{subsec:additional-quantities}
% \subsubsection{Action of Regularized Inverse}\label{subsubsec:action-regularized-inverse}

\section{Proofs Related to Gaussian Process Applications}
\subsection{Proofs Related to Hyperparameter Estimation}\label{proofs-hyperparameter-estimation}

\subsubsection{Proof of Lemma \ref{lemma:ski_kernel_derivative_error_kernel}}\label{sec:proofski_kernel_derivative_error_kernelz}

\skikernelderivativeerrorkernel*
\begin{proof}
By assumption, $k'_{\theta_i}(x, x') = \frac{\partial k_{\theta}(x, x')}{\partial \theta_i}$ is a valid SPD kernel. The SKI approximation of $k'_{\theta_i}(x, x')$ using the same inducing points and interpolation scheme as $\tilde{k}_{\theta}(x, x')$ is given by $\tilde{k}'_{\theta}(x, x')$. For the kernel $k'_{\theta_i}(x, x')$, we have:

\begin{align*}
\left\vert k'_{\theta_i}(x, x') - \tilde{k}'_{\theta}(x, x') \right\vert \leq \delta_{m,L}',
\end{align*}

where $\delta_{m,L}'$ is the upper bound on the error of the SKI approximation of $k'_{\theta_i}(x, x')$ as defined in Lemma \ref{lemma:ski-kernel-elementwise-error}.

Now, we need to show that $\frac{\partial \tilde{k}_{\theta}(x,x')}{\partial \theta_i} = \tilde{k}'_{\theta}(x, x')$. Recall that the SKI approximation $\tilde{k}_{\theta}(x, x')$ is a linear combination of kernel evaluations at inducing points, with weights that depend on $x$ and $x'$:

\begin{align*}
\tilde{k}_{\theta}(x, x') = \sum_{j=1}^m \sum_{l=1}^m w_{jl}(x, x') k_{\theta}(u_j, u_l)
\end{align*}

where $w_{jl}(x, x')$ are the interpolation weights. Taking the partial derivative with respect to $\theta_i$, we get:
\begin{align*}
\frac{\partial \tilde{k}_{\theta}(x, x')}{\partial \theta_i} &= \sum_{j=1}^m \sum_{l=1}^m w_{jl}(x, x') \frac{\partial k_{\theta}(u_j, u_l)}{\partial \theta_i} \\
&= \sum_{j=1}^m \sum_{l=1}^m w_{jl}(x, x') k'_{\theta_i}(u_j, u_l).
\end{align*}

This is precisely the SKI approximation of the kernel $k'_{\theta_i}(x, x')$ using the same inducing points and weights:
\begin{align*}
\tilde{k}'_{\theta}(x, x') = \sum_{j=1}^m \sum_{l=1}^m w_{jl}(x, x') k'_{\theta_i}(u_j, u_l).
\end{align*}

Therefore, $\frac{\partial \tilde{k}_{\theta}(x,x')}{\partial \theta_i} = \tilde{k}'_{\theta}(x, x')$.

Substituting this into our inequality, we get:
\begin{align*}
\left\vert \frac{\partial k_{\theta}(x,x')}{\partial \theta_i}-\frac{\partial \tilde{k}_{\theta}(x,x')}{\partial \theta_i}\right\vert &= \left\vert k'_{\theta_i}(x, x') - \tilde{k}'_{\theta}(x, x') \right\vert \\
&\leq \delta_{m,L}'+\sqrt{L}c^d\delta_{m,L}'.
\end{align*}

\end{proof}

\subsubsection{Proof of Lemma \ref{lemma:partial_gradient_spectral_norm_bound}}\label{section:proof_partial_gradient_spectral_norm_bound}
\partialgradientspectralnormbound*
\begin{proof}
Let $K'_{\theta,l}$ be the kernel matrix corresponding to the kernel $k'_{\theta,l}(x,x') = \frac{\partial k_{\theta}(x,x')}{\partial \theta_l}$, and let $\tilde{K}'_{\theta,l}$ be the kernel matrix corresponding to its SKI approximation $\tilde{k}'_{\theta,l}(x,x')$.

From Lemma \ref{lemma:ski_kernel_derivative_error_kernel}, we have:

\begin{equation}
\frac{\partial \tilde{k}_{\theta}(x,x')}{\partial \theta_l} = \tilde{k}'_{\theta,l}(x, x')
\end{equation}

Therefore:
\begin{equation}
\frac{\partial K}{\partial \theta_l} - \frac{\partial \tilde{K}}{\partial \theta_l} = K'_{\theta,l} - \tilde{K}'_{\theta,l}
\end{equation}

By Proposition \ref{prop:spectral-norm}, we have a bound on the spectral norm difference between a kernel matrix and its SKI approximation. Let $\gamma'_{n,m,L,l}$ be the corresponding bound for the kernel $k'_{\theta,l}$ and its SKI approximation $\tilde{k}'_{\theta,l}$. Then:

\begin{equation}
\| K'_{\theta,l} - \tilde{K}'_{\theta,l} \|_2 \leq \gamma'_{n,m,L,l}
\end{equation}

Thus,

\begin{align*}
\left\| \frac{\partial K}{\partial \theta_l} - \frac{\partial \tilde{K}}{\partial \theta_l} \right\|_2 = \| K'_{\theta,l} - \tilde{K}'_{\theta,l} \|_2 \leq \gamma'_{n,m,L,l}
\end{align*}

This completes the proof.
\end{proof}

\subsubsection{Proof of Lemma \ref{lemma:score-function-bound}}\label{sec:proof-score-function-bound}
\scorefunctionbound*
\begin{proof}
    We start with the expressions for the gradients:

$$
\nabla \mathcal{L}(\theta) = \nabla \left( -\frac{1}{2} \mathbf{y}^\top (\mathbf{K} + \sigma^2 \mathbf{I})^{-1} \mathbf{y} - \frac{1}{2} \log |\mathbf{K} + \sigma^2 \mathbf{I}| - \frac{n}{2} \log(2\pi) \right).
$$

$$
\nabla \tilde{\mathcal{L}}(\theta) = \nabla \left( -\frac{1}{2} \mathbf{y}^\top (\tilde{\mathbf{K}} + \sigma^2 \mathbf{I})^{-1} \mathbf{y} - \frac{1}{2} \log |\tilde{\mathbf{K}} + \sigma^2 \mathbf{I}| - \frac{n}{2} \log(2\pi) \right).
$$

Thus, the difference is:

\begin{align*}
\| \nabla \mathcal{L}(\theta) - \nabla \tilde{\mathcal{L}}(\theta) \|_2 &= \left\| \nabla \left( -\frac{1}{2} \mathbf{y}^\top (\mathbf{K} + \sigma^2 \mathbf{I})^{-1} \mathbf{y} - \frac{1}{2} \log |\mathbf{K} + \sigma^2 \mathbf{I}| \right) \right. \\
&\quad \left. - \nabla \left( -\frac{1}{2} \mathbf{y}^\top (\tilde{\mathbf{K}} + \sigma^2 \mathbf{I})^{-1} \mathbf{y} - \frac{1}{2} \log |\tilde{\mathbf{K}} + \sigma^2 \mathbf{I}| \right) \right\|_2 \\
&\leq \underbrace{\left\| \nabla \left( \frac{1}{2} \mathbf{y}^\top \left( (\tilde{\mathbf{K}} + \sigma^2 \mathbf{I})^{-1} - (\mathbf{K} + \sigma^2 \mathbf{I})^{-1} \right) \mathbf{y} \right) \right\|_2}_{T_1} \\
&\quad + \underbrace{\left\| \frac{1}{2} \nabla \left( \log |\mathbf{K} + \sigma^2 \mathbf{I}| - \log |\tilde{\mathbf{K}} + \sigma^2 \mathbf{I}| \right) \right\|_2}_{T_2}.
\end{align*}

We will bound $T_1$ and $T_2$ separately.

\textbf{Bounding $T_1$:}

\begin{align*}
    T_1 &= \frac{1}{2} \left\| \nabla_\theta \left( \mathbf{y}^\top \left( (\tilde{\mathbf{K}} + \sigma^2 \mathbf{I})^{-1} - (\mathbf{K} + \sigma^2 \mathbf{I})^{-1} \right) \mathbf{y} \right) \right\|_2\\
    &=\frac{1}{2}\sqrt{\sum_{l=1}^p \left(\frac{\partial}{\partial \theta_l}\mathbf{y}^\top \left( (\tilde{\mathbf{K}} + \sigma^2 \mathbf{I})^{-1} - (\mathbf{K} + \sigma^2 \mathbf{I})^{-1} \right) \mathbf{y}\right)^2}\\
    &\leq \frac{1}{2}\sqrt{p}\max_{1\leq l\leq p}\sqrt{\left(\frac{\partial}{\partial \theta_l}\mathbf{y}^\top \left( (\tilde{\mathbf{K}} + \sigma^2 \mathbf{I})^{-1} - (\mathbf{K} + \sigma^2 \mathbf{I})^{-1} \right) \mathbf{y}\right)^2}\\
    &=\frac{1}{2}\sqrt{p}\max_{1\leq l\leq p} \left\vert \frac{\partial}{\partial \theta_l}\mathbf{y}^\top \left( (\tilde{\mathbf{K}} + \sigma^2 \mathbf{I})^{-1} - (\mathbf{K} + \sigma^2 \mathbf{I})^{-1} \right) \mathbf{y}\right\vert
\end{align*}
We will then bound $\left\vert \frac{\partial}{\partial \theta_l}\mathbf{y}^\top \left( (\tilde{\mathbf{K}} + \sigma^2 \mathbf{I})^{-1} - (\mathbf{K} + \sigma^2 \mathbf{I})^{-1} \right) \mathbf{y}\right\vert$. Using the following equality $\frac{\partial}{\partial \theta_l} \mathbf{X}^{-1} = -\mathbf{X}^{-1} (\frac{\partial \mathbf{X}}{\partial \theta_l}) \mathbf{X}^{-1}$, we can express this derivative as a quadratic form as a difference between two quadratic forms and apply standard techniques for bounding differences between quadratic forms.
\begin{align*}
    &\left\vert \frac{\partial}{\partial \theta_l}\mathbf{y}^\top \left( (\tilde{\mathbf{K}} + \sigma^2 \mathbf{I})^{-1} - (\mathbf{K} + \sigma^2 \mathbf{I})^{-1} \right) \mathbf{y}\right\vert\\
    &\leq\Vert \textbf{y}\Vert_2^2 \left\Vert \frac{\partial}{\partial \theta_l }\left( (\tilde{\mathbf{K}} + \sigma^2 \mathbf{I})^{-1} - (\mathbf{K} + \sigma^2 \mathbf{I})^{-1} \right) \right\Vert_2\text{ CS inequality}\\
    &=\Vert \textbf{y}\Vert_2^2  \left\Vert \frac{\partial}{\partial \theta_l }\left( (\tilde{\mathbf{K}} + \sigma^2 \mathbf{I})^{-1} - (\mathbf{K} + \sigma^2 \mathbf{I})^{-1} \right) \right\Vert_2\\
    &= \Vert \textbf{y}\Vert_2^2 \left\|  - (\tilde{\mathbf{K}} + \sigma^2 \mathbf{I})^{-1} \left(\frac{\partial}{\partial \theta_l} \tilde{\mathbf{K}}\right) (\tilde{\mathbf{K}} + \sigma^2 \mathbf{I})^{-1} + (\mathbf{K} + \sigma^2 \mathbf{I})^{-1} \left(\frac{\partial}{\partial \theta_l} \mathbf{K}\right) (\mathbf{K} + \sigma^2 \mathbf{I})^{-1}  \right\|_2 \\
    &= \Vert \textbf{y}\Vert_2^2 \left\|  - (\tilde{\mathbf{K}} + \sigma^2 \mathbf{I})^{-1} \left(\frac{\partial}{\partial \theta_l} \tilde{\mathbf{K}} - \frac{\partial}{\partial \theta_l} \mathbf{K} + \frac{\partial}{\partial \theta_l} \mathbf{K}\right) (\tilde{\mathbf{K}} + \sigma^2 \mathbf{I})^{-1} \right.\\
    &\quad \left. + (\mathbf{K} + \sigma^2 \mathbf{I})^{-1} \frac{\partial}{\partial \theta_l} \mathbf{K} (\mathbf{K} + \sigma^2 \mathbf{I})^{-1}  \right\|_2 \\
    &= \Vert \textbf{y}\Vert_2^2 \left\|  - (\tilde{\mathbf{K}} + \sigma^2 \mathbf{I})^{-1} \left(\frac{\partial}{\partial \theta_l} \tilde{\mathbf{K}} - \frac{\partial}{\partial \theta_l} \mathbf{K}\right) (\tilde{\mathbf{K}} + \sigma^2 \mathbf{I})^{-1} \right.\\
    &\quad \left. - (\tilde{\mathbf{K}} + \sigma^2 \mathbf{I})^{-1} \left(\frac{\partial}{\partial \theta_l} \mathbf{K}\right) (\tilde{\mathbf{K}} + \sigma^2 \mathbf{I})^{-1}  + (\mathbf{K} + \sigma^2 \mathbf{I})^{-1} \left(\frac{\partial}{\partial \theta_l} \mathbf{K}\right) (\mathbf{K} + \sigma^2 \mathbf{I})^{-1}  \right\|_2 \\
    &\leq\Vert \textbf{y}\Vert_2^2  \left(\underbrace{\left\| (\tilde{\mathbf{K}} + \sigma^2 \mathbf{I})^{-1} \left(\frac{\partial}{\partial \theta_l} \tilde{\mathbf{K}} - \frac{\partial}{\partial \theta_l} \mathbf{K}\right) (\tilde{\mathbf{K}} + \sigma^2 \mathbf{I})^{-1} \right\|_{2}}_{(a)} \right.\\
    &\quad \left.+ \underbrace{\left\| \left((\tilde{\mathbf{K}} + \sigma^2 \mathbf{I})^{-1} - (\mathbf{K} + \sigma^2 \mathbf{I})^{-1}\right)\left(\frac{\partial}{\partial \theta_l} \mathbf{K}\right) (\tilde{\mathbf{K}} + \sigma^2 \mathbf{I})^{-1} \right\|_{2}}_{(b)} \right.\\
    &\quad \left.+ \underbrace{\left\|(\mathbf{K} + \sigma^2 \mathbf{I})^{-1}\left(\frac{\partial}{\partial \theta_l} \mathbf{K}\right)\left((\tilde{\mathbf{K}} + \sigma^2 \mathbf{I})^{-1} - (\mathbf{K} + \sigma^2 \mathbf{I})^{-1}\right) \right\|_{2}}_{(c)}\right).
\end{align*}

We now explicitly bound (a), (b), and (c).
\begin{align*}
(a) &\leq \left\| (\tilde{\mathbf{K}} + \sigma^2 \mathbf{I})^{-1} \right\|_2 \left\| \frac{\partial}{\partial \theta_l} \tilde{\mathbf{K}} - \frac{\partial}{\partial \theta_l} \mathbf{K} \right\|_2 \left\| (\tilde{\mathbf{K}} + \sigma^2 \mathbf{I})^{-1} \right\|_2\\
&\leq \left\| (\tilde{\mathbf{K}} + \sigma^2 \mathbf{I})^{-1} \right\|_2^2 \left\| \frac{\partial}{\partial \theta_l} \tilde{\mathbf{K}} - \frac{\partial}{\partial \theta_l} \mathbf{K} \right\|_2 \\
&\leq \frac{1}{\sigma^4} \left\| \frac{\partial}{\partial \theta_l} \mathbf{K} - \frac{\partial}{\partial \theta_l} \tilde{\mathbf{K}} \right\|_2\\
&\leq  \frac{1}{\sigma^4} \gamma'_{n,m,L,l} \quad \text{ (Using Lemma \ref{lemma:partial_gradient_spectral_norm_bound})}
\end{align*}

\begin{align*}
(b) &\leq \|(\tilde{\mathbf{K}} + \sigma^2 \mathbf{I})^{-1} - (\mathbf{K} + \sigma^2 \mathbf{I})^{-1}\|_2 \left\|\frac{\partial}{\partial \theta_l} \mathbf{K}\right\|_2 \|(\tilde{\mathbf{K}} + \sigma^2 \mathbf{I})^{-1}\|_2 \\
&\leq \frac{1}{\sigma^2} \|(\tilde{\mathbf{K}} + \sigma^2 \mathbf{I})^{-1} - (\mathbf{K} + \sigma^2 \mathbf{I})^{-1}\|_2 \left\|\frac{\partial}{\partial \theta_l} \mathbf{K}\right\|_2 \\
&\leq \frac{\gamma_{n,m,L}}{\sigma^4} \left\|\frac{\partial}{\partial \theta_l} \mathbf{K}\right\|_2 \quad \text{(Using Lemma \ref{lemma:action-inverse-error})}
\end{align*}
Since the kernel is $C^1$ wrt $\theta$ and $\mathcal{D}$ is compact, we can bound the entries of $\frac{\partial}{\partial \theta_l} \mathbf{K}$ uniformly over $\mathcal{D}$ and $l$ with some constant, say $C>0$. Then by Lemma \ref{lemma:test-train-bound}, reusing the training points instead of using the test points,
\begin{align*}
    (b)&\leq \frac{\gamma_{n,m,L}}{\sigma^4} \left\|\frac{\partial}{\partial \theta_l} \mathbf{K}\right\|_2\\
%    &\leq \frac{\gamma_{n,m,L}}{\sigma^4} \sqrt{\Vert \frac{\partial}{\partial \theta_l} \mathbf{K}\Vert_1\Vert \frac{\partial}{\partial \theta_l} \mathbf{K}\Vert_\infty}\\
    &\leq Cn\frac{\gamma_{n,m,L}}{\sigma^4}
\end{align*}
and finally 
\begin{align*}
(c) &\leq \|(\mathbf{K} + \sigma^2 \mathbf{I})^{-1}\|_2 \left\|\frac{\partial}{\partial \theta_l} \mathbf{K}\right\|_2 \|(\tilde{\mathbf{K}} + \sigma^2 \mathbf{I})^{-1} - (\mathbf{K} + \sigma^2 \mathbf{I})^{-1}\|_2 \\
&\leq \frac{1}{\sigma^2} \|(\tilde{\mathbf{K}} + \sigma^2 \mathbf{I})^{-1} - (\mathbf{K} + \sigma^2 \mathbf{I})^{-1}\|_2 \left\|\frac{\partial}{\partial \theta_l} \mathbf{K}\right\|_2 \\
&\leq \frac{\gamma_{n,m,L}}{\sigma^4} \left\|\frac{\partial}{\partial \theta_l} \mathbf{K}\right\|_2 \quad \text{(Using Lemma \ref{lemma:action-inverse-error})} \\
&\leq \frac{\gamma_{n,m,L}}{\sigma^4} \gamma'_{n,m,L,l} \quad \text{(Using Lemma \ref{lemma:partial_gradient_spectral_norm_bound} )}
\end{align*}

Combining these, we obtain
\begin{align*}
    T_1 &\leq \frac{1}{2\sigma^4}\Vert \textbf{y}\Vert\sqrt{p}\max_{1\leq l\leq p} \left( \gamma'_{n,m,L,l}+Cn\gamma_{n,m,L}+\gamma_{n,m,L}\gamma'_{n,m,L,l} \right)
\end{align*}

% \begin{align*}
%     T_1&\leq \frac{\Vert \textbf{y}\Vert_2^2 n}{2\sigma^4}\sqrt{p}\left(\tilde{C}+C\right)
% \end{align*}

**Bounding $T_2$:**

Using the identity $\nabla \log |\mathbf{X}| = (\mathbf{X}^{-1})^\top$, we have

\begin{align*}
T_2 &= \frac{1}{2} \left\| \nabla_\theta \left( \log |\mathbf{K} + \sigma^2 \mathbf{I}| - \log |\tilde{\mathbf{K}} + \sigma^2 \mathbf{I}| \right) \right\|_2 \\
&= \frac{1}{2} \left\| (\mathbf{K} + \sigma^2 \mathbf{I})^{-1} - (\tilde{\mathbf{K}} + \sigma^2 \mathbf{I})^{-1} \right\|_2.
\end{align*}
We can rewrite the difference as:
$$
(\mathbf{K} + \sigma^2 \mathbf{I})^{-1} - (\tilde{\mathbf{K}} + \sigma^2 \mathbf{I})^{-1} = (\tilde{\mathbf{K}} + \sigma^2 \mathbf{I})^{-1} (\tilde{\mathbf{K}} - \mathbf{K}) (\mathbf{K} + \sigma^2 \mathbf{I})^{-1}
$$
Then
\begin{align*}
T_2 &\leq \frac{1}{2} \| (\tilde{\mathbf{K}} + \sigma^2 \mathbf{I})^{-1} \|_2 \| \tilde{\mathbf{K}} - \mathbf{K} \|_2 \| (\mathbf{K} + \sigma^2 \mathbf{I})^{-1} \|_2 \\
    &\leq \frac{\gamma_{n,m,L}}{2\sigma^4}
\end{align*}

**Combining the Bounds:**

Combining the bounds for $T_1$ and $T_2$, we have

\begin{align*}
\| \nabla \mathcal{L}(\boldsymbol{\theta}) - \nabla \tilde{\mathcal{L}}(\boldsymbol{\theta}) \|_2 &\leq \frac{1}{2\sigma^4}\Vert \textbf{y}\Vert\sqrt{p}\max_{1\leq l\leq p} \left( \gamma'_{n,m,L,l}+Cn\gamma_{n,m,L}+\gamma_{n,m,L}\gamma'_{n,m,L,l} \right)+\frac{\gamma_{n,m,L}}{2\sigma^4}
\end{align*}
\end{proof}

\subsection{Proofs Related to Posterior Inference}\label{proofs-posterior-inference}
\subsubsection{Proof of Lemma \ref{lemma:mean-inference}}\label{sec:proof-mean-inference}
\meaninference*
\begin{proof}
        We start by expressing the difference between the true and SKI posterior means:
% \begin{align*}
%     & \left\|\mathbf{K}_{\cdot,\mathbf{X}} \left( \mathbf{K} + \sigma^{2} \mathbf{I} \right)^{-1} \mathbf{y} - \tilde{\mathbf{K}}_{\cdot, \mathbf{X}} \left( \tilde{\mathbf{K}} + \sigma^{2} \mathbf{I} \right)^{-1} \mathbf{y} \right\|_{2} \\
%     &\leq \left\|\mathbf{K}_{\cdot,\mathbf{X}} \left( \mathbf{K} + \sigma^{2} \mathbf{I} \right)^{-1} \mathbf{y}\right\|_2 +\left\| \tilde{\mathbf{K}}_{\cdot, \mathbf{X}} \left( \tilde{\mathbf{K}} + \sigma^{2} \mathbf{I} \right)^{-1} \mathbf{y} \right\|_{2}\\
%     &\leq \frac{\Vert \textbf{y}\Vert_2}{\sigma^2} \left(\left\Vert\mathbf{K}_{\cdot,\mathbf{X}}\right\Vert_2+\left\Vert \tilde{\mathbf{K}}_{\cdot, \mathbf{X}}\right\Vert_2\right)\\
%     &\leq \frac{\Vert \textbf{y}\Vert_2}{\sigma^2} (\sqrt{Tn}M+\sqrt{Tn}mc^{2d}M)
% \end{align*}
% where the last line follows from Lemmas \ref{lemma:test-train-bound} and \ref{lemma:ski-test-train-bound}.

\begin{align*}
& \left\|\mathbf{K}_{\cdot,\mathbf{X}} \left( \mathbf{K} + \sigma^{2} \mathbf{I} \right)^{-1} \mathbf{y} - \tilde{\mathbf{K}}_{\cdot, \mathbf{X}} \left( \tilde{\mathbf{K}} + \sigma^{2} \mathbf{I} \right)^{-1} \mathbf{y} \right\|_{2} \\
&= \left\| \left( \tilde{\mathbf{K}}_{\cdot, \mathbf{X}} -\mathbf{K}_{\cdot,\mathbf{X}}\right) \left( \tilde{\mathbf{K}} + \sigma^{2} \mathbf{I} \right)^{-1} \mathbf{y} +\mathbf{K}_{\cdot,\mathbf{X}}\left[ \left( \tilde{\mathbf{K}} + \sigma^{2} \mathbf{I} \right)^{-1} - \left(\mathbf{K} + \sigma^{2} \mathbf{I} \right)^{-1} \right] \mathbf{y} \right\|_{2}\\
%&= \left\| \left(\mathbf{K}_{\cdot,\mathbf{X}} - \tilde{\mathbf{K}}_{\cdot, \mathbf{X}} \right) \left( \tilde{\mathbf{K}} + \sigma^{2} \mathbf{I} \right)^{-1} \mathbf{y} + \tilde{\mathbf{K}}_{\cdot, \mathbf{X}} \left[ \left( \mathbf{K} + \sigma^{2} \mathbf{I} \right)^{-1} - \left( \tilde{\mathbf{K}} + \sigma^{2} \mathbf{I} \right)^{-1} \right] \mathbf{y} \right\|_{2}
\end{align*}

Applying the triangle inequality and submultiplicative property gives:
\begin{align*}
    &\leq \frac{1}{\sigma^2}\Vert \textbf{y}\Vert_2\Vert \tilde{\mathbf{K}}_{\cdot, \mathbf{X}} -\mathbf{K}_{\cdot,\mathbf{X}}\Vert_2+\Vert\mathbf{K}_{\cdot,\mathbf{X}}\Vert_2\left\Vert \left( \tilde{\mathbf{K}} + \sigma^{2} \mathbf{I} \right)^{-1} - \left(\mathbf{K} + \sigma^{2} \mathbf{I} \right)^{-1}\right\Vert_2\Vert \textbf{y}\Vert_2\\
    &\leq \frac{\max\left(\gamma_{T,m,L},\gamma_{n,m,L}\right)}{\sigma^2}\Vert \textbf{y}\Vert_2+\Vert\mathbf{K}_{\cdot,\mathbf{X}}\Vert_2\left\Vert \left( \tilde{\mathbf{K}} + \sigma^{2} \mathbf{I} \right)^{-1} - \left(\mathbf{K} + \sigma^{2} \mathbf{I} \right)^{-1}\right\Vert_2\Vert \textbf{y}\Vert_2\text{ Lemma \ref{lemma:test-train-kernel-matrix-error}}\\
    &\leq \frac{\max\left(\gamma_{T,m,L},\gamma_{n,m,L}\right)}{\sigma^2}\Vert \textbf{y}\Vert_2+\sqrt{Tn}M\left\Vert \left( \tilde{\mathbf{K}} + \sigma^{2} \mathbf{I} \right)^{-1} - \left(\mathbf{K} + \sigma^{2} \mathbf{I} \right)^{-1}\right\Vert_2\Vert \textbf{y}\Vert_2\text{ Lemma \ref{lemma:test-train-bound}}\\
    &\leq \frac{\max\left(\gamma_{T,m,L},\gamma_{n,m,L}\right)}{\sigma^2}\Vert \textbf{y}\Vert_2+\frac{\sqrt{Tn}M}{\sigma^4}\gamma_{n,m,L}\Vert \textbf{y}\Vert_2 \text{ Lemma \ref{lemma:action-inverse-error}}\\
    &=\frac{1}{\sigma^2}\Vert \textbf{y}\Vert_2\left(\max\left(\gamma_{T,m,L},\gamma_{n,m,L}\right)+\frac{\sqrt{Tn}M}{\sigma^4}\gamma_{n,m,L}\right)\\
    &=\frac{1}{\sigma^2}\Vert \textbf{y}\Vert_2O\left(c^{2d}\frac{\max(T,n)+\sqrt{Tn}Mn}{m^{3/d}}\right)
\end{align*}

\end{proof}

\subsubsection{Proof of Lemma \ref{lemma:ski-posterior-covariance-error}}\label{sec:proof-ski-posterior-covariance-error}

\begin{proof}
    
% Note that
% \begin{align*}
%     \Vert \boldsymbol{\Sigma}(\cdot)-\tilde{\boldsymbol{\Sigma}}(\cdot)\Vert_2 &\leq \Vert \textbf{K}_{\cdot,\cdot}-\tilde{\textbf{K}}_{\cdot,\cdot}\Vert_2\\
%     &+ \Vert \textbf{K}_{\cdot,\textbf{X}} (\textbf{K}+\sigma^2 I)^{-1}\textbf{K}_{\textbf{X},\cdot}-\tilde{\textbf{K}}_{\cdot,\textbf{X}} (\tilde{\textbf{K}}+\sigma^2 I)^{-1}\tilde{\textbf{K}}_{\textbf{X},\cdot}\Vert_2 \\
%     &\leq \gamma_{T,m,L} + \Vert \textbf{K}_{\cdot,\textbf{X}} (\textbf{K}+\sigma^2 I)^{-1}\textbf{K}_{\textbf{X},\cdot}-\tilde{\textbf{K}}_{\cdot,\textbf{X}} (\tilde{\textbf{K}}+\sigma^2 I)^{-1}\tilde{\textbf{K}}_{\textbf{X},\cdot}\Vert_2\\
%     &\leq \gamma_{T,m,L}+\frac{1}{\sigma^2}\left(\Vert \textbf{K}_{\cdot,X}\Vert^2_2+\Vert \tilde{\textbf{K}}_{\cdot,\textbf{X}} \Vert_2^2\right)\\
%     &\leq \gamma_{T,m,L}+\frac{1}{\sigma^2}\left(TnM^2+Tnm^2c^{4d}M^2\right)
% \end{align*}
% where the last line again follows from Lemmas \ref{lemma:test-train-bound} and \ref{lemma:ski-test-train-bound}.

First, note that
\begin{align*}
    \Vert \boldsymbol{\Sigma}(\cdot)-\tilde{\boldsymbol{\Sigma}}(\cdot)\Vert_2 &\leq \Vert \textbf{K}_{\cdot,\cdot}-\tilde{\textbf{K}}_{\cdot,\cdot}\Vert_2\\
    &+ \Vert \textbf{K}_{\cdot,\textbf{X}} (\textbf{K}+\sigma^2 I)^{-1}\textbf{K}_{\textbf{X},\cdot}-\tilde{\textbf{K}}_{\cdot,\textbf{X}} (\tilde{\textbf{K}}+\sigma^2 I)^{-1}\tilde{\textbf{K}}_{\textbf{X},\cdot}\Vert_2 \\
    &\leq \gamma_{T,m,L} + \Vert \textbf{K}_{\cdot,\textbf{X}} (\textbf{K}+\sigma^2 I)^{-1}\textbf{K}_{\textbf{X},\cdot}-\tilde{\textbf{K}}_{\cdot,\textbf{X}} (\tilde{\textbf{K}}+\sigma^2 I)^{-1}\tilde{\textbf{K}}_{\textbf{X},\cdot}\Vert_2,
\end{align*}
where we used Proposition \ref{prop:spectral-norm} and the fact that $\Vert \textbf{K}_{\cdot,\cdot}-\tilde{\textbf{K}}_{\cdot,\cdot}\Vert_2 \leq \gamma_{T,m,L}$.

% Noting that the second term is a difference between two quadratic forms and the inequality $\Vert \textbf{B}^T \textbf{A}_1 \textbf{B} - \textbf{C}^T \textbf{A}_2 \textbf{C}\Vert_2 \leq \Vert \textbf{B}\Vert_2^2 \Vert \textbf{A}_1 - \textbf{A}_2\Vert_2 + \Vert \textbf{A}_2\Vert_2 (\Vert \textbf{B}\Vert_2 + \Vert \textbf{C}\Vert_2) \Vert \textbf{B} - \textbf{C}\Vert_2$, we can set $\textbf{A}_1$ and $\textbf{A}_2$ to be the regularized inverse of the true and SKI gram matrices, respectively and $\textbf{B}$ and $\textbf{C}$ to be the true and SKI cross-kernel matrices. Then
% \begin{align*}
%     &\Vert \textbf{K}_{\cdot,\textbf{X}} (\textbf{K}+\sigma^2 I)^{-1}\textbf{K}_{\textbf{X},\cdot}-\tilde{\textbf{K}}_{\cdot,\textbf{X}} (\tilde{\textbf{K}}+\sigma^2 I)^{-1}\tilde{\textbf{K}}_{\textbf{X},\cdot}\Vert_2,\\
%     &\quad\leq \Vert  \textbf{K}_{\cdot,\textbf{X}} \Vert_2^2\Vert (\textbf{K}+\sigma^2 I)^{-1}-(\tilde{\textbf{K}}+\sigma^2 I)^{-1}\Vert_2\\
%     &\qquad+\Vert \tilde{\textbf{K}}\Vert_2\left(\Vert  \textbf{K}_{\textbf{X},\cdot}\Vert_2+\Vert \tilde{\textbf{K}}_{\textbf{X},\cdot}\Vert_2\right)\Vert \textbf{K}_{\textbf{X},\cdot}-\tilde{\textbf{K}}_{\textbf{X},\cdot}\Vert_2
% \end{align*}
% Now

Now, we bound the second term, which is a different between two quadratic forms:
\begin{align*}
    &\Vert \textbf{K}_{\cdot,\textbf{X}} (\textbf{K}+\sigma^2 I)^{-1}\textbf{K}_{\textbf{X},\cdot}-\tilde{\textbf{K}}_{\cdot,\textbf{X}} (\tilde{\textbf{K}}+\sigma^2 I)^{-1}\tilde{\textbf{K}}_{\textbf{X},\cdot}\Vert_2\\
    &\leq \Vert \textbf{K}_{\cdot,\textbf{X}} (\textbf{K}+\sigma^2 I)^{-1}\textbf{K}_{\textbf{X},\cdot}-\textbf{K}_{\cdot,\textbf{X}} (\textbf{K}+\sigma^2 I)^{-1}\tilde{\textbf{K}}_{\textbf{X},\cdot}\Vert_2 \\
    &+ \Vert\textbf{K}_{\cdot,\textbf{X}} (\textbf{K}+\sigma^2 I)^{-1}\tilde{\textbf{K}}_{\textbf{X},\cdot}-\tilde{\textbf{K}}_{\cdot,\textbf{X}} (\tilde{\textbf{K}}+\sigma^2 I)^{-1}\tilde{\textbf{K}}_{\textbf{X},\cdot}\Vert_2\\
    &\leq \Vert \textbf{K}_{\cdot,\textbf{X}} (\textbf{K}+\sigma^2 I)^{-1}(\textbf{K}_{\textbf{X},\cdot}-\tilde{\textbf{K}}_{\textbf{X},\cdot})\Vert_2 + \Vert (\textbf{K}_{\cdot,\textbf{X}} (\textbf{K}+\sigma^2 I)^{-1}-\tilde{\textbf{K}}_{\cdot,\textbf{X}} (\tilde{\textbf{K}}+\sigma^2 I)^{-1})\tilde{\textbf{K}}_{\textbf{X},\cdot}\Vert_2\\
    &\leq \Vert \textbf{K}_{\cdot,\textbf{X}} \Vert_2 \Vert (\textbf{K}+\sigma^2 I)^{-1} \Vert_2 \Vert \textbf{K}_{\textbf{X},\cdot}-\tilde{\textbf{K}}_{\textbf{X},\cdot} \Vert_2 + \Vert \textbf{K}_{\cdot,\textbf{X}} (\textbf{K}+\sigma^2 I)^{-1}-\tilde{\textbf{K}}_{\cdot,\textbf{X}} (\tilde{\textbf{K}}+\sigma^2 I)^{-1} \Vert_2 \Vert \tilde{\textbf{K}}_{\textbf{X},\cdot} \Vert_2\\
    &\leq \frac{1}{\sigma^2} \Vert \textbf{K}_{\cdot,\textbf{X}}\Vert_2 \Vert \textbf{K}_{\textbf{X},\cdot}-\tilde{\textbf{K}}_{\textbf{X},\cdot}\Vert_2 + \Vert \textbf{K}_{\cdot,\textbf{X}} (\textbf{K}+\sigma^2 I)^{-1}-\tilde{\textbf{K}}_{\cdot,\textbf{X}} (\tilde{\textbf{K}}+\sigma^2 I)^{-1} \Vert_2 \Vert \tilde{\textbf{K}}_{\textbf{X},\cdot} \Vert_2,
\end{align*}
where we used the fact that $(\textbf{K}+\sigma^2 I)^{-1} \preceq \frac{1}{\sigma^2}I$.

Next, we bound the term $\Vert \textbf{K}_{\cdot,\textbf{X}} (\textbf{K}+\sigma^2 I)^{-1}-\tilde{\textbf{K}}_{\cdot,\textbf{X}} (\tilde{\textbf{K}}+\sigma^2 I)^{-1} \Vert_2$:
\begin{align*}
    &\Vert \textbf{K}_{\cdot,\textbf{X}} (\textbf{K}+\sigma^2 I)^{-1}-\tilde{\textbf{K}}_{\cdot,\textbf{X}} (\tilde{\textbf{K}}+\sigma^2 I)^{-1} \Vert_2 \\
    &= \Vert \textbf{K}_{\cdot,\textbf{X}} (\textbf{K}+\sigma^2 I)^{-1} - \textbf{K}_{\cdot,\textbf{X}} (\tilde{\textbf{K}}+\sigma^2 I)^{-1} + \textbf{K}_{\cdot,\textbf{X}} (\tilde{\textbf{K}}+\sigma^2 I)^{-1} - \tilde{\textbf{K}}_{\cdot,\textbf{X}} (\tilde{\textbf{K}}+\sigma^2 I)^{-1} \Vert_2 \\
    &\leq \Vert \textbf{K}_{\cdot,\textbf{X}} (\textbf{K}+\sigma^2 I)^{-1} - \textbf{K}_{\cdot,\textbf{X}} (\tilde{\textbf{K}}+\sigma^2 I)^{-1}\Vert_2 + \Vert \textbf{K}_{\cdot,\textbf{X}} (\tilde{\textbf{K}}+\sigma^2 I)^{-1} - \tilde{\textbf{K}}_{\cdot,\textbf{X}} (\tilde{\textbf{K}}+\sigma^2 I)^{-1} \Vert_2\\
    &= \Vert \textbf{K}_{\cdot,\textbf{X}} [(\textbf{K}+\sigma^2 I)^{-1} - (\tilde{\textbf{K}}+\sigma^2 I)^{-1}] \Vert_2 + \Vert (\textbf{K}_{\cdot,\textbf{X}} - \tilde{\textbf{K}}_{\cdot,\textbf{X}}) (\tilde{\textbf{K}}+\sigma^2 I)^{-1} \Vert_2\\
    &\leq \Vert \textbf{K}_{\cdot,\textbf{X}} \Vert_2 \Vert (\textbf{K}+\sigma^2 I)^{-1} - (\tilde{\textbf{K}}+\sigma^2 I)^{-1} \Vert_2 + \Vert \textbf{K}_{\cdot,\textbf{X}} - \tilde{\textbf{K}}_{\cdot,\textbf{X}} \Vert_2 \Vert (\tilde{\textbf{K}}+\sigma^2 I)^{-1} \Vert_2\\
    &\leq \Vert \textbf{K}_{\cdot,\textbf{X}} \Vert_2 \frac{\gamma_{n,m,L}}{\sigma^4} + \Vert \textbf{K}_{\cdot,\textbf{X}} - \tilde{\textbf{K}}_{\cdot,\textbf{X}} \Vert_2 \frac{1}{\sigma^2},
\end{align*}
where we used Lemma \ref{lemma:action-inverse-error} in the last inequality. Substituting this back into the main inequality, we get:
\begin{align*}
    &\Vert \textbf{K}_{\cdot,\textbf{X}} (\textbf{K}+\sigma^2 I)^{-1}\textbf{K}_{\textbf{X},\cdot}-\tilde{\textbf{K}}_{\cdot,\textbf{X}} (\tilde{\textbf{K}}+\sigma^2 I)^{-1}\tilde{\textbf{K}}_{\textbf{X},\cdot}\Vert_2 \\
    &\leq \frac{1}{\sigma^2} \Vert \textbf{K}_{\cdot,\textbf{X}}\Vert_2 \Vert \textbf{K}_{\textbf{X},\cdot}-\tilde{\textbf{K}}_{\textbf{X},\cdot}\Vert_2 + \left(\Vert \textbf{K}_{\cdot,\textbf{X}} \Vert_2 \frac{\gamma_{n,m,L}}{\sigma^4} + \Vert \textbf{K}_{\cdot,\textbf{X}} - \tilde{\textbf{K}}_{\cdot,\textbf{X}} \Vert_2 \frac{1}{\sigma^2}\right) \Vert \tilde{\textbf{K}}_{\textbf{X},\cdot} \Vert_2\\
    &= \frac{1}{\sigma^2} \Vert \textbf{K}_{\cdot,\textbf{X}}\Vert_2 \Vert \textbf{K}_{\textbf{X},\cdot}-\tilde{\textbf{K}}_{\textbf{X},\cdot}\Vert_2 + \frac{\gamma_{n,m,L}}{\sigma^4}\Vert \textbf{K}_{\cdot,\textbf{X}} \Vert_2  \Vert \tilde{\textbf{K}}_{\textbf{X},\cdot} \Vert_2 + \frac{1}{\sigma^2} \Vert \textbf{K}_{\cdot,\textbf{X}} - \tilde{\textbf{K}}_{\cdot,\textbf{X}} \Vert_2 \Vert \tilde{\textbf{K}}_{\textbf{X},\cdot} \Vert_2.
\end{align*}

Using Lemma \ref{lemma:test-train-kernel-matrix-error} and the fact that $\Vert \textbf{K}_{\textbf{X},\cdot}-\tilde{\textbf{K}}_{\textbf{X},\cdot}\Vert_2 \leq \max(\gamma_{T,m,L},\gamma_{n,m,L})$ and that $\textbf{K}_{\cdot, \textbf{X}} = \textbf{K}_{\textbf{X},\cdot}^\top$, we have $\Vert \textbf{K}_{\cdot,\textbf{X}} \Vert_2 = \Vert \textbf{K}_{\textbf{X},\cdot} \Vert_2$. Also, by assumption, $\Vert \textbf{K}_{\textbf{X},\cdot} \Vert_2 \leq \sqrt{Tn}M$. Using Lemma \ref{lemma:ski-test-train-bound}, we have $\Vert \tilde{\textbf{K}}_{\textbf{X},\cdot} \Vert_2 \leq \sqrt{Tn}mc^{2d}M$. Substituting these bounds, we get:

\begin{align*}
    &\Vert \textbf{K}_{\cdot,\textbf{X}} (\textbf{K}+\sigma^2 I)^{-1}\textbf{K}_{\textbf{X},\cdot}-\tilde{\textbf{K}}_{\cdot,\textbf{X}} (\tilde{\textbf{K}}+\sigma^2 I)^{-1}\tilde{\textbf{K}}_{\textbf{X},\cdot}\Vert_2 \\
    &\leq \frac{\sqrt{Tn}M}{\sigma^2} \max(\gamma_{T,m,L},\gamma_{n,m,L}) + \frac{\gamma_{n,m,L}}{\sigma^4}(\sqrt{Tn}M)(\sqrt{Tn} m c^{2d} M) + \frac{1}{\sigma^2} \max(\gamma_{T,m,L},\gamma_{n,m,L}) (\sqrt{Tn} m c^{2d} M) \\
    &= \frac{\sqrt{Tn}M}{\sigma^2} \max(\gamma_{T,m,L},\gamma_{n,m,L}) + \frac{\gamma_{n,m,L}}{\sigma^4}Tn m c^{2d} M^2 + \frac{\sqrt{Tn} m c^{2d} M}{\sigma^2} \max(\gamma_{T,m,L},\gamma_{n,m,L}).
\end{align*}

Finally, substituting this back into the original inequality, we obtain the desired bound:

\begin{align*}
    \Vert \boldsymbol{\Sigma}(\cdot)-\tilde{\boldsymbol{\Sigma}}(\cdot)\Vert_2 &\leq \gamma_{T,m,L} + \frac{\sqrt{Tn}M}{\sigma^2} \max(\gamma_{T,m,L},\gamma_{n,m,L})\\
    &+ \frac{\gamma_{n,m,L}}{\sigma^4}Tn m c^{2d} M^2 + \frac{\sqrt{Tn} m c^{2d} M}{\sigma^2} \max(\gamma_{T,m,L},\gamma_{n,m,L}).\\
    &=O\left(\frac{Tn^2mc^{4d}M^2+\sqrt{Tn}mc^{4d}M\max(T,n)}{m^{3/d}}\right).
\end{align*}
\end{proof}

\end{document}